\documentclass{article} % For LaTeX2e
\usepackage{nips14submit_e,times}
\usepackage{url}

\usepackage{amsfonts,amsmath,amssymb,amsthm}
\usepackage{graphics,graphicx}
\usepackage{subfigure}
\usepackage{epstopdf}

\newcommand{\scriptF}{\ensuremath{\mathcal{F}}}
\newcommand{\Q}{\ensuremath{\mathbb{Q}}}
\newcommand{\T}{\ensuremath{\mathbb{T}}}
\newcommand{\F}{\ensuremath{\mathbb{F}}}
\newcommand{\C}{\ensuremath{\mathbb{C}}}
\newcommand{\Z}{\ensuremath{\mathbb{Z}}}
\newcommand{\E}{\ensuremath{\mathbb{E}}}
\newcommand{\X}{\ensuremath{\mathcal{X}}}
\newcommand{\R}{\ensuremath{\mathbb{R}}}

\renewcommand{\mod}{m_o}  % expectation parameter for product of an odd number of rvs
\newcommand{\mev}{m_e}  % expectation parameter for product of an even number of rvs

\newcommand{\N}{\mathbb{N}}

\newcommand{\M}{\mathbb{M}}
\renewcommand{\L}{\mathbb{L}}

\newcommand{\A}{\mathcal{A}}
\newcommand{\Itil}{\widetilde{I}}

\newtheorem{lem*}{Lemma}
\newtheorem{thm*}{Theorem}
\newtheorem{alg*}{Algorithm}
\newtheorem{cor*}{Corollary}

\newtheorem{theorem}{Theorem}

\newtheorem{proposition}[theorem]{Proposition}

% Latex style file

%\newcommand{\latexstyleloc}{/home/eecs/wainwrig/tex}

% Figure locations

% Loopy version
%\newcommand{\myFigdir}{/home/eecs/wainwrig/Figures}
%\newcommand{\myfigdir}{/home/eecs/wainwrig/Figures}
%\newcommand{\myfigdirtwo}{/home/eecs/wainwrig/Figures}

%% Biblio command

%\newcommand{\dobiblio}{\bibliography{/home/eecs/wainwrig/Bibtex/mjwain_super}}

%%%%%%%%%%%%%%%%%%%%%%%%%%%%%%%%%%%%%%%%%%%%%%%%%%%%%%%%%%%%%%%%%%%%%%%
% WIDEBAR COMMAND
\newlength{\widebarargwidth}
\newlength{\widebarargheight}
\newlength{\widebarargdepth}
\DeclareRobustCommand{\widebar}[1]{%
  \settowidth{\widebarargwidth}{\ensuremath{#1}}%
  \settoheight{\widebarargheight}{\ensuremath{#1}}%
  \settodepth{\widebarargdepth}{\ensuremath{#1}}%
  \addtolength{\widebarargwidth}{-0.3\widebarargheight}%
  \addtolength{\widebarargwidth}{-0.3\widebarargdepth}%
  \makebox[0pt][l]{\hspace{0.3\widebarargheight}%
    \hspace{0.3\widebarargdepth}%
    \addtolength{\widebarargheight}{0.3ex}%
    \rule[\widebarargheight]{0.95\widebarargwidth}{0.1ex}}%
  {#1}}

% NEW MJW VERSION

\newenvironment{carlist}
 {\begin{list}{$\bullet$}
 {\setlength{\topsep}{0in} \setlength{\partopsep}{0in}
  \setlength{\parsep}{0in} \setlength{\itemsep}{\parskip}
  \setlength{\leftmargin}{0.07in} \setlength{\rightmargin}{0.08in}
  \setlength{\listparindent}{0in} \setlength{\labelwidth}{0.08in}
  \setlength{\labelsep}{0.1in} \setlength{\itemindent}{0in}}}
 {\end{list}}

\newcommand{\bcar}{\begin{carlist}}
\newcommand{\ecar}{\end{carlist}}

% June 8, 2003

% Vertex of polytope specified by configuration

% Base measure

% March 27, 2003

% Basic command for clique

% Measure

%%%%%%%%%%%%%%%%%%%%%%%%%%%%%%%%%%%%%%%%%%%%%%%%%%%%%%%%%%%%%%%%%%%

% March 10, 2003

% Domain of function

% Moment mapping

% New commands as of February 11, 2003

% LP norms

% Wedge and vee min and max

%%% New version of \caption puts things in smaller type, single-spaced 
%%% and indents them to set them off more from the text.
\makeatletter
\long\def\@makecaption#1#2{
        \vskip 0.8ex
        \setbox\@tempboxa\hbox{\small {\bf #1:} #2}
        \parindent 1.5em  %% How can we use the global value of this???
        \dimen0=\hsize
        \advance\dimen0 by -3em
        \ifdim \wd\@tempboxa >\dimen0
                \hbox to \hsize{
                        \parindent 0em
                        \hfil 
                        \parbox{\dimen0}{\def\baselinestretch{0.96}\small
                                {\bf #1.} #2
                                %%\unhbox\@tempboxa
                                } 
                        \hfil}
        \else \hbox to \hsize{\hfil \box\@tempboxa \hfil}
        \fi
        }
\makeatother

\long\def\comment#1{}

%% modify cite command to do \cite[e.g, ]{foo1,foo2} 
\makeatletter
\def\@cite#1#2{[\if@tempswa #2 \fi #1]}
\makeatother

\makeatletter
\long\def\barenote#1{
    \insert\footins{\footnotesize
    \interlinepenalty\interfootnotelinepenalty 
    \splittopskip\footnotesep
    \splitmaxdepth \dp\strutbox \floatingpenalty \@MM
    \hsize\columnwidth \@parboxrestore
    {\rule{\z@}{\footnotesep}\ignorespaces
              % indent
      #1\strut}}}
\makeatother
%%%%%%%%%%%%%%%%%%%%%%%%%%%%%%%%%%%%%%%%%%%%%%%%%%%%%%%%%%%%%%%%%%%%%%
%%% BOOK CHAPTER

% Relative interior of a set

% Script I for indicator function

% Capitol \Mu for domain of mean parameter

%%%%%%%%%%%%%%%%%%%%%%%%%%%%%%%%%%%

%%%%%%%%%%%%%%%%%%%%%%%%%%%%%%%%%%%%%%%%%%%%%%%%%%%%%%%%%%%%%%%%%%%%%%%

% New command: domain of the dual function

% Number of observations

\newcommand{\bit}{\begin{itemize}}
\newcommand{\eit}{\end{itemize}}
\newcommand{\ben}{\begin{enumerate}}
\newcommand{\een}{\end{enumerate}}

\newcommand{\bear}{\begin{eqnarray}}
\newcommand{\eear}{\end{eqnarray}}

%\newcommand{\myspec}[2]{\ensuremath{\frac{\treefun_{#1 #2}}{\treefun_{#1} \, \treefun_{#2}}}}
%\newcommand{\myspec}[2]{\ensuremath{\compat_{#1 #2}}}

%%%%%%%%%%%%%%%%%
% Macros defined for this paper

%%%%%%%%%%%%%%%%%%%%%%%%%%%%%%%%%%%%%%%%%%%%%

%%%%%%%%%%%%%%%%%%% COMMANDS FROM TRP PAPER %%%%%%%%%%%%%%%%%%%%%%%%%%%%%

%\newcommand{\setp}{\ensuremath{{{\mathcal{P}}}}}

%\newcommand{\inject}{\ensuremath{\mathcal{I}}}
%\newcommand{\estraint}{\ensuremath{\mathcal{D}}}

%\newcommand{\Dualpart}{\ensuremath{\Psi}}

% Delta function for node x_s = j

%%%%

\newcommand{\conv}{\ensuremath{\operatorname{conv}}}

%\newcommand{\dweivecfx}{\ensuremath{{\boldsymbol{\bar{\dwei}}}}}

%\newcommand{\dwvecfx}{\ensuremath{\boldsymbol{{\bar{\dwei}_e}}}}

%\newcommand{\dweifx}{\ensuremath{\bar{\dwei}}}

%\newcommand{\qdistvec}{\ensuremath{\mathbf{\qdist}}} 

%%%%%%%%%%%%%%%%%%%%%%%%%%%%%%%%%%%%%%%%%%%%%%%%%%%%%%%%%%%%%%%%%%%%%%%

% New stuff from August 9, 2001

% New stuff from December 28, 2000

%\newcommand{\bethe}{\ensuremath{G}}

% New stuff as of Dec 9, 2000; Version d_merge.tex

%%%%%%%%%%%%%%%%%%%%%%%%%%%%%%%%%%%%%%%%%%%%%%%%%%%%%%%%%%%%%%%%%%%%%%%%

%%%%%%%%%%

%\newcommand{\neigh}{\ensuremath{\mathcal{N}}}

%\newcommand{\meparam}{\ensuremath{{{\boldsymbol{\theta}}}}}

% Eigenvector / eigenvalue related notation

%\newcommand{\treefunb}{\ensuremath{{\mathbf{T}}}}

%\newcommand{\exafunb}{\ensuremath{{{\bf{\exafun}}}}}

%\newcommand{\cona}{\ensuremath{\mathcal{A}}}
%\newcommand{\conb}{\ensuremath{\mathcal{B}}}

% Little hacky command for the index of a weight element

% Little hacky command for the weight of a distribution

% Symbol for the invariant distribution of a Markov chain

%\newcommand{\sep}{{\mathcal{S}}}

% Matrices governing convergence of ET iterations

%\newcommand{\tree}{\ensuremath{\scr{T}}}

\newcommand{\inprod}[2]{\ensuremath{\langle #1 , \, #2 \rangle}}

% IEEE Trans. Info. Theory form of Kullback-Leibler

% graph commands

%\newcommand{\subgraph}{\ensuremath{\mathcal{H}}}
%\newcommand{\vertex}{\ensuremath{\mathcal{V}}}
%\newcommand{\edge}{\ensuremath{\mathcal{E}}}
%\newcommand{\edgesub}{\ensuremath{{F}}}

%\newcommand{\graph}{\ensuremath{\mathcal{G}}}

%\newcommand{\pathsym}{\ensuremath{\mathcal{P}}}

% Graph weighting matrix

%  Dyadic and Quad. Tree commands

%\newcommand{\cc}{\ensuremath{\bar{\alpha_3}}}

% Self similarity parameter

% Tail parameter

% Logs etc.

% Limits with stackrel

% Expected Value section
% use this command if no amsfonts
%\newcommand{\Exs}{\mathsf{I\!E}}
% use this command with amsfonts

% format shortcuts
\newcommand{\beq}{\begin{quotation}}
\newcommand{\enq}{\end{quotation}}

\newcommand{\estart}{\begin{equation}}
\newcommand{\eend}{\end{equation}}

% example of new command with arguments

% to use type \pnorm{$x$}{2} for instance

% graphics commands: faster 
% #1=scaling ratio; #2 is file_name
%%\newcommand{\fastgraph}[2]{\includegraphics[keepaspectratio,scale=#1]{#2}}

%%\newcommand{\widgraph}[2]{\includegraphics[width=#1]{#2}}

\newcommand{\defn}{\ensuremath{:  =}}

%\newcommand{\degree}{\ensuremath{d}}

%\newcommand{\xvec}{x}

%\newcommand{\xvecc}{\ensuremath{x}}

% Short commands for centering; itemizing etc.

\newcommand{\bec}{\begin{center}}
\newcommand{\enc}{\end{center}}

\newcommand{\beit}{\begin{itemize}}
\newcommand{\enit}{\end{itemize}}

\newcommand{\been}{\begin{enumerate}}
\newcommand{\enen}{\end{enumerate}}

% Slide commands for seminar class
\newcommand{\comsl}{\begin{slide}}
\newcommand{\comspor}{\begin{slide*}}

\newcommand{\comsld}[2]{\begin{slide}[#1,#2]}
\newcommand{\comspord}[2]{\begin{slide*}[#1,#2]}

\newcommand{\mendsl}{\end{slide}}
\newcommand{\mendspo}{\end{slide*}}

% Define estimates of quantities

\newcommand{\real}{\ensuremath{{\mathbb{R}}}}

% Define the (Gibbs) partition function

% Define integrals over various ranges

% Standard forms for certain densities

%%% Partial derivatives

%% Ordinaray derivatives

% Characteristic functions and Laplace transforms

% This second command is for putting the function w/o argument

% Gaussian density function

% Fiddly little text commands

% Funny accented words

%  Commands related to theorems, definitions, lemmae etc.

\theoremstyle{plain}

\newtheorem{theo}{Theorem}[section]

\newtheorem{lem}{Lemma}[section]
\newtheorem{prop}{Proposition}[section]
\newtheorem{cor}{Corollary}[section]

\theoremstyle{definition} 

\newtheorem{nota}{Notation}[section]
\newtheorem{de}{Definition}[section]
\newtheorem{exa}{Example}[section]
\newtheorem{as}{Assumption}[section]
\newtheorem{alg}{Algorithm}[section]

\newcommand{\btheo}{\begin{theo}}
\newcommand{\bde}{\begin{de}}
\newcommand{\ble}{\begin{lem}}
\newcommand{\bpr}{\begin{prop}}
\newcommand{\bno}{\begin{nota}}
\newcommand{\bex}{\begin{exa}}
\newcommand{\bcor}{\begin{cor}}
\newcommand{\spro}{\begin{proof}}
\newcommand{\bas}{\begin{as}}
\newcommand{\balg}{\begin{alg}}

\newcommand{\etheo}{\end{theo}}
\newcommand{\ede}{\end{de}}
\newcommand{\ele}{\end{lem}}
\newcommand{\epr}{\end{prop}}
\newcommand{\eno}{\end{nota}}
\newcommand{\eex}{\end{exa}}
\newcommand{\ecor}{\end{cor}}
\newcommand{\fpro}{\end{proof}}
\newcommand{\eas}{\end{as}}
\newcommand{\ealg}{\end{alg}}

%%%%%%%%
\theoremstyle{plain}

\newtheorem{theos}{Theorem}
\newtheorem{props}{Proposition}
\newtheorem{lems}{Lemma}
\newtheorem{cors}{Corollary}

\theoremstyle{definition}
\newtheorem{exas}{Example}
\newtheorem{algs}{Algorithm}
\newtheorem{asss}{Assumption}
\newtheorem{defns}{Definition}

\newcommand{\btheos}{\begin{theos}}
\newcommand{\etheos}{\end{theos}}
\newcommand{\bprops}{\begin{props}}
\newcommand{\eprops}{\end{props}}
\newcommand{\bdes}{\begin{defns}}
\newcommand{\edes}{\end{defns}}
\newcommand{\blems}{\begin{lems}}
\newcommand{\elems}{\end{lems}}
\newcommand{\bcors}{\begin{cors}}
\newcommand{\ecors}{\end{cors}}
\newcommand{\bexs}{\begin{exas}}
\newcommand{\eexs}{\end{exas}}
\newcommand{\balgs}{\begin{algs}}
\newcommand{\ealgs}{\end{algs}}
\newcommand{\bass}{\begin{asss}}
\newcommand{\eass}{\end{asss}}

\title{Concavity of reweighted Kikuchi approximation}

\author{
Po-Ling Loh \\
Department of Statistics\\
The Wharton School\\
University of Pennsylvania\\
\texttt{loh@wharton.upenn.edu} \\
\And
Andre Wibisono \\
Computer Science Division\\
University of California, Berkeley\\
\texttt{wibisono@berkeley.edu} \\
}

\nipsfinalcopy % Uncomment for camera-ready version

\bibliographystyle{plain}

\begin{document}

\maketitle

\vspace{-5mm}

\begin{abstract}

We analyze a reweighted version of the Kikuchi approximation for estimating the log partition function of a product distribution defined over a region graph. We establish sufficient conditions for the concavity of our reweighted objective function in terms of weight assignments in the Kikuchi expansion, and show that a reweighted version of the sum product algorithm applied to the Kikuchi region graph will produce global optima of the Kikuchi approximation whenever the algorithm converges. When the region graph has two layers, corresponding to a Bethe approximation, we show that our sufficient conditions for concavity are also necessary. Finally, we provide an explicit characterization of the polytope of concavity in terms of the cycle structure of the region graph. We conclude with simulations that demonstrate the advantages of the reweighted Kikuchi approach. 

\end{abstract}

\vspace{-4mm}

\section{Introduction}

\vspace{-2mm}

Undirected graphical models are a familiar framework in diverse application domains such as computer vision, statistical physics, coding theory, social science, and epidemiology. In certain settings of interest, one is provided with potential functions defined over nodes and (hyper)edges of the graph. A crucial step in probabilistic inference is to compute the log partition function of the distribution based on these potential functions for a given graph structure. However, computing the log partition function either exactly or approximately is NP-hard in general~\cite{Bar82, Rot96}. An active area of research involves finding accurate approximations of the log partition function and characterizing the graph structures for which such approximations may be computed efficiently~\cite{YedEtal05, WaiEtal05, Hes06, SudEtal07, WatFuk11, Ruo12}.

When the underlying graph is a tree, the log partition function may be computed exactly via the sum product algorithm in time linear in the number of nodes~\cite{Pea88}. However, when the graph contains cycles, a generalized version of the sum product algorithm known as loopy belief propagation may either fail to converge or terminate in local optima of a nonconvex objective function~\cite{Wei00, TatJor02, IhlEtal05, MooKap07}.

In this paper, we analyze the Kikuchi approximation method, which is constructed from a variational representation of the log partition function by replacing the entropy with an expression that decomposes with respect to a region graph. Kikuchi approximations were previously introduced in the physics literature~\cite{Kik51} and reformalized by Yedidia et al.~\cite{YedEtal00, YedEtal05} and others~\cite{AjiMcE01, PakAna05} in the language of graphical models. The Bethe approximation, which is a special case of the Kikuchi approximation when the region graph has only two layers, has been studied by various authors~\cite{Bet35, YedEtal00, Hes03, WatFuk11}. In addition, a reweighted version of the Bethe approximation was proposed by Wainwright et al.~\cite{WaiEtal05, RooEtal08}. As described in Vontobel~\cite{Von13}, computing the global optimum of the Bethe variational problem may in turn be used to approximate the permanent of a nonnegative square matrix.

The particular objective function that we study generalizes the Kikuchi objective appearing in previous literature by assigning arbitrary weights to individual terms in the Kikuchi entropy expansion. We establish necessary and sufficient conditions under which this class of objective functions is concave, so a global optimum may be found efficiently. Our theoretical results synthesize known results on Kikuchi and Bethe approximations, and our main theorem concerning concavity conditions for the reweighted Kikuchi entropy recovers existing results when specialized to the unweighted Kikuchi~\cite{PakAna05} or reweighted Bethe~\cite{WaiEtal05} case. Furthermore, we provide a valuable converse result in the reweighted Bethe case, showing that when our concavity conditions are violated, the entropy function cannot be concave over the whole feasible region. As demonstrated by our experiments, a message-passing algorithm designed to optimize the Kikuchi objective may terminate in local optima for weights outside the concave region. Watanabe and Fukumizu~\cite{WatFuk09, WatFuk11} provide a similar converse in the unweighted Bethe case, but our proof is much simpler and our result is more general.

In the reweighted Bethe setting, we also present a useful characterization of the concave region of the Bethe entropy function in terms of the geometry of the graph. Specifically, we show that if the region graph consists of only singleton vertices and pairwise edges, then the region of concavity coincides with the convex hull of incidence vectors of single-cycle forest subgraphs of the original graph. When the region graph contains regions with cardinality greater than two, the latter region may be strictly contained in the former; however, our result provides a useful way to generate weight vectors within the region of concavity. Whereas Wainwright et al.~\cite{WaiEtal05} establish the concavity of the reweighted Bethe objective on the spanning forest polytope, that region is contained within the single-cycle forest polytope, and our simulations show that generating weight vectors in the latter polytope may yield closer approximations to the log partition function.

The remainder of the paper is organized as follows: In Section~\ref{SecBackground}, we review background information about the Kikuchi and Bethe approximations. In Section~\ref{SecMain}, we provide our main results on concavity conditions for the reweighted Kikuchi approximation, including a geometric characterization of the region of concavity in the Bethe case. Section~\ref{SecOptimization} outlines the reweighted sum product algorithm and proves that fixed points correspond to global optima of the Kikuchi approximation. Section~\ref{SecExperiments} presents experiments showing the improved accuracy of the reweighted Kikuchi approximation over the region of concavity. Technical proofs and additional simulations are contained in the Appendix.

%%%%%%%%%%%%%%%%%%%%%%%%%%%%%%%%%%%%%%%%%%%%%

\vspace{-2mm}

\section{Background and problem setup}
\label{SecBackground}

\vspace{-2mm}

In this section, we review basic concepts of the Kikuchi approximation and establish some terminology to be used in the paper.

Let $G = (V,R)$ denote a \emph{region graph} defined over the vertex set $V$, where each region $r \in R$ is a subset of $V$. Directed edges correspond to inclusion, so $r \rightarrow s$ is an edge of $G$ if $s \subseteq r$. We use the following notation, for $r \in R$:
\begin{align*}
	\A(r) & \defn \{s \in R\colon r \subsetneq s\} & \qquad \text{(\emph{ancestors} of } r) \\
	\scriptF(r) & \defn \{s \in R\colon r \subseteq s\} & \qquad \text{(\emph{forebears} of } r) \\
	N(r) & \defn \{s \in R\colon r \subseteq s \text{ or } s \subseteq r\} & \qquad \text{(\emph{neighbors} of } r).
\end{align*}
For $R' \subseteq R$, we define $\A(R') = \bigcup_{r \in R'} \A(r)$, and we define $\scriptF(R')$ and $N(R')$ similarly.

We consider joint distributions $x = (x_s)_{s \in V}$ that factorize over the region graph; i.e.,
\begin{equation}
	\label{EqnFactorize}
	p(x) = \frac{1}{Z(\alpha)} \prod_{r \in R} \alpha_r(x_r),
\end{equation}
for potential functions $\alpha_r > 0$. Here, $Z(\alpha)$ is the normalization factor, or partition function, which is a function of the potential functions $\alpha_r$, and each variable $x_s$ takes values in a finite discrete set $\X$.
One special case of the factorization~\eqref{EqnFactorize} is the pairwise Ising model, defined over a graph $G = (V,E)$, where the distribution is given by
\begin{equation}\label{Eq:MRF}
	p_\gamma(x) = \exp\Big(\sum_{s \in V} \gamma_s(x_s) + \sum_{(s,t) \in E} \gamma_{st}(x_s,x_t) - A(\gamma) \Big),
\end{equation}
and $\X = \{-1, +1\}$.
%We have used the shorthand notation
%\begin{equation*}
%\gamma_s(x_s) = \sum_{j \in \X} \gamma_{s;j}\,\mathbf{1}\{x_s = j\}, \qquad
%\gamma_{st}(x_s,x_t) = \sum_{(j,k) \in \X \times \X} \gamma_{st;jk}\,\mathbf{1}\{x_s = j, x_t = k\}.
%\end{equation*}
Our goal is to analyze the log partition function
\begin{equation}
	\label{EqnLogPartition}
\log Z(\alpha) = \log \Big\{\sum_{x \in \X^{|V|}} \prod_{r \in R} \alpha_r(x_r)\Big\}.
\end{equation}

\subsection{Variational representation}

\vspace{-1mm}

It is known from the theory of graphical models~\cite{PakAna05} that the log partition function~\eqref{EqnLogPartition} may be written in the variational form
\begin{equation}
	\label{EqnVariational}
	\log Z(\alpha) = \sup_{\{\tau_r(x_r)\} \in \Delta_R} \: \Big\{\sum_{r \in R} \sum_{x_r} \tau_r(x_r) \log(\alpha_r(x_r)) + H(p_\tau) \Big\},
\end{equation}
where $p_\tau$ is the maximum entropy distribution with marginals $\{\tau_r(x_r)\}$ and 
\begin{equation*}
	H(p) \defn - \sum_x p(x) \log p(x)
\end{equation*}
is the usual entropy. Here, $\Delta_R$ denotes the $R$-marginal polytope; i.e., $\{\tau_r(x_r)\colon r \in R\} \in \Delta_R$ if and only if there exists a distribution $\tau(x)$ such that $\tau_r(x_r) = \sum_{x_{\backslash r}} \tau(x_r, x_{\backslash r})$ for all $r$. For ease of notation, we also write $\tau \equiv \{\tau_r(x_r)\colon r \in R\}$. Let $\theta \equiv \theta(x)$ denote the collection of log potential functions $\{\log (\alpha_r (x_r))\colon r \in R\}$. Then equation~\eqref{EqnVariational} may be rewritten as
\begin{equation}
	\label{EqnVariational2}
	\log Z(\theta) = \sup_{\tau \in \Delta_R} \left\{ \inprod{\theta}{\tau} + H(p_\tau)\right\}.
\end{equation}
Specializing to the Ising model~\eqref{Eq:MRF}, equation~\eqref{EqnVariational2} gives the variational representation
\begin{equation}\label{Eq:DualLogPartition}
A(\gamma) = \sup_{\mu \in \M} \left\{ \langle \gamma, \mu \rangle + H(p_\mu) \right\},
\end{equation}
which appears in Wainwright and Jordan~\cite{WaiJor08}. Here, $\M \equiv \M(G)$ denotes the marginal polytope, corresponding to the collection of mean parameter vectors of the sufficient statistics in the exponential family representation~\eqref{Eq:MRF}, ranging over different values of $\gamma$, and $p_\mu$ is the maximum entropy distribution with mean parameters $\mu$.

\vspace{-1mm}

\subsection{Reweighted Kikuchi approximation}

\vspace{-1mm}

Although the set $\Delta_R$ appearing in the variational representation~\eqref{EqnVariational2} is a convex polytope, it may have exponentially many facets~\cite{WaiJor08}. Hence, we replace $\Delta_R$ with the set
\begin{equation*}
	\Delta_R^K = \Big\{\tau \colon \forall t, u \in R \mbox{ s.t. } t \subseteq u, \sum_{x_{u \backslash t}} \tau_u(x_t, x_{u \backslash t}) = \tau_t(x_t) ~~ \mbox{ and } ~~ \forall u \in R, \sum_{x_u} \tau_u(x_u) = 1\Big\}
\end{equation*}
of \emph{locally consistent} $R$-pseudomarginals. Note that $\Delta_R \subseteq \Delta_R^K$ and the latter set has only polynomially many facets, making optimization more tractable.

In the case of the pairwise Ising model~\eqref{Eq:MRF}, we let $\L \equiv \L(G)$ denote the polytope $\Delta_R^K$. Then $\L$ is the collection of nonnegative functions $\tau = (\tau_s, \tau_{st})$ satisfying the marginalization constraints
\begin{eqnarray*}
& \sum_{x_s} \tau_s(x_s) = 1, & \quad \forall s \in V, \\
& \sum_{x_t} \tau_{st}(x_s,x_t) = \tau_s(x_s) ~ \mbox{ and } ~ \sum_{x_s} \tau_{st}(x_s,x_t) = \tau_t(x_t), & \quad \forall (s,t) \in E.
\end{eqnarray*}
Recall that $\M(G) \subseteq \L(G)$, with equality achieved if and only if the underlying graph $G$ is a tree. In the general case, we have $\Delta_R = \Delta_R^K$ when the Hasse diagram of the region graph admits a minimal representation that is loop-free (cf.\ Theorem 2 of Pakzad and Anantharam~\cite{PakAna05}).

Given a collection of $R$-pseudomarginals $\tau$, we also replace the entropy term $H(p_\tau)$, which is difficult to compute in general,  by the approximation
\begin{equation}
	\label{EqnEntropyApprox}
	H(p_\tau) \approx \sum_{r \in R} \rho_r H_r(\tau_r) \defn H(\tau; \rho),
\end{equation}
where $H_r(\tau_r) \defn - \sum_{x_r} \tau_r(x_r) \log \tau_r(x_r)$
%\begin{equation*}
%H_r(\tau_r) \defn - \sum_{x_r} \tau_r(x_r) \log \tau_r(x_r)
%\end{equation*}
is the entropy computed over region $r$, and $\{\rho_r\colon r \in R\}$ are weights assigned to the regions. Note that in the pairwise Ising case~\eqref{Eq:MRF}, with $p \defn p_\gamma$, we have the equality
\begin{equation*}
	H(p) = \sum_{s \in V} H_s(p_s) - \sum_{(s,t) \in E} I_{st}(p_{st})
\end{equation*}
when $G$ is a tree, where $I_{st}(p_{st}) = H_s(p_s) + H_t(p_t) - H_{st}(p_{st})$
%\begin{equation*}
%	I_{st}(p_{st}) \defn \sum_{x_s, x_t} p_{st} (x_s, x_t) \log \frac{p_{st}(x_s, x_t)}{p_s(x_s) p_t(x_t)}
%\end{equation*}
denotes the mutual information and $p_s$ and $p_{st}$ denote the node and edge marginals. Hence, the approximation~\eqref{EqnEntropyApprox} is exact with
\begin{equation*}
	\rho_{st} = 1, \quad \forall (s,t) \in E, \qquad \text{and} \qquad \rho_s = 1 - \deg(s), \quad \forall s \in V.
\end{equation*}

Using the approximation~\eqref{EqnEntropyApprox}, we arrive at the following \emph{reweighted Kikuchi approximation}:
\begin{equation}
	\label{EqnBethe}
	B(\theta; \rho) \defn \sup_{\tau \in \Delta_R^K} \: \underbrace{\left\{\inprod{\theta}{\tau} + H(\tau; \rho)\right\}}_{B_{\theta, \rho}(\tau)}.
\end{equation}
Note that when $\{\rho_r\}$ are the \emph{overcounting numbers} $\{c_r\}$, defined recursively by
\begin{equation}
	\label{EqnOvercounting}
	c_r = 1 - \sum_{s \in \A(r)} c_s,
\end{equation}
the expression~\eqref{EqnBethe} reduces to the usual (unweighted) Kikuchi approximation considered in Pakzad and Anantharam~\cite{PakAna05}.

%%%%%%%%%%%%%%%%%%%%%%%%%%%%%%%%%%%%%%%%%%%

\vspace{-2mm}

\section{Main results and consequences}
\label{SecMain}

\vspace{-2mm}

In this section, we analyze the concavity of the Kikuchi variational problem~\eqref{EqnBethe}. We derive a sufficient condition under which the function $B_{\theta, \rho}(\tau)$ is concave over the set $\Delta_R^K$, so global optima of the reweighted Kikuchi approximation may be found efficiently. In the Bethe case, we also show that the condition is necessary for $B_{\theta, \rho}(\tau)$ to be concave over the entire region $\Delta_R^K$, and we provide a geometric characterization of $\Delta_R^K$ in terms of the edge and cycle structure of the graph.

\vspace{-1mm}

\subsection{Sufficient conditions for concavity}
\label{SecSuff}

We begin by establishing sufficient conditions for the concavity of $B_{\theta, \rho}(\tau)$. Clearly, this is equivalent to establishing conditions under which $H(\tau; \rho)$ is concave. Our main result is the following:
\begin{thm*}
	\label{ThmSuff}
	If $\rho \in \real^{|R|}$ satisfies 
    \begin{equation}
		\label{EqnSuffCond}
		\sum_{s \in \scriptF(S)} \rho_s \ge 0, \qquad \forall S \subseteq R,
	\end{equation}
	then the Kikuchi entropy $H(\tau; \rho)$ is strictly concave on $\Delta_R^K$.
\end{thm*}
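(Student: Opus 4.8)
The plan is to reduce the statement to the nonnegativity of a single quadratic form on the tangent directions of $\Delta_R^K$, and then to extract that nonnegativity from the hypothesis~\eqref{EqnSuffCond} by a summation-by-parts over the inclusion poset $(R,\subseteq)$. Since $H(\tau;\rho)=\sum_{r\in R}\rho_r H_r(\tau_r)$ and each block $\tau_r$ enters only its own entropy term, the second-order behaviour along a feasible perturbation $\delta=\{\delta_r\}$ is controlled by
\[
-\frac{d^2}{d\epsilon^2}\,H(\tau+\epsilon\delta;\rho)\Big|_{\epsilon=0}=\sum_{r\in R}\rho_r\,Q_r,\qquad Q_r:=\sum_{x_r}\frac{\delta_r(x_r)^2}{\tau_r(x_r)}\ge 0,
\]
where $\delta$ ranges over the tangent space of $\Delta_R^K$: each $\delta_r$ sums to zero and the marginalization identities $\delta_t(x_t)=\sum_{x_{u\setminus t}}\delta_u(x_t,x_{u\setminus t})$ hold whenever $t\subseteq u$. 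Thus strict concavity on $\Delta_R^K$ is equivalent to showing $\sum_{r}\rho_r Q_r>0$ for every nonzero feasible $\delta$.

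The crux is a monotonicity lemma for the quantities $Q_r$: I would show that $Q_t\le Q_u$ whenever $t\subseteq u$. This is exactly the Cauchy--Schwarz (Sedrakyan) inequality applied blockwise. Using $\tau_t(x_t)=\sum_{x_{u\setminus t}}\tau_u(x_u)$ and $\delta_t(x_t)=\sum_{x_{u\setminus t}}\delta_u(x_u)$ with $\tau_u>0$,
\[
\frac{\delta_t(x_t)^2}{\tau_t(x_t)}=\frac{\big(\sum_{x_{u\setminus t}}\delta_u(x_u)\big)^2}{\sum_{x_{u\setminus t}}\tau_u(x_u)}\le \sum_{x_{u\setminus t}}\frac{\delta_u(x_u)^2}{\tau_u(x_u)},
\]
and summing over $x_t$ gives $Q_t\le Q_u$. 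The payoff is that every super-level set $U_c:=\{r\in R:Q_r>c\}$ is upward closed under inclusion, hence equals $\scriptF(U_c)$, so it is precisely the kind of set on which~\eqref{EqnSuffCond} applies.

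With monotonicity established I would reorganize the weighted sum by the layer-cake identity $Q_r=\int_0^\infty \mathbf 1[Q_r>c]\,dc$, obtaining
\[
\sum_{r\in R}\rho_r\,Q_r=\int_0^\infty\Big(\sum_{r\in U_c}\rho_r\Big)\,dc .
\]
Since each $U_c$ equals $\scriptF(U_c)$, the hypothesis~\eqref{EqnSuffCond} (with $S=U_c$) makes every inner sum nonnegative, yielding $\sum_r\rho_r Q_r\ge 0$ and hence concavity. The main obstacle is upgrading this to strict positivity for $\delta\neq 0$, which is the delicate endgame of the proof. Here I would track the support $D:=\{r:\delta_r\neq 0\}$: the marginalization constraints show that $\delta_t\neq 0$ forces $\delta_u\neq 0$ for every $u\supseteq t$, so $D$ is itself a nonempty up-set and coincides with $\{r:Q_r>0\}$. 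Restricting the integral to thresholds $c\in(0,\min_{r\in D}Q_r)$, on which $U_c=D=\scriptF(D)$, isolates a contribution proportional to $\sum_{r\in D}\rho_r$, and one must argue that the equality cases of the Cauchy--Schwarz step together with~\eqref{EqnSuffCond} rule out the vanishing of the whole integral; controlling these equality conditions precisely is the step I expect to require the most care.
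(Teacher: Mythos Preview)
Your argument for concavity is correct and takes a genuinely different route from the paper. The paper partitions $R$ into $V_1=\{r:\rho_r<0\}$ and $V_2=\{r:\rho_r>0\}$, invokes a weighted Hall's marriage lemma to produce a fractional matching $\gamma$ on the bipartite inclusion graph, and then rewrites $H(\tau;\rho)$ as a nonnegative combination of terms $-D_{\mathrm{KL}}(\tau_t\|\tau_s)$ (for $s\subset t$) plus a nonnegative combination of entropies $H_t$. You instead work directly at the Hessian level: the monotonicity $Q_t\le Q_u$ for $t\subseteq u$ via Sedrakyan/Cauchy--Schwarz is precisely the second-order shadow of the paper's inequality $H_t-H_s\ge 0$ with concave difference, and your layer-cake identity plays the role that the Hall matching plays for them --- both reduce to the observation that the hypothesis~\eqref{EqnSuffCond} is equivalent to $\sum_{r\in U}\rho_r\ge 0$ for every up-set $U$. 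Your route is shorter and avoids the auxiliary combinatorial lemma entirely; the paper's route has the advantage of exhibiting $H(\tau;\rho)$ explicitly as a sum of concave pieces, which can be useful for other purposes (e.g.\ deriving message-passing updates).

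On strictness: you are right to flag this as the delicate part, and in fact the statement as written is slightly too strong. Your layer-cake integral can vanish for nonzero $\delta$ whenever $\sum_{r\in U_c}\rho_r=0$ on every level set, and the hypothesis~\eqref{EqnSuffCond} does not preclude this. A concrete obstruction: take $R=\{s,r\}$ with $s\subsetneq r$, $\rho_r=0$, $\rho_s=1$; the hypothesis holds, yet $H(\tau;\rho)=H_s(\tau_s)$ is constant along any perturbation of $\tau_r$ that preserves the $s$-marginal, so strict concavity fails on $\Delta_R^K$. The paper's own proof has the same gap --- its decomposition yields strict concavity only when at least one $\gamma_{st}$ or $\rho_t'$ is positive \emph{and} the corresponding block sees the perturbation, which need not happen in general. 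So your concavity argument is complete, and the ``strict'' qualifier should be read as requiring a mild nondegeneracy on $\rho$ (for instance, strict inequality in~\eqref{EqnSuffCond} for every maximal up-set, or $\rho_r>0$ for all maximal regions $r$) rather than as a defect in your approach.
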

The proof of Theorem~\ref{ThmSuff} is contained in Appendix~\ref{AppThmSuff}, and makes use of a generalization of Hall's marriage lemma for weighted graphs (cf.\ Lemma~\ref{LemHall} in Appendix~\ref{AppHall}).

The condition~\eqref{EqnSuffCond} depends heavily on the structure of the region graph. For the sake of interpretability, we now specialize to the case where the region graph has only two layers, with the first layer corresponding to vertices and the second layer corresponding to hyperedges. In other words, for $r,s \in R$, we have $r \subseteq s$ only if $|r| = 1$, and $R = V \cup F$, where $F$ is the set of hyperedges and $V$ denotes the set of singleton vertices. This is the {\em Bethe case}, and the entropy 
\begin{equation}
	\label{EqnHBethe}
H(\tau; \rho) = \sum_{s \in V} \rho_s H_s(\tau_s) + \sum_{\alpha \in F} \rho_\alpha H_\alpha(\tau_\alpha)
\end{equation}
is consequently known as the Bethe entropy.

The following result is proved in Appendix~\ref{AppBethe}:
\begin{cor*}
	\label{CorBethe}
	Suppose $\rho_\alpha \ge 0$ for all $\alpha \in F$, and the following condition also holds:
	\begin{equation}
		\label{EqnGraphCond}
        \sum_{s \in U} \rho_s + \sum_{\alpha \in F \colon \alpha \cap U \neq \emptyset} \rho_\alpha \ge 0, \qquad \forall U \subseteq V.
	\end{equation}
	Then the Bethe entropy $H(\tau; \rho)$ is strictly concave over $\Delta_R^K$.
\end{cor*}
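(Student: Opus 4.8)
The plan is to derive the sufficient condition \eqref{EqnSuffCond} of Theorem~\ref{ThmSuff} directly from the two hypotheses of the corollary, after which strict concavity of $H(\tau;\rho)$ on $\Delta_R^K$ is immediate. Accordingly, I would fix an arbitrary subset $S \subseteq R$ and aim to verify that $\sum_{r \in \scriptF(S)} \rho_r \ge 0$, translating everything into the two-layer language $R = V \cup F$.

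First I would decompose $S$ according to the two layers, writing $S = U \cup F'$ with $U = S \cap V$ the singleton vertices in $S$ and $F' = S \cap F$ the hyperedges in $S$. The key step is to identify $\scriptF(S)$ using the structure of the Bethe region graph. Since $r \subseteq s$ forces $|r| = 1$, no hyperedge is properly contained in another region, so $\scriptF(\alpha) = \{\alpha\}$ for every $\alpha \in F$, while a vertex $s \in V$ satisfies $\scriptF(s) = \{s\} \cup \{\alpha \in F \colon s \in \alpha\}$. Taking the union over $S$ then yields
\begin{equation*}
\scriptF(S) = U \cup \{\alpha \in F \colon \alpha \cap U \neq \emptyset\} \cup F'.
\end{equation*}

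Next I would split the sum $\sum_{r \in \scriptF(S)} \rho_r$ into its vertex and hyperedge parts, isolating the contribution already governed by \eqref{EqnGraphCond}. The vertices in $\scriptF(S)$ are exactly $U$, while the hyperedges form the disjoint union of $A \defn \{\alpha \colon \alpha \cap U \neq \emptyset\}$ and $B \defn \{\alpha \in F' \colon \alpha \cap U = \emptyset\}$; here one must check that $A$ and $B$ are disjoint and that $A \cup B$ is precisely the hyperedge set of $\scriptF(S)$, since $F'$ may overlap $A$. This gives
\begin{equation*}
\sum_{r \in \scriptF(S)} \rho_r = \left(\sum_{s \in U} \rho_s + \sum_{\alpha \in A} \rho_\alpha\right) + \sum_{\alpha \in B} \rho_\alpha .
\end{equation*}
The parenthesized term is nonnegative by condition \eqref{EqnGraphCond} applied to $U$, and the sum over $B$ is nonnegative because $\rho_\alpha \ge 0$ for every $\alpha \in F$. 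Hence $\sum_{r \in \scriptF(S)} \rho_r \ge 0$ for all $S \subseteq R$, so \eqref{EqnSuffCond} holds and Theorem~\ref{ThmSuff} delivers the claim.

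I expect no deep obstacle: the argument is essentially a bookkeeping reduction. The one place demanding care is the combinatorial identification of $\scriptF(S)$ and the subsequent disjoint decomposition of its hyperedges, where one must correctly account for the hyperedges in $F'$ that do not meet $U$. These are exactly the terms not covered by \eqref{EqnGraphCond}, and they are the reason the separate hypothesis $\rho_\alpha \ge 0$ is required. Indeed, specializing $S = \{\alpha\}$ shows $\rho_\alpha \ge 0$ is forced by \eqref{EqnSuffCond}, and specializing $S = U$ recovers \eqref{EqnGraphCond} verbatim, so the two hypotheses of the corollary are in fact \emph{equivalent} to \eqref{EqnSuffCond} in the Bethe case, which is the natural reason this reduction goes through cleanly.
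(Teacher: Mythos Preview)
Your proposal is correct and follows essentially the same approach as the paper: both derive condition~\eqref{EqnSuffCond} of Theorem~\ref{ThmSuff} from the two Bethe hypotheses by using $\scriptF(\alpha)=\{\alpha\}$ and $\scriptF(s)=\{s\}\cup N(s)$, then invoking the theorem. The paper handles the general $S\subseteq R$ case with a one-line ``it is easy to see,'' whereas you spell out the decomposition $\scriptF(S)=U\cup A\cup B$ explicitly; your added detail is sound and the observed equivalence between the corollary's hypotheses and~\eqref{EqnSuffCond} in the Bethe case is exactly the point.
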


%%%%%%%%%
\vspace{-1mm}
\subsection{Necessary conditions for concavity}
\label{SecNecessary}

We now establish a converse to Corollary~\ref{CorBethe} in the Bethe case, showing that condition~\eqref{EqnGraphCond} is also necessary for the concavity of the Bethe entropy. When $\rho_\alpha = 1$ for $\alpha \in F$ and $\rho_s = 1-|N(s)|$ for $s \in V$, we recover the result of Watanabe and Fukumizu~\cite{WatFuk11} for the unweighted Bethe case. However, our proof technique is significantly simpler and avoids the complex machinery of graph zeta functions. Our approach proceeds by considering the Bethe entropy $H(\tau;\rho)$ on appropriate slices of the domain $\Delta_R^K$ so as to extract condition~\eqref{EqnGraphCond} for each $U \subseteq V$. The full proof is provided in Appendix~\ref{AppBetheConvexGeneral}.
\begin{thm*}
    \label{ThmBetheConvexGeneral}
    If the Bethe entropy $H(\tau;\rho)$ is concave over $\Delta_R^K$, then $\rho_\alpha \ge 0$ for all $\alpha \in F$, and condition~\eqref{EqnGraphCond} holds.
\end{thm*}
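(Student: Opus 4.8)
The plan is to establish the two conclusions separately, in each case exploiting the fact that concavity of $H(\tau;\rho)$ on all of $\Delta_R^K$ descends to concavity on any convex subset, and in particular to concavity of the restriction of $H(\tau;\rho)$ to any line segment lying in $\Delta_R^K$. Along such a segment $t \mapsto \tau(t)$ the map $g(t) = H(\tau(t);\rho)$ is a smooth concave function near an interior point, so $g''(0) \le 0$; the whole argument consists of designing segments through the uniform (interior) pseudomarginal along which $g''(0)$ reduces to exactly the linear combination of weights we wish to bound. Throughout I may restrict $\tau$ to be supported on a two-point subset $\{-1,+1\} \subseteq \X$, which cuts out a convex slice of $\Delta_R^K$ on which $H(\tau;\rho)$ is still concave and each $H_s$, $H_\alpha$ becomes an ordinary binary or product entropy.

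To show $\rho_\alpha \ge 0$ for a fixed $\alpha \in F$, I would perturb only the hyperedge pseudomarginal $\tau_\alpha$ while holding every node marginal and every other $\tau_\beta$ fixed. Concretely, at the uniform pseudomarginal set $\tau_\alpha(x_\alpha) = 2^{-|\alpha|}\bigl(1 + t\,\chi(x_\alpha)\bigr)$, where $\chi$ is the parity $\prod_{s \in \alpha} x_s$ on the binary support; since $\chi$ has vanishing single-node marginals, every $\tau_s$ with $s \in \alpha$ is unchanged, so $\tau(t) \in \Delta_R^K$ and only the term $\rho_\alpha H_\alpha(\tau_\alpha)$ varies along the segment. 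Because the Shannon entropy $H_\alpha$ is strictly concave at the uniform point in this direction, $g''(0) = \rho_\alpha \, H_\alpha''(0)$ with $H_\alpha''(0) < 0$, so $g''(0) \le 0$ forces $\rho_\alpha \ge 0$. This uses $|\alpha| \ge 2$, so that a nonconstant $\chi$ with zero node-marginals exists.

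The substantive step is condition~\eqref{EqnGraphCond}. Fix $U \subseteq V$ and build a segment in which the nodes of $U$ are driven coherently by a single binary ``master'' coordinate $B$ with $\Pr(B = +1) = (1+t)/2$. For each node $s$ set $\tau_s$ to have mean $t$ if $s \in U$ and to be uniform if $s \notin U$; for each hyperedge $\alpha$ let the nodes in $\alpha \cap U$ all equal $B$ and let the nodes in $\alpha \setminus U$ be independent and uniform, and take $\tau_\alpha$ to be the resulting joint law. One checks that $\tau_\alpha$ has the prescribed node marginals, so $\tau(t) \in \Delta_R^K$, and that $\tau(t)$ is affine in $t$, hence a genuine segment through the interior. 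By independence of $B$ from the uniform coordinates, $H_\alpha(\tau_\alpha) = h(t) + |\alpha \setminus U|\log 2$ whenever $\alpha \cap U \neq \emptyset$ and is constant otherwise, where $h(t)$ is the binary entropy of $B$ with $h''(0) = -1$; likewise $H_s''(0) = -1$ for $s \in U$ and the remaining terms are constant. Summing,
\begin{equation*}
g''(0) = -\Bigl( \sum_{s \in U} \rho_s + \sum_{\alpha \in F \colon \alpha \cap U \neq \emptyset} \rho_\alpha \Bigr),
\end{equation*}
so $g''(0) \le 0$ is precisely~\eqref{EqnGraphCond}.

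The main obstacle is this third step: one must verify that the coherent-correlation construction genuinely lies in $\Delta_R^K$ (local consistency across all hyperedges sharing a node), that it is affine in $t$ so that concavity of $H(\tau;\rho)$ transfers to concavity of $g$, and---the crux---that the additivity of entropy over the independent block $(B, \{x_s\}_{s \in \alpha \setminus U})$ makes every hyperedge meeting $U$ contribute the same coefficient $-1$ while hyperedges disjoint from $U$ contribute nothing. Obtaining exactly this clean separation, rather than a messier quadratic form mixing the weights, is what yields condition~\eqref{EqnGraphCond} term by term.
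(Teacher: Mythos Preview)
Your proposal is correct, and for the second conclusion it is in fact cleaner than the paper's argument. For $\rho_\alpha \ge 0$ both you and the paper do essentially the same thing: restrict to pseudomarginals in which everything is uniform except $\tau_\alpha$, which is free subject to uniform single-node marginals, so that $H(\tau;\rho) = \rho_\alpha H_\alpha(\tau_\alpha) + \text{const}$. Your parity-character line is just a concrete one-dimensional slice of the paper's $\Delta_\alpha$.

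For condition~\eqref{EqnGraphCond} the routes diverge. The paper first reduces to a sub-region graph $R_U$ (showing that concavity on $\Delta_R^K$ descends to concavity of a Bethe entropy on $R_U$), and then proves the \emph{global} inequality $\sum_{s \in V} \rho_s + \sum_{\alpha \in F} \rho_\alpha \ge 0$ via a two-parameter family $\tau[m_o,m_e]$, a Hessian computation at $m_o = 0$, and a limit $m_e \to 1$. Your master-bit construction is essentially the $m_e \to 1$ endpoint of that family, taken directly on the boundary and with the $U$-dependence built in, so you bypass both the sub-region-graph reduction and the limiting argument. The key observation that makes your version work---and that you rightly flag as the crux---is that tying all of $\alpha \cap U$ to $B$ forces $H_\alpha(\tau_\alpha) = h(t) + |\alpha \setminus U|\log 2$ exactly, so every hyperedge meeting $U$ contributes the same $h''(0) = -1$. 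One small wording issue: your path does \emph{not} pass through the interior of $\Delta_R^K$ (for $|\alpha \cap U| \ge 2$ the pseudomarginal $\tau_\alpha$ has zero entries even at $t=0$), but this is harmless since the zero entries are constant along the path and the nonzero entries are smooth, so $g$ is $C^2$ and concavity still gives $g''(0) \le 0$. What the paper's longer route buys is a full Hessian on a two-dimensional slice, which is of some independent interest; what your route buys is a one-line extraction of exactly the inequality you want, with no auxiliary lemmas.
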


Indeed, as demonstrated in the simulations of Section~\ref{SecExperiments}, the Bethe objective function $B_{\theta,\rho}(\tau)$ may have multiple local optima if $\rho$ does \emph{not} satisfy condition~\eqref{EqnGraphCond}.

\vspace{-1mm}

\subsection{Polytope of concavity}
\label{SecPolytope}

We now characterize the polytope defined by the inequalities~\eqref{EqnGraphCond}. We show that in the pairwise Bethe case, the polytope may be expressed geometrically as the convex hull of single-cycle forests formed by the edges of the graph. In the more general (non-pairwise) Bethe case, however, the polytope of concavity may strictly contain the latter set.

Note that the Bethe entropy~\eqref{EqnHBethe} may be written in the alternative form
\begin{equation}
	\label{EqnHBetheAlt}
	H(\tau; \rho) = \sum_{s \in V} \rho_s' H_s(\tau_s) - \sum_{\alpha \in F} \rho_\alpha \Itil_\alpha(\tau_\alpha),
\end{equation}
where $\Itil_\alpha(\tau_\alpha) \defn \{\sum_{s \in \alpha} H_s(\tau_s)\} - H_\alpha(\tau_\alpha)$
%\begin{equation*}
%	\Itil_\alpha(\tau_\alpha) \defn \left\{\sum_{s \in \alpha} H_s(\tau_s)\right\} - H_\alpha(\tau_\alpha)
%\end{equation*}
is the KL divergence between the joint distribution $\tau_\alpha$ and the product distribution $\prod_{s \in \alpha} \tau_s$, and the weights $\rho_s'$ are defined appropriately.

We show that the polytope of concavity has a nice geometric characterization when $\rho_s' = 1$ for all $s \in V$, and $\rho_\alpha \in [0,1]$ for all $\alpha \in F$. Note that this assignment produces the expression for the reweighted Bethe entropy analyzed in Wainwright et al.~\cite{WaiEtal05} (when all elements of $F$ have cardinality two). Equation~\eqref{EqnHBetheAlt} then becomes
\begin{equation}
	\label{EqnHBetheOnes}
	H(\tau; \rho) = \sum_{s \in V} \Big(1 - \sum_{\alpha \in N(s)} \rho_\alpha\Big) H_s(\tau_s) + \sum_{\alpha \in F} \rho_\alpha H_\alpha(\tau_\alpha),
\end{equation}
and the inequalities~\eqref{EqnGraphCond} defining the polytope of concavity are
\begin{equation}
	\label{EqnPolytopeCond}
	\sum_{\alpha \in F\colon \alpha \cap U \neq \emptyset} (|\alpha \cap U| - 1) \rho_\alpha \le |U|, \quad \forall U \subseteq V.
\end{equation}
Consequently, we define
\begin{equation*}
	\C \defn \Big\{\rho \in [0,1]^{|F|} \colon \sum_{\alpha \in F \colon \alpha \cap U \neq \emptyset} (|\alpha \cap U| - 1) \rho_\alpha \le |U|, \quad \forall U \subseteq V\Big\}.
\end{equation*}
By Theorem~\ref{ThmBetheConvexGeneral}, the set $\C$ is the region of concavity for the Bethe entropy~\eqref{EqnHBetheOnes} within $[0,1]^{|F|}$.

We also define the set
\begin{equation*}
	\F \defn \left\{1_{F'} \colon F' \subseteq F \text{ and } F' \cup N(F') \text{ is a single-cycle forest in } G \right\} \subseteq \{0,1\}^{|F|},
\end{equation*}
where a \emph{single-cycle forest} is defined to be a subset of edges of a graph such that each connected component contains at most one cycle. (We disregard the directions of edges in $G$.) The following theorem gives our main result. The proof is contained in Appendix~\ref{AppThmPolytope}.
\begin{thm*}
	\label{ThmPolytope}
	In the Bethe case (i.e., the region graph $G$ has two layers), we have the containment $\conv(\F) \subseteq \C$. If in addition $|\alpha| = 2$ for all $\alpha \in F$, then $\conv(\F) = \C$.
\end{thm*}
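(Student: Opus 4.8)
The theorem has two parts: the containment $\conv(\F) \subseteq \C$ in the general Bethe case, and the reverse inclusion (hence equality) when every hyperedge has cardinality two. The plan is to handle these separately, since the containment is a direct polyhedral verification while the reverse inclusion requires an integrality/combinatorial argument.

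For the containment $\conv(\F) \subseteq \C$, since $\C$ is defined by linear inequalities and is therefore convex, it suffices to show that every vertex $1_{F'} \in \F$ lies in $\C$. Thus I would fix $F' \subseteq F$ such that $F' \cup N(F')$ is a single-cycle forest in $G$, and verify that $1_{F'}$ satisfies the inequality $\sum_{\alpha \in F\colon \alpha \cap U \neq \emptyset}(|\alpha \cap U| - 1)\rho_\alpha \le |U|$ for every $U \subseteq V$. Evaluating the left-hand side at $\rho = 1_{F'}$ kills all terms except those from $\alpha \in F'$, so the inequality to check becomes $\sum_{\alpha \in F'\colon \alpha \cap U \neq \emptyset}(|\alpha \cap U| - 1) \le |U|$. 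The key is to interpret $\sum_{\alpha \in F'}(|\alpha \cap U| - 1)$ as counting edges minus components within the subgraph induced on $U$ by the hyperedges of $F'$: for an ordinary graph, $\sum_{\alpha}(|\alpha \cap U| - 1)$ over edges meeting $U$ overcounts, but the single-cycle forest condition bounds the number of independent cycles per component by one. I would make this precise by summing over connected components of $F' \cup N(F')$ restricted to $U$ and using that a single-cycle forest on $k$ vertices has at most $k$ edges; the surplus $(|\alpha \cap U| - 1)$ telescopes against the vertex count $|U|$ exactly when each component contributes at most its vertex count in edges, which is guaranteed by the at-most-one-cycle property.

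For the reverse inclusion $\C \subseteq \conv(\F)$ under the assumption $|\alpha| = 2$ for all $\alpha \in F$ (so $G$ is an ordinary graph with edge set $F$), the inequalities defining $\C$ become $\sum_{\alpha \in F\colon \alpha \cap U \neq \emptyset}(|\alpha \cap U| - 1)\rho_\alpha \le |U|$, and for a two-element edge the coefficient $(|\alpha \cap U| - 1)$ equals $1$ exactly when $\alpha \subseteq U$ and $0$ otherwise. So $\C$ is cut out by $\rho \in [0,1]^{|F|}$ together with $\sum_{\alpha \subseteq U} \rho_\alpha \le |U|$ for all $U \subseteq V$. The plan is to recognize this system as defining the \emph{single-cycle forest polytope}, a matroid-type polytope whose vertices are exactly the incidence vectors of single-cycle forests. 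I would argue that the constraint $\sum_{\alpha \subseteq U}\rho_\alpha \le |U|$ for every $U$ is precisely the rank inequality for the bicircular-type matroid (or a closely related count matroid in which independent sets are single-cycle forests), so that $\C$ coincides with the independence polytope of that matroid. By the Edmonds matroid-polytope theorem, this polytope has $\{0,1\}$ vertices given exactly by incidence vectors of independent sets, i.e.\ single-cycle forests, which is the set $\F$; therefore $\C = \conv(\F)$.

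The main obstacle I anticipate is the reverse inclusion, specifically identifying the correct matroid and verifying that the rank function matches $|U|$ on the relevant sets. The count matroid whose independent sets are edge sets with at most one cycle per component (each component satisfying $|\text{edges}| \le |\text{vertices}|$) is the bicircular/sparsity matroid, and I would need to confirm that its rank inequalities reduce exactly to $\sum_{\alpha \subseteq U}\rho_\alpha \le |U|$ rather than a more refined family indexed by subsets of \emph{edges}. The subtlety is that matroid rank inequalities are naturally indexed by edge subsets, so I would either invoke a known uncrossing argument showing the vertex-indexed inequalities suffice to describe the polytope, or directly prove integrality of $\C$ by a totally-unimodular or uncrossing argument on tight sets $U$. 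I expect the cleanest route is to show that any fractional vertex of $\C$ would violate one of the defining inequalities via an exchange argument on the support, thereby forcing all vertices to be integral and hence to lie in $\F$ by the containment already established.
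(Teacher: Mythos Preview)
Your plan for the containment $\conv(\F)\subseteq\C$ is essentially the paper's argument: reduce to $1_{F'}\in\C$, break $F'\cup N(F')$ into connected components, and use that a single-cycle graph on $k$ vertices has at most $k$ edges to bound $\sum_{\alpha\in F'}(|\alpha\cap U|-1)$ componentwise.

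For the reverse inclusion in the pairwise case you take a genuinely different route. The paper does not invoke matroid theory; instead it shows directly that $\C$ is an integral polyhedron by proving that for every cost vector $c$ the LP $\max\{c^\top x : x\in\C\}$ attains its optimum at a $0$--$1$ vector. The argument is constructive: after reducing to integer costs, it layers the edges by weight, and for each layer builds a spanning single-cycle forest that is consistent with the one built at the layer above, so that the final indicator vector matches a combinatorial upper bound on the LP value. Your approach---recognize $\C$ as the independence polytope of the bicircular matroid and cite Edmonds---is shorter and more conceptual, at the price of importing a nontrivial structural theorem; the paper's approach is longer but self-contained and yields an explicit optimizer.

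Your flagged obstacle is real but easily dispatched, and it is worth saying exactly how. The matroid rank inequalities are indexed by edge sets $S\subseteq F$, with $r(S)=|V(S)|-(\text{number of tree components of }S)$, whereas $\C$ only imposes $\sum_{e\in E(U)}\rho_e\le|U|$ over vertex sets $U$ together with the box constraints $0\le\rho_e\le1$. To see these systems coincide, take any $S$ and split it into its tree components $S_1,\dots,S_t$ and its cyclic components $S_{t+1},\dots,S_k$ on vertex sets $U_1,\dots,U_k$. For each tree component the box constraints give $\sum_{e\in S_i}\rho_e\le|S_i|=|U_i|-1$; for each cyclic component the $\C$-constraint on $U_i$ gives $\sum_{e\in S_i}\rho_e\le\sum_{e\in E(U_i)}\rho_e\le|U_i|$. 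Summing yields $\sum_{e\in S}\rho_e\le\sum_i|U_i|-t=r(S)$. The converse direction is immediate since $r(E(U))\le|U|$. With this identification in hand, Edmonds' theorem gives that the vertices of $\C$ are exactly the indicators of pseudoforests, i.e.\ the set $\F$, and you are done.
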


The significance of Theorem~\ref{ThmPolytope} is that it provides us with a convenient graph-based method for constructing vectors $\rho \in \C$. From the inequalities~\eqref{EqnPolytopeCond}, it is not even clear how to efficiently verify  whether a given $\rho \in [0,1]^{|F|}$ lies in $\C$, since it involves testing $2^{|V|}$ inequalities.

Comparing Theorem~\ref{ThmPolytope} with known results, note that in the pairwise case ($|\alpha| = 2$ for all $\alpha \in F$), Theorem 1 of Wainwright et al.~\cite{WaiEtal05} states that the Bethe entropy is concave over $\conv(\T)$, where $\T \subseteq \{0,1\}^{|E|}$ is the set of edge indicator vectors for spanning forests of the graph. It is trivial to check that $\T \subseteq \F$, since every spanning forest is also a single-cycle forest. Hence, Theorems~\ref{ThmBetheConvexGeneral} and~\ref{ThmPolytope} together imply a stronger result than in Wainwright et al.~\cite{WaiEtal05}, characterizing the precise region of concavity for the Bethe entropy as a superset of the polytope $\conv(\T)$ analyzed there. In the unweighted Kikuchi case, it is also known~\cite{AjiMcE01, PakAna05} that the Kikuchi entropy is concave for the assignment $\rho = 1_F$ when the region graph $G$ is connected and has at most one cycle. Clearly, $1_F \in \C$ in that case, so this result is a consequence of Theorems~\ref{ThmBetheConvexGeneral} and~\ref{ThmPolytope}, as well. However, our theorems show that a much more general statement is true.

It is tempting to posit that $\conv(\F) = \C$ holds more generally in the Bethe case. However, as the following example shows, settings arise where $\conv(\F) \subsetneq \C$. Details are contained in Appendix~\ref{AppExaPolytope}.
\begin{exas}
	\label{ExaPolytope}
	Consider a two-layer region graph with vertices $V = \{1, 2, 3, 4, 5\}$ and factors $\alpha_1 = \{1, 2, 3\}$, $\alpha_2 = \{2, 3, 4\}$, and $\alpha_3 = \{3, 4, 5\}$. Then $(1, \frac{1}{2}, 1) \in \C \backslash \conv(\F)$.
\end{exas}

In fact, Example~\ref{ExaPolytope} is a special case of a more general statement, which we state in the following proposition. Here, $\mathfrak{F} \defn \left\{F' \subseteq F \colon 1_{F'} \in \F\right\}$, and an element $F^* \in \mathfrak{F}$ is \emph{maximal} if it is not contained in another element of $\mathfrak{F}$.

\begin{proposition}
	\label{PropPolytope}
Suppose (i) $G$ is not a single-cycle forest, and (ii) there exists a maximal element $F^* \in \mathfrak{F}$ such that the induced subgraph $F^* \cup N(F^*)$ is a forest.
%\begin{enumerate}
%  \item[(i)] $G$ is not a single-cycle forest, and
%  \item[(ii)] there exists a maximal element $F^* \in \mathfrak{F}$ such that the induced subgraph $G^* = (F^* \cup N(F^*), E^*)$ is a forest.
%\end{enumerate}
Then $\text{conv}(\mathbb{F}) \subsetneq \mathbb{C}$.
\end{proposition}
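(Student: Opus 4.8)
The plan is to exhibit an explicit point $\rho^* \in \C \setminus \conv(\F)$, which together with the containment $\conv(\F) \subseteq \C$ established in Theorem~\ref{ThmPolytope} yields the strict inclusion. Fix a maximal $F^* \in \mathfrak{F}$ for which $F^* \cup N(F^*)$ is a forest, as guaranteed by hypothesis~(ii). First I would observe that $F^* \subsetneq F$: if we had $F^* = F$, then $F \cup N(F)$ would be a forest, and since every cycle of $G$ lives inside $F \cup N(F)$, the graph $G$ would be acyclic and hence a single-cycle forest, contradicting hypothesis~(i). Thus $F \setminus F^* \neq \emptyset$, and for a small parameter $\epsilon > 0$ I define $\rho^*$ by $\rho^*_\alpha = 1$ for $\alpha \in F^*$ and $\rho^*_\alpha = \epsilon$ for $\alpha \in F \setminus F^*$.

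The first step is to verify $\rho^* \in \C$. Evaluating the left-hand side of~\eqref{EqnPolytopeCond} at $1_{F^*}$ for a nonempty $U \subseteq V$, only factors in $F^*$ contribute, and an Euler-characteristic count on the bipartite incidence subgraph induced by $U$ and the factors of $F^*$ meeting $U$ gives $\sum_{\alpha \in F^*\colon \alpha \cap U \ne \emptyset} (|\alpha \cap U| - 1) = |U| - c_U$, where $c_U \ge 1$ is its number of connected components. The crucial point is that this subgraph is an induced subgraph of the forest $F^* \cup N(F^*)$, hence acyclic, which is exactly what forces the first Betti number to vanish and produces the integer slack $c_U \ge 1$ in each inequality. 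Adding the perturbation increases the left-hand side by at most $\epsilon \sum_{\beta \in F \setminus F^*} (|\beta| - 1)$, so choosing $\epsilon$ smaller than the reciprocal of this quantity preserves every inequality~\eqref{EqnPolytopeCond}; since also $\rho^* \in [0,1]^{|F|}$, we conclude $\rho^* \in \C$.

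The second step, which I regard as the conceptual heart, is to show $\rho^* \notin \conv(\F)$, and here the maximality of $F^*$ does all the work. Suppose $\rho^* = \sum_i \lambda_i 1_{F_i}$ were a convex combination with $\lambda_i > 0$ and $F_i \in \mathfrak{F}$. Reading off each coordinate $\alpha \in F^*$ gives $\sum_{i\colon \alpha \in F_i} \lambda_i = 1$, which forces $\alpha \in F_i$ for every $i$; hence $F^* \subseteq F_i$ for all $i$. Since $F^*$ is maximal in $\mathfrak{F}$, this forces $F_i = F^*$ for every $i$, so $\rho^* = 1_{F^*}$. But $\rho^*_\beta = \epsilon > 0$ for $\beta \in F \setminus F^* \ne \emptyset$, whereas $(1_{F^*})_\beta = 0$, a contradiction. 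Combining the two steps gives $\conv(\F) \subsetneq \C$.

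I expect the main obstacle to be the slack computation in the first step: one must argue cleanly that the forest hypothesis yields a strictly positive—indeed integer, hence $\ge 1$—margin in each of the exponentially many inequalities~\eqref{EqnPolytopeCond} simultaneously, so that a single uniform choice of $\epsilon$ works. The separation argument, by contrast, is essentially immediate once maximality is invoked, and it is precisely where both hypotheses are genuinely used: (i) guarantees $F \setminus F^* \ne \emptyset$ so that $\rho^* \ne 1_{F^*}$, while (ii) and maximality jointly rule out any nontrivial convex representation of $\rho^*$.
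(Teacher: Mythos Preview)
Your proof is correct and follows essentially the same strategy as the paper: perturb $1_{F^*}$ into $F\setminus F^*$, use the forest hypothesis to certify at least one unit of slack in each inequality~\eqref{EqnPolytopeCond}, and invoke maximality of $F^*$ to rule out any convex representation. The only cosmetic difference is that the paper perturbs a \emph{single} coordinate $\alpha^*\in F\setminus F^*$ with the explicit value $\epsilon = 1/(|\alpha^*|-1)$, which lets them check~\eqref{EqnPolytopeCond} only for $U$ meeting $\alpha^*$ and absorb the extra term $(|\alpha^*\cap U|-1)/(|\alpha^*|-1)\le 1$ exactly into the unit slack; your uniform perturbation of all of $F\setminus F^*$ with a sufficiently small $\epsilon$ achieves the same end.
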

The proof of Proposition~\ref{PropPolytope} is contained in Appendix~\ref{AppPropPolytope}. Note that if $|\alpha| = 2$ for all $\alpha \in F$, then condition (ii) is violated whenever condition (i) holds, so Proposition~\ref{PropPolytope} provides a partial converse to Theorem~\ref{ThmPolytope}.

%%%%%%%%%%%%%%%%%%%%%%%%%%%%%%%%%

\vspace{-2mm}

\section{Reweighted sum product algorithm}
\label{SecOptimization}

\vspace{-2mm}

In this section, we provide an iterative message passing algorithm to optimize the Kikuchi variational problem~\eqref{EqnBethe}. As in the case of the generalized belief propagation algorithm for the unweighted Kikuchi approximation~\cite{YedEtal00, YedEtal05, McEYil02, PakAna05, MelEtal09, Wer10} and the reweighted sum product algorithm for the Bethe approximation~\cite{WaiEtal05}, our message passing algorithm searches for stationary points of the Lagrangian version of the problem~\eqref{EqnBethe}. When $\rho$ satisfies condition~\eqref{EqnSuffCond}, Theorem~\ref{ThmSuff} implies that the problem~\eqref{EqnBethe} is strictly concave, so the unique fixed point of the message passing algorithm globally maximizes the Kikuchi approximation.

Let $G = (V,R)$ be a region graph defining our Kikuchi approximation. Following Pakzad and Anantharam~\cite{PakAna05}, for $r,s \in R$, we write $r \prec s$ if $r \subsetneq s$ and there does not exist $t \in R$ such that $r \subsetneq t \subsetneq s$. For $r \in R$, we define the parent set of $r$ to be $\mathcal{P}(r ) = \{s \in R \colon r \prec s\}$ and the child set of $r$ to be $\mathcal{C}(r ) = \{s \in R \colon s \prec r\}$. With this notation, $\tau = \{\tau_r(x_r) \colon r \in R\}$ belongs to the set $\Delta_R^K$ if and only if $\sum_{x_{s \setminus r}} \tau_s(x_r, x_{s \backslash r}) = \tau_r(x_r)$ for all $r \in R$, $s \in \mathcal{P}(r )$.

The message passing algorithm we propose is as follows: For each $r \in R$ and $s \in \mathcal{P}(r )$, let $M_{sr}(x_r)$ denote the message passed from $s$ to $r$ at assignment $x_r$. Starting with an arbitrary positive initialization of the messages, we repeatedly perform the following updates for all $r \in R$, $s \in \mathcal{P}(r )$:
\begin{equation}\label{EqnMessPass}
%  M_{sr}(x_r) \leftarrow \alpha \left[ \exp\left(-\frac{\theta_r(x_r)}{\rho_r}\right) \: \frac{\prod_{t \in \mathcal{C}(r )} M_{rt}(x_t)}{\prod_{u \in \mathcal{P}(r ) \setminus s} M_{ur}(x_r)^{\rho_u/\rho_r}} \; \sum_{x_{s \setminus r}} \exp\left(\frac{\theta_s(x_s)}{\rho_s}\right) \: \frac{\prod_{v \in \mathcal{P}(s)} M_{vs}(x_s)^{\rho_v/\rho_r}}{\prod_{w \in \mathcal{C}(s) \setminus r} M_{sw}(x_w)} \right]^{\rho_r/(\rho_r + \rho_s)}.
  M_{sr}(x_r) \leftarrow C \left[ \frac{\sum\limits_{x_{s \setminus r}} \exp\big(\theta_s(x_s)/\rho_s\big) \: \prod\limits_{v \in \mathcal{P}(s)} M_{vs}(x_s)^{\rho_v/\rho_s} \: \prod\limits_{w \in \mathcal{C}(s) \setminus r} M_{sw}(x_w)^{-1}}{\exp\big(\theta_r(x_r)/\rho_r\big) \: \prod\limits_{u \in \mathcal{P}(r ) \setminus s} M_{ur}(x_r)^{\rho_u/\rho_r} \: \prod\limits_{t \in \mathcal{C}(r )} M_{rt}(x_t)^{-1}} \right]^{\frac{\rho_r}{\rho_r + \rho_s}}.
\end{equation}
Here, $C > 0$ may be chosen to ensure a convenient normalization condition; e.g., \mbox{$\sum_{x_r} M_{sr}(x_r) = 1$.} Upon convergence of the updates~\eqref{EqnMessPass}, we compute the pseudomarginals according to
\begin{equation}\label{EqnMessPassMarg}
  \tau_r(x_r) \propto \exp\left(\frac{\theta_r(x_r)}{\rho_r}\right) \: \prod_{s \in \mathcal{P}(r )} M_{sr}(x_r)^{\rho_s/\rho_r} \: \prod_{t \in \mathcal{C}(r )} M_{rt}(x_t)^{-1},
\end{equation}
and we obtain the corresponding Kikuchi approximation by computing the objective function~\eqref{EqnBethe} with these pseudomarginals. We have the following result, which is proved in Appendix~\ref{AppThmMessPass}:
\begin{thm*}
	\label{ThmMessPass}
The pseudomarginals $\tau$ specified by the fixed points of the messages $\{M_{sr}(x_r)\}$ via the updates~\eqref{EqnMessPass} and~\eqref{EqnMessPassMarg} correspond to the stationary points of the Lagrangian associated with the Kikuchi approximation problem~\eqref{EqnBethe}.
\end{thm*}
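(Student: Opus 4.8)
The plan is to show that the fixed-point equations~\eqref{EqnMessPass} together with the marginal formula~\eqref{EqnMessPassMarg} are exactly the stationarity conditions for the constrained problem~\eqref{EqnBethe}, with the logarithms of the messages playing the role of Lagrange multipliers. First I would form the Lagrangian
\begin{equation*}
\mathcal{L}(\tau, \lambda) = \inprod{\theta}{\tau} + H(\tau;\rho) + \sum_{r \in R} \lambda_r\Big(1 - \sum_{x_r}\tau_r(x_r)\Big) + \sum_{r \in R}\sum_{s \in \mathcal{P}(r)}\sum_{x_r}\lambda_{sr}(x_r)\Big(\tau_r(x_r) - \sum_{x_{s\setminus r}}\tau_s(x_r, x_{s\setminus r})\Big),
\end{equation*}
where $\lambda_r$ enforces normalization of $\tau_r$ and $\lambda_{sr}(x_r)$ enforces the single marginalization constraint $\tau_r(x_r) = \sum_{x_{s\setminus r}}\tau_s(x_r,x_{s\setminus r})$ for each parent $s \in \mathcal{P}(r)$. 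Because $\Delta_R^K$ is cut out precisely by these normalization and parent--child consistency constraints (as recorded just before the statement of the theorem), stationary points of $B_{\theta,\rho}$ over $\Delta_R^K$ are exactly the stationary points of $\mathcal{L}$.

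Next I would differentiate $\mathcal{L}$ with respect to $\tau_r(x_r)$ and set the result to zero. Using $H(\tau;\rho) = -\sum_r \rho_r \sum_{x_r}\tau_r(x_r)\log\tau_r(x_r)$, the region $r$ contributes its own entropy and potential terms; it also appears once as a child (through each $s \in \mathcal{P}(r)$, with sign $+\lambda_{sr}(x_r)$) and once as a parent (through each $t \in \mathcal{C}(r)$, with sign $-\lambda_{rt}(x_t)$). Solving the resulting equation for $\log\tau_r(x_r)$ yields
\begin{equation*}
\log\tau_r(x_r) = \mathrm{const} + \frac{\theta_r(x_r)}{\rho_r} + \frac{1}{\rho_r}\sum_{s \in \mathcal{P}(r)}\lambda_{sr}(x_r) - \frac{1}{\rho_r}\sum_{t \in \mathcal{C}(r)}\lambda_{rt}(x_t).
\end{equation*}
Making the identification $\lambda_{sr}(x_r) = \rho_s \log M_{sr}(x_r)$ for every edge $s \to r$ and exponentiating recovers the marginal formula~\eqref{EqnMessPassMarg} exactly, with the additive constant absorbed into the proportionality (equivalently, fixed by $\lambda_r$). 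This establishes one half of the correspondence: any $\tau$ of the form~\eqref{EqnMessPassMarg} is stationary in $\tau$ for the appropriate multipliers.

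Finally I would impose the remaining feasibility (marginalization) conditions. Substituting the formula~\eqref{EqnMessPassMarg} for both $\tau_s$ and $\tau_r$ into $\tau_r(x_r) = \sum_{x_{s\setminus r}}\tau_s(x_r,x_{s\setminus r})$, I would factor $M_{sr}(x_r)^{-1}$ out of the sum over $x_{s\setminus r}$ (it depends only on $x_r$) and separate the single factor $M_{sr}(x_r)^{\rho_s/\rho_r}$ from the product over $\mathcal{P}(r)$ in $\tau_r$. Collecting all powers of $M_{sr}(x_r)$ onto one side gives $M_{sr}(x_r)^{-(\rho_r+\rho_s)/\rho_r}$ equal to the ratio of the $r$-side product to the $s$-side summation; raising both sides to the power $\rho_r/(\rho_r+\rho_s)$ reproduces the update~\eqref{EqnMessPass} verbatim, with $C$ accounting for the normalization. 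Running the computation in reverse shows that a fixed point of~\eqref{EqnMessPass} makes~\eqref{EqnMessPassMarg} satisfy every marginalization constraint, so the two directions together give the claimed correspondence between message fixed points and Lagrangian stationary points.

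I expect the main obstacle to be bookkeeping rather than conceptual: keeping the parent/child incidence consistent so that each multiplier $\lambda_{sr}$ is used with the correct sign and exponent in both of its roles (as the constraint coupling $r$ to its parent $s$, and as the constraint coupling $s$ to its child $r$), and verifying that the normalization constant and the factor $C$ can always be chosen compatibly. Care is also needed to confirm that the exponent $\rho_r/(\rho_r+\rho_s)$ arises exactly as the algebra above predicts, since this is precisely what pins down the particular form of~\eqref{EqnMessPass}.
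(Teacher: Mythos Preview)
Your proposal is correct and follows essentially the same route as the paper: form the Lagrangian with multipliers $\lambda_{sr}(x_r)$ for the parent--child consistency constraints, differentiate in $\tau_r(x_r)$ to obtain the formula that becomes~\eqref{EqnMessPassMarg} under the identification $\lambda_{sr} = \rho_s\log M_{sr}$, and then substitute into the consistency constraint and rearrange to recover~\eqref{EqnMessPass} with the exponent $\rho_r/(\rho_r+\rho_s)$. The only cosmetic difference is that the paper handles normalization via an unspecified constant rather than an explicit multiplier $\lambda_r$, which is equivalent to what you do.
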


As with the standard belief propagation and reweighted sum product algorithms, we have several options for implementing the above message passing algorithm in practice. For example, we may perform the updates~\eqref{EqnMessPass} using serial or parallel schedules. To improve the convergence of the algorithm, we may damp the updates by taking a convex combination of new and previous messages using an appropriately chosen step size. As noted by Pakzad and Anantharam~\cite{PakAna05}, we may also use a minimal graphical representation of the Hasse diagram to lower the complexity of the algorithm.

Finally, we remark that although our message passing algorithm proceeds in the same spirit as classical belief propagation algorithms by operating on the Lagrangian of the objective function, our algorithm as presented above does not immediately reduce to the generalized belief propagation algorithm for unweighted Kikuchi approximations or the reweighted sum product algorithm for tree-reweighted pairwise Bethe approximations. Previous authors use algebraic relations between the overcounting numbers~\eqref{EqnOvercounting} in the Kikuchi case~\cite{YedEtal00, YedEtal05, McEYil02, PakAna05} and the two-layer structure of the Hasse diagram in the Bethe case~\cite{WaiEtal05} to obtain a simplified form of the updates. Since the coefficients $\rho$ in our problem lack the same algebraic relations, following the message-passing protocol used in previous work~\cite{McEYil02, YedEtal00} leads to more complicated updates, so we present a slightly different algorithm that still optimizes the general reweighted Kikuchi objective.

%%%%%%%%%%%%%%%%%%%%%%%%%%%%%%%%%

\vspace{-3mm}

\section{Experiments}
\label{SecExperiments}

\vspace{-2mm}

In this section, we present empirical results to demonstrate the advantages of the reweighted Kikuchi approximation that support our theoretical results. For simplicity, we focus on the binary pairwise Ising model given in equation~\eqref{Eq:MRF}. Without loss of generality, we may take the potentials to be $\gamma_s(x_s) = \gamma_s x_s$ and $\gamma_{st}(x_s,x_t) = \gamma_{st} x_s x_t$ for some $\gamma = (\gamma_s,\gamma_{st}) \in \R^{|V|+|E|}$. We run our experiments on two types of graphs: (1) $K_n$, the complete graph on $n$ vertices, and (2) $T_n$, the $\sqrt{n} \times \sqrt{n}$ toroidal grid graph where every vertex has degree four.

\vspace{-1mm}

\paragraph{Bethe approximation.}
We consider the pairwise Bethe approximation of the log partition function $A(\gamma)$ with weights $\rho_{st} \ge 0$ and $\rho_s = 1-\sum_{t \in N(s)} \rho_{st}$. Because of the regularity structure of $K_n$ and $T_n$, we take $\rho_{st} = \rho \ge 0$ for all $(s,t) \in E$ and study the behavior of the Bethe approximation as $\rho$ varies. For this particular choice of weight vector $\vec{\rho} = \rho 1_{E}$, we define
\begin{equation*}
\vspace{-1mm}
\rho_{\text{tree}} = \max\{\rho \ge 0 \colon \vec{\rho} \in \conv(\T)\},
\qquad \text{ and } \qquad
\rho_{\text{cycle}} = \max\{\rho \ge 0 \colon \vec{\rho} \in \conv(\F)\}.
\end{equation*}
It is easily verified that for $K_n$, we have $\rho_{\text{tree}} = \frac{2}{n}$ and $\rho_{\text{cycle}} = \frac{2}{n-1}$; while for $T_n$, we have $\rho_{\text{tree}} = \frac{n-1}{2n}$ and $\rho_{\text{cycle}} = \frac{1}{2}$.
%Then:
%\begin{enumerate}
%  \item For $K_n$, which has $n$ vertices and $\binom{n}{2}$ edges, we have $\rho_{\text{tree}} = 2/n$ and $\rho_{\text{cycle}} = 2/(n-1)$.
%  \item For $T_n$, which has $n$ vertices and $2n$ edges, we have $\rho_{\text{tree}} = (n-1)/2n$ and $\rho_{\text{cycle}} = 1/2$.
%\end{enumerate}

Our results in Section~\ref{SecMain} imply that the Bethe objective function $B_{\gamma,\rho}(\tau)$ in equation~\eqref{EqnBethe} is concave if and only if $\rho \le \rho_{\text{cycle}}$, and Wainwright et al.~\cite{WaiEtal05} show that we have the bound $A(\gamma) \le B(\gamma;\rho)$ for $\rho \le \rho_{\text{tree}}$. Moreover, since the Bethe entropy may be written in terms of the edge mutual information~\eqref{EqnHBetheAlt}, the function $B(\gamma;\rho)$ is decreasing in $\rho$. In our results below, we observe that we may obtain a tighter approximation to $A(\gamma)$ by moving from the upper bound region $\rho \le \rho_{\text{tree}}$ to the concavity region $\rho \le \rho_{\text{cycle}}$. In addition, for $\rho > \rho_{\text{cycle}}$, we observe multiple local optima of $B_{\gamma,\rho}(\tau)$.

\vspace{-2mm}

\paragraph{Procedure.}
We generate a random potential $\gamma = (\gamma_s,\gamma_{st}) \in \R^{|V|+|E|}$ for the Ising model~\eqref{Eq:MRF} by sampling each potential $\{\gamma_s\}_{s \in V}$ and $\{\gamma_{st}\}_{(s,t) \in E}$ independently. We consider two types of models:
\begin{equation*}
\text{{\em Attractive}:}~~\gamma_{st} \sim \text{Uniform}[0,\omega_{st}],
\qquad\text{ and }\qquad
\text{{\em Mixed}:}~~\gamma_{st} \sim \text{Uniform}[-\omega_{st},\omega_{st}].
\end{equation*}
In each case, $\gamma_s \sim \text{Uniform}[0,\omega_s]$.
%\begin{enumerate}
%  \item For the {\em attractive} model, sample $\gamma_{st} \sim \text{Uniform}[0,\omega_{st}]$ independently for each $(s,t) \in E$.
%  \item For the {\em mixed} model, sample $\gamma_{st} \sim \text{Uniform}[-\omega_{st},\omega_{st}]$ independently for each $(s,t) \in E$.
%\end{enumerate}
%In each case, we sample the node potentials $\gamma_s \sim \text{Uniform}[0,\omega_s]$ independently.
We set $\omega_s = 0.1$ and $\omega_{st} = 2$. Intuitively, the attractive model encourages variables in adjacent nodes to assume the same value, and it has been shown~\cite{Ruo12, SudEtal07} that the ordinary Bethe approximation ($\rho_{st} = 1$) in an attractive model lower-bounds the log partition function. For $\rho \in [0,2]$, we compute stationary points of $B_{\gamma,\rho}(\tau)$ by running the reweighted sum product algorithm of Wainwright et al.~\cite{WaiEtal05}. We use a damping factor of $\lambda = 0.5$, convergence threshold of $10^{-10}$ for the average change of messages, and at most $2500$ iterations. We repeat this process with at least 8 random initializations for each value of $\rho$. Figure~\ref{fig:results} shows the scatter plots of $\rho$ and the Bethe approximation $B_{\gamma,\rho}(\tau)$. In each plot, the two vertical lines are the boundaries $\rho = \rho_{\text{tree}}$ and $\rho = \rho_{\text{cycle}}$, and the horizontal line is the value of the true log partition function $A(\gamma)$.

%%%
\begin{figure}[h!t!b!p!]
	\hfill
	\subfigure[$K_5$, mixed]{
		\includegraphics[width=.23\textwidth]{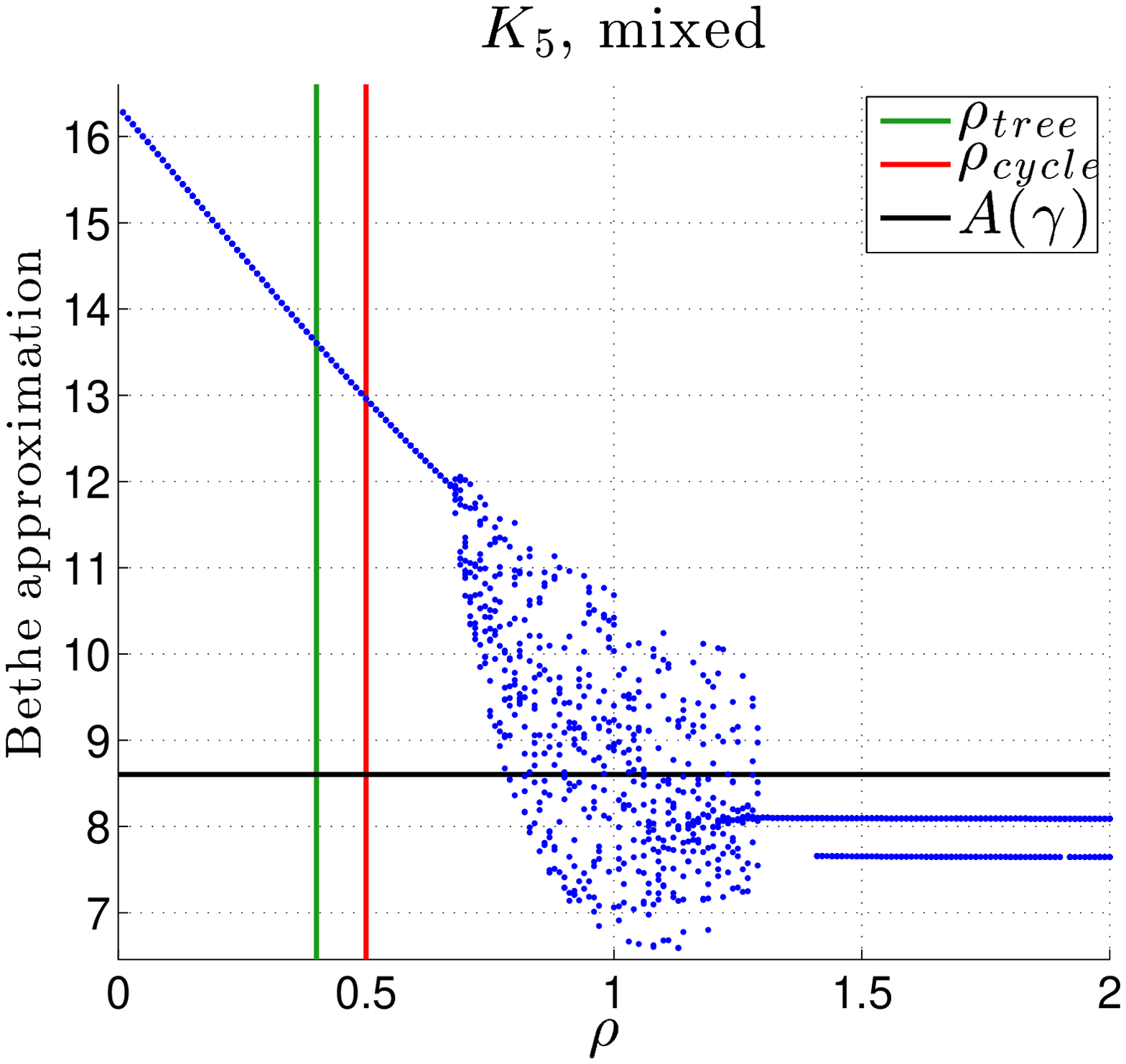}
		\label{fig:K5_mixed}
	}
	\hfill
	\subfigure[$K_5$, attractive]{
		\includegraphics[width=.23\textwidth]{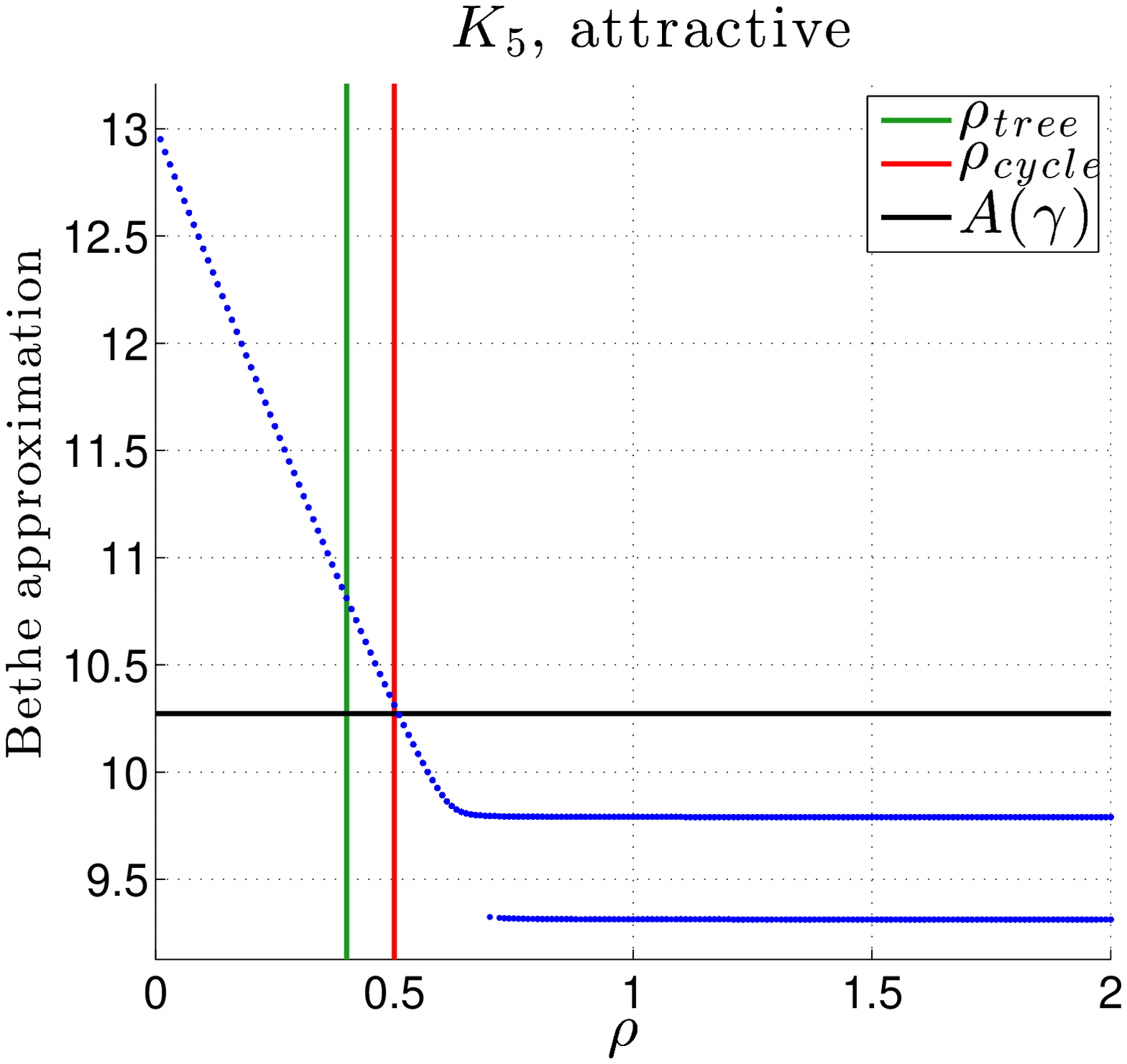}
		\label{fig:K5_att}
	}
	\hfill
	\subfigure[$T_9$, mixed]{
		\includegraphics[width=.23\textwidth]{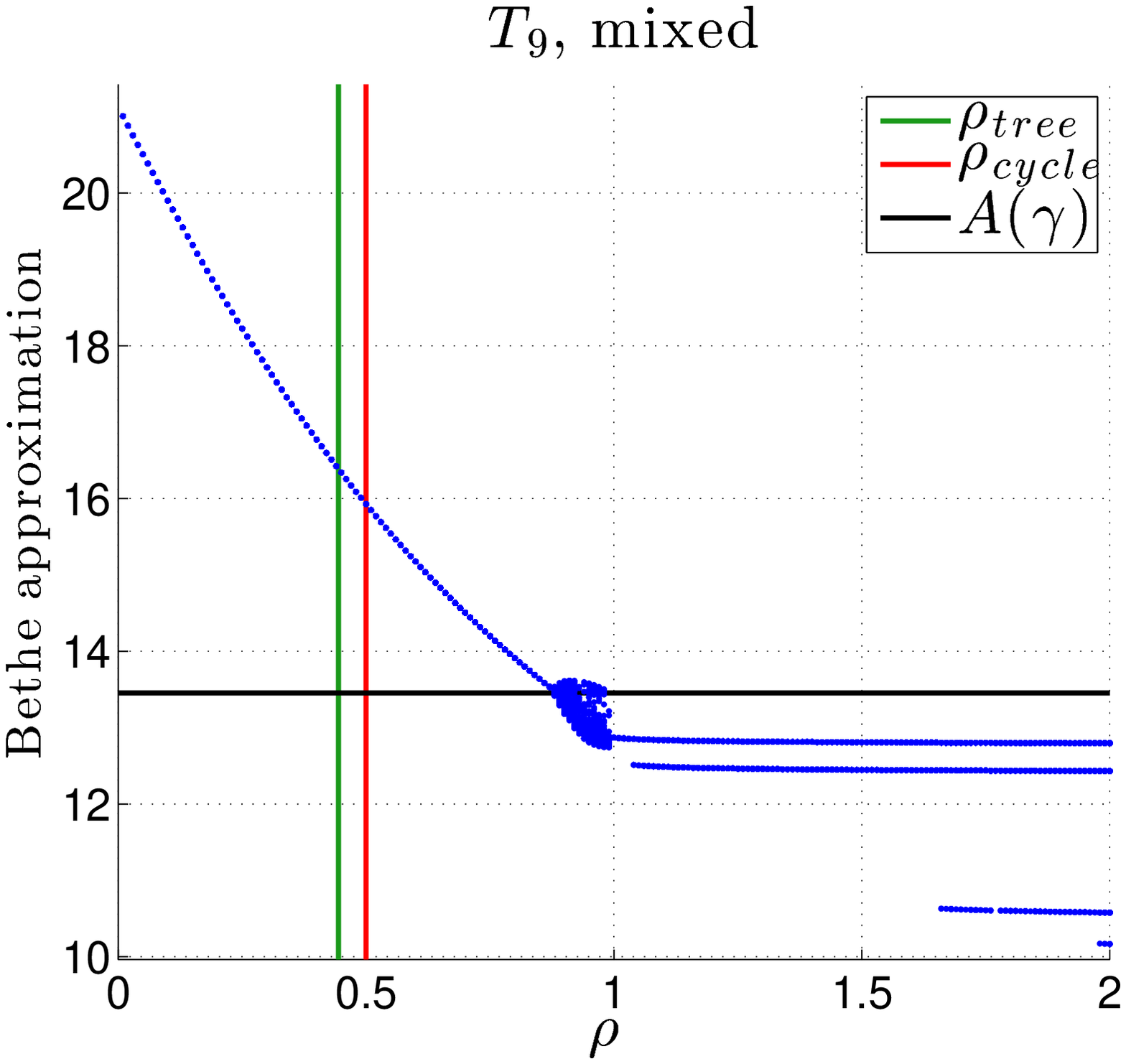}
		\label{fig:T9_mixed}
	}
	\hfill
	\subfigure[$T_9$, attractive]{
		\includegraphics[width=.23\textwidth]{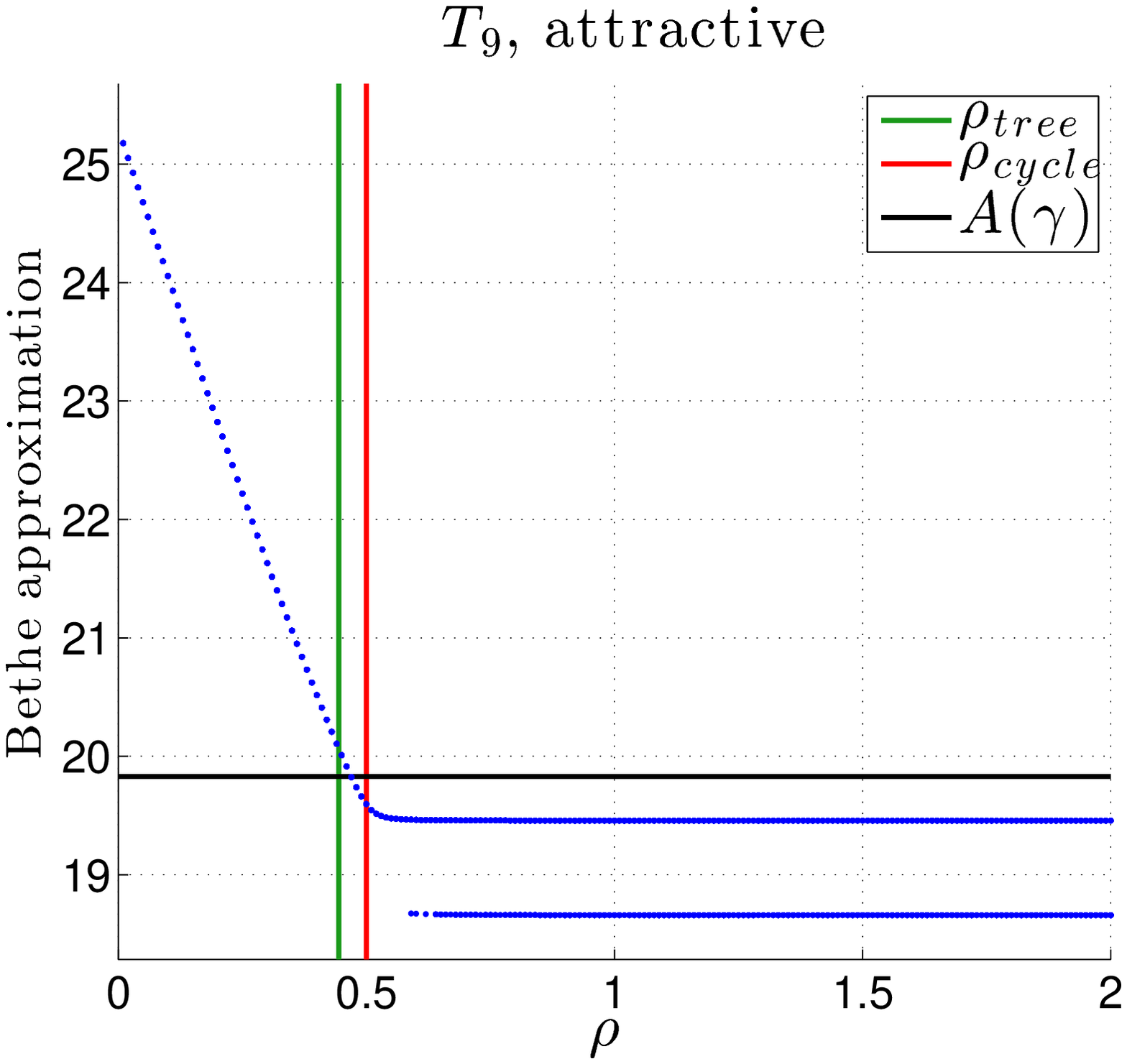}
		\label{fig:T9_att}
	}
	\\
	\hfill
	\subfigure[$K_{15}$, mixed]{
		\includegraphics[width=.23\textwidth]{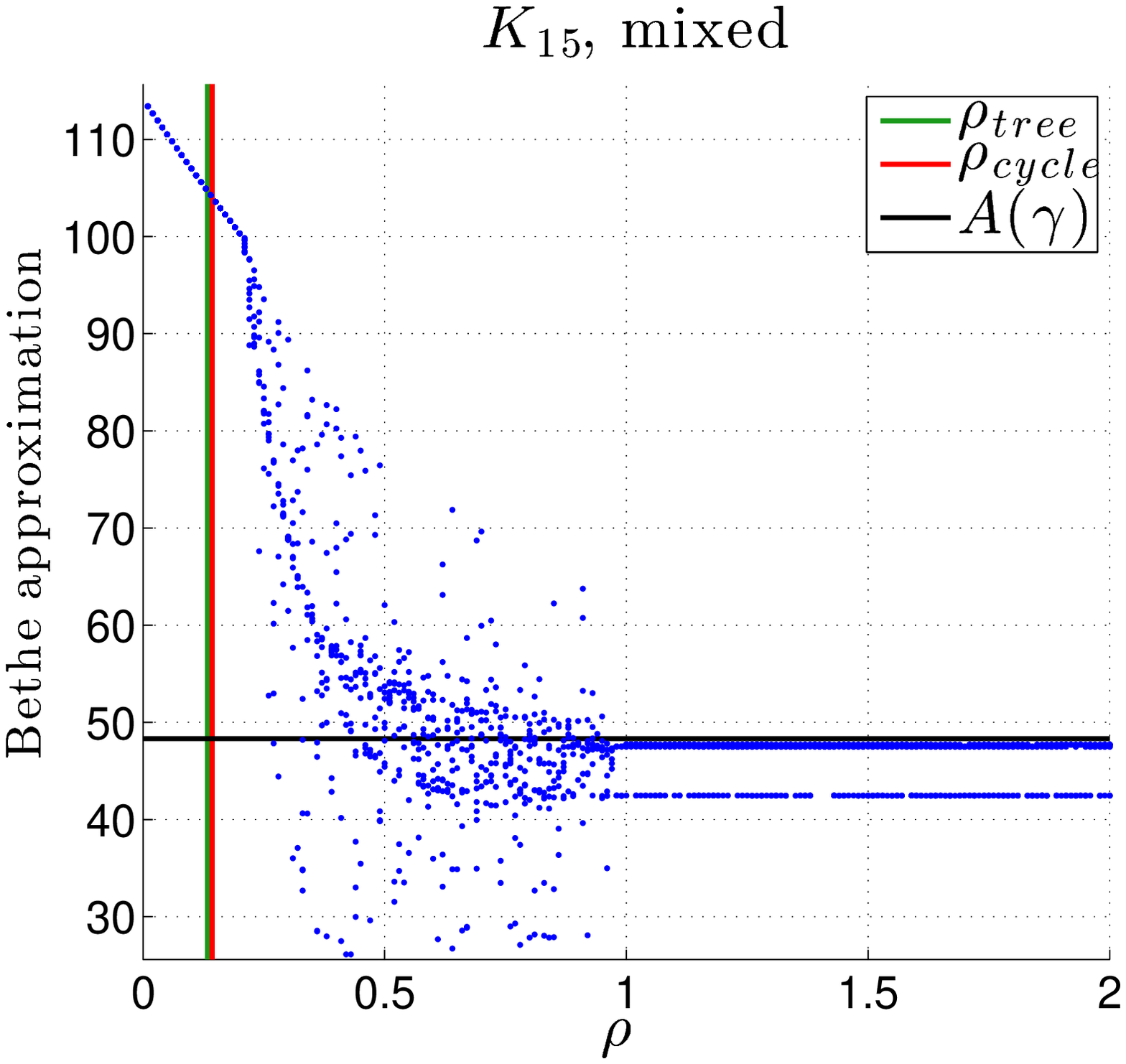}
		\label{fig:K15_mixed}
	}
	\hfill
	\subfigure[$K_{15}$, attractive]{
		\includegraphics[width=.23\textwidth]{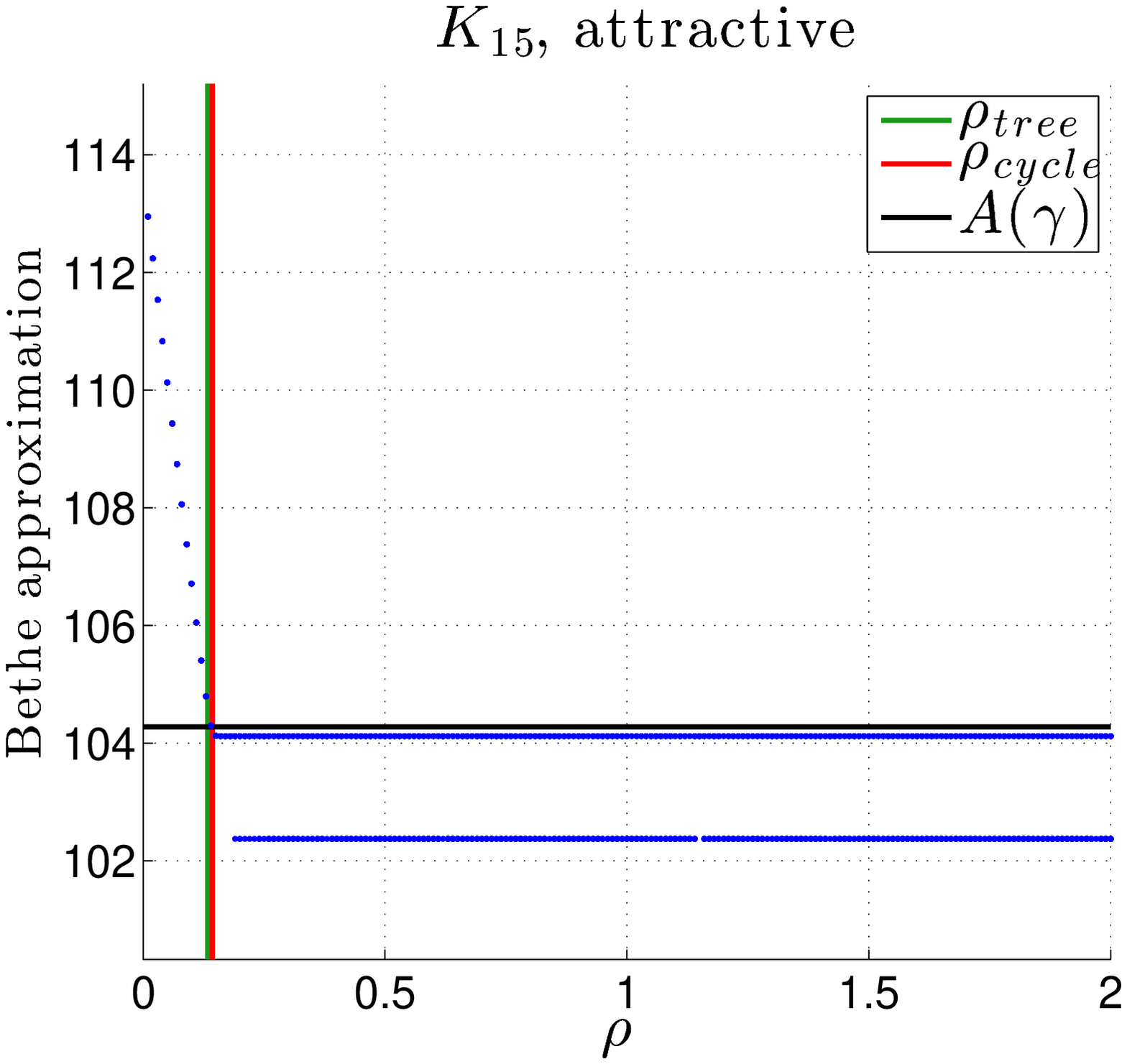}
		\label{fig:K15_att}
	}
	\hfill
	\subfigure[$T_{25}$, mixed]{
		\includegraphics[width=.23\textwidth]{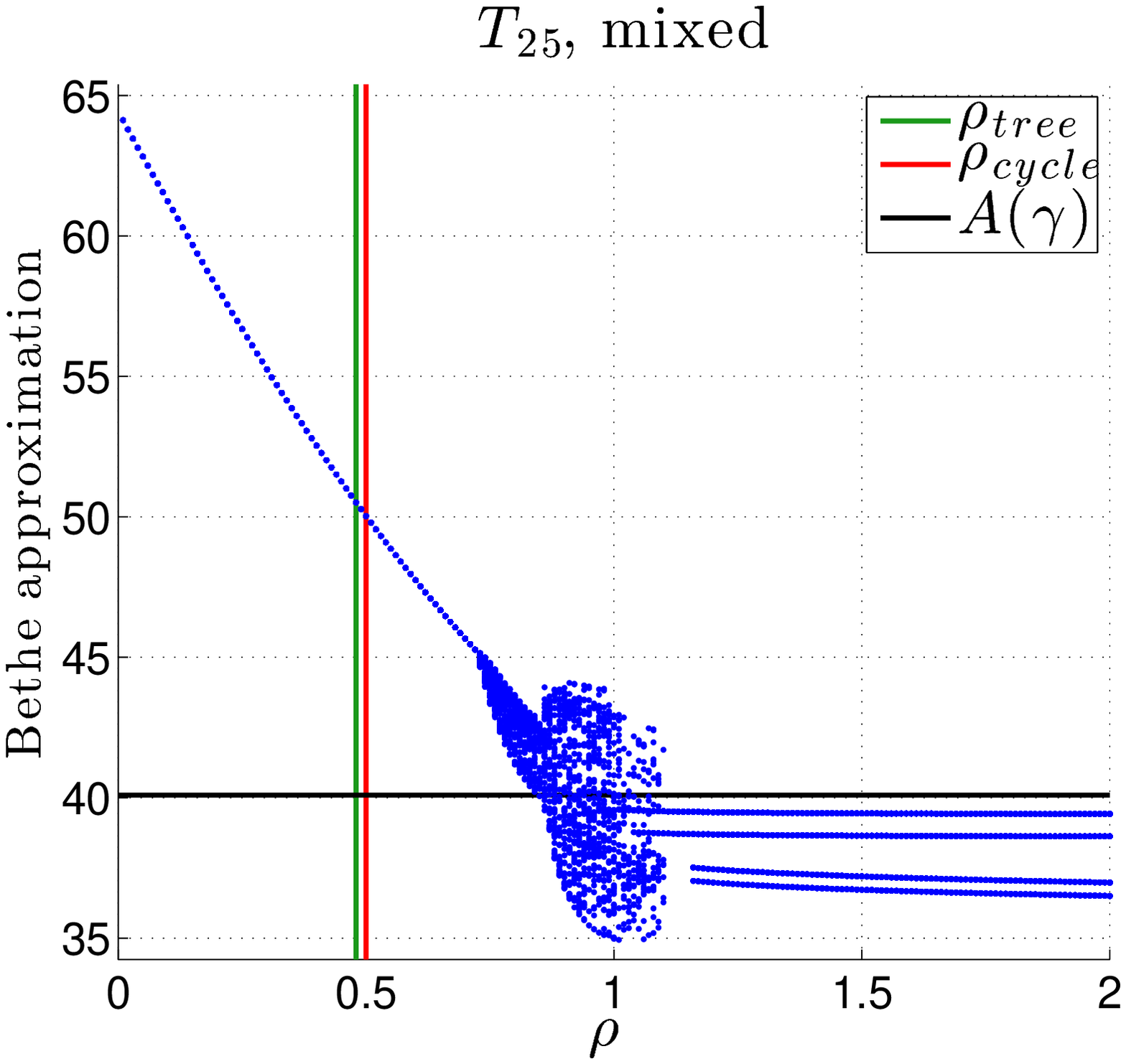}
		\label{fig:T25_mixed}
	}
	\hfill
	\subfigure[$T_{25}$, attractive]{
		\includegraphics[width=.23\textwidth]{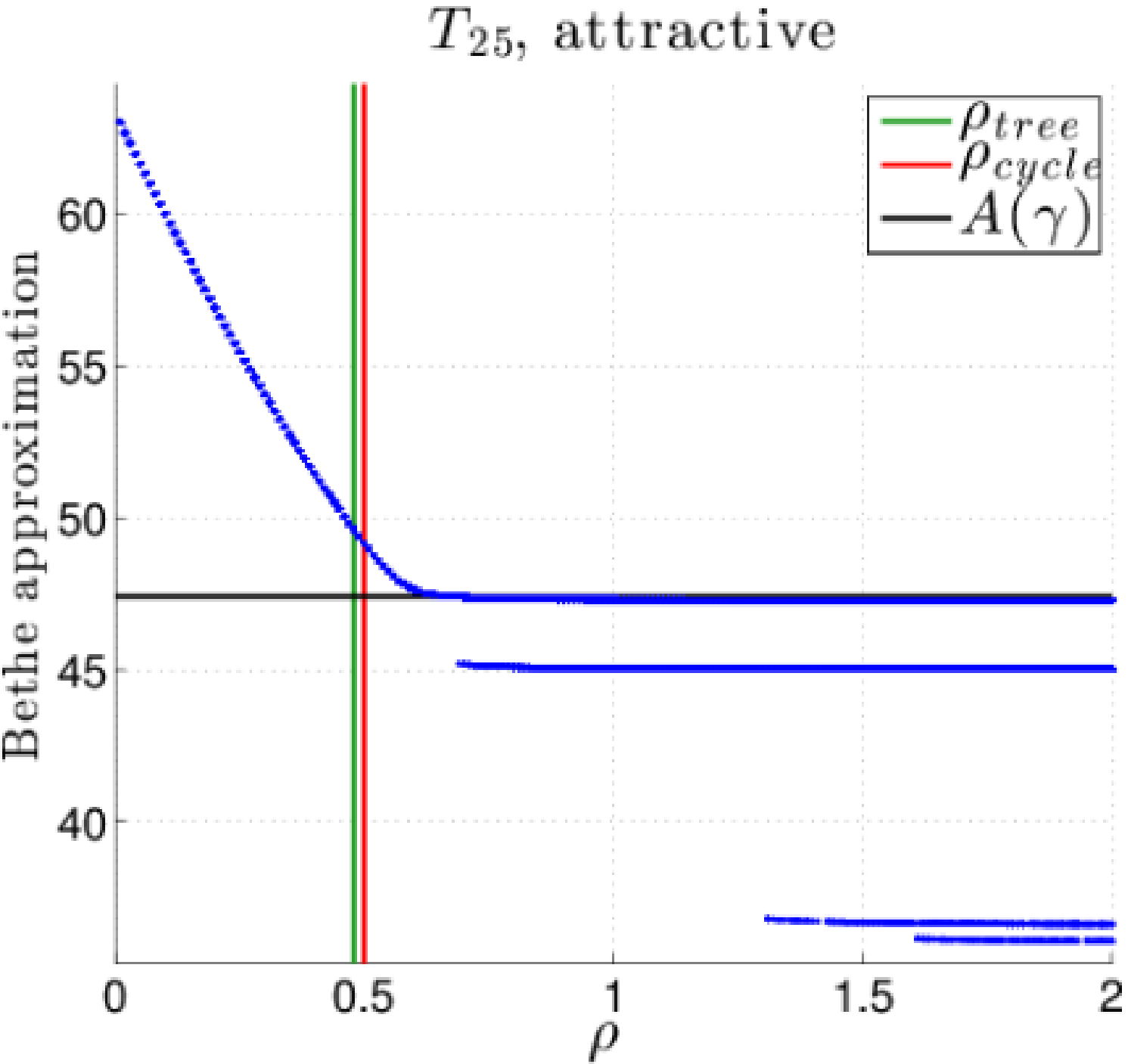}
		\label{fig:T25_att}
	}

	\caption{\label{fig:results} Values of the reweighted Bethe approximation as a function of $\rho$. See text for details.}
\end{figure}
%%%

\vspace{-1mm}

\paragraph{Results.}
Figures~\ref{fig:K5_mixed}--\ref{fig:T9_att} show the results of our experiments on small graphs ($K_5$ and $T_9$) for both attractive and mixed models. We see that the Bethe approximation with $\rho \le \rho_{\text{cycle}}$ generally provides a better approximation to $A(\gamma)$ than the Bethe approximation computed over $\rho \le \rho_{\text{tree}}$. However, in general we cannot guarantee whether $B(\gamma;\rho)$ will give an upper or lower bound for $A(\gamma)$ when $\rho \le \rho_{\text{cycle}}$. As noted above, we have $B(\gamma;1) \le A(\gamma)$ for attractive models. 

We also observe from Figures~\ref{fig:K5_mixed}--\ref{fig:T9_att} that shortly after $\rho$ leaves the concavity region $\{\rho \le \rho_{\text{cycle}}\}$, multiple local optima emerge for the Bethe objective function. The presence of the point clouds near $\rho = 1$ in Figures~\ref{fig:K5_mixed} and~\ref{fig:T9_mixed} arises because the sum product algorithm has not converged after $2500$ iterations. Indeed, the same phenomenon is true for all our results: in the region where multiple local optima begin to appear, it is more difficult for the algorithm to converge. See Figure~\ref{fig:results_detail} and the accompanying text in Appendix~\ref{AppExperiments} for a plot of the points $(\rho, \log_{10}(\Delta))$, where $\Delta$ is the final average change in the messages at termination of the algorithm. From Figure~\ref{fig:results_detail}, we see that the values of $\Delta$ are significantly higher for the values of $\rho$ near where multiple local optima emerge. We suspect that for these values of $\rho$, the sum product algorithm fails to converge since distinct local optima are close together, so messages oscillate between the optima. For larger values of $\rho$, the local optima become sufficiently separated and the algorithm converges to one of them. However, it is interesting to note that this point cloud phenomenon does not appear for attractive models, despite the presence of distinct local optima.

Simulations for larger graphs are shown in Figures~\ref{fig:K15_mixed}--\ref{fig:T25_att}. If we zoom into the region near $\rho \le \rho_{\text{cycle}}$, we still observe the same behavior that $\rho \le \rho_{\text{cycle}}$ generally provides a better Bethe approximation than $\rho \le \rho_{\text{tree}}$. Moreover, the presence of the point clouds and multiple local optima are more pronounced, and we see from Figures~\ref{fig:T9_mixed},~\ref{fig:T25_mixed}, and~\ref{fig:T25_att} that new local optima with even worse Bethe values arise for larger values of $\rho$. Finally, we note that the same qualitative behavior also occurs in all the other graphs that we have tried ($K_n$ for $n \in \{5,10,15,20,25\}$ and $T_n$ for $n \in \{9,16,25,36,49,64\}$), with multiple random instances of the Ising model $p_\gamma$.

% Observations:
%  * sum product in attractive model converges faster. Bethe approximation also stabilizes more quickly after the boundary of single cycle polytope. Also demonstrates clearly that there are multiple local optima.

%%%%%%%%%%%%%%%%%%%%%%%%%%%%%%%%%

\vspace{-2mm}

\section{Discussion}
\label{SecDiscussion}

\vspace{-2mm}

In this paper, we have analyzed the reweighted Kikuchi approximation method for estimating the log partition function of a distribution that factorizes over a region graph. We have characterized necessary and sufficient conditions for the concavity of the variational objective function, generalizing existing results in literature. Our simulations demonstrate the advantages of using the reweighted Kikuchi approximation and show that multiple local optima may appear outside the region of concavity.

An interesting future research direction is to obtain a better understanding of the approximation guarantees of the reweighted Bethe and Kikuchi methods. In the Bethe case with attractive potentials $\theta$, several recent results~\cite{WaiEtal05,SudEtal07,Ruo12} establish that the Bethe approximation $B(\theta;\rho)$ is an upper bound to the log partition function $A(\theta)$ when $\rho$ lies in the spanning tree polytope, whereas $B(\theta;\rho) \le A(\theta)$ when $\rho = 1_{F}$. By continuity, we must have $B(\theta;\rho^*) = A(\theta)$ for some values of $\rho^*$, and it would be interesting to characterize such values where the reweighted Bethe approximation is exact.

Another interesting direction is to extend our theoretical results on properties of the reweighted Kikuchi approximation, which currently depend solely on the structure of the region graph and the weights $\rho$, to incorporate the effect of the model potentials $\theta$. For example, several authors~\cite{TatJor02,Hes04} present conditions under which loopy belief propagation applied to the unweighted Bethe approximation has a unique fixed point. The conditions for uniqueness of fixed points slightly generalize the conditions for convexity, and they involve both the graph structure and the strength of the potentials. We suspect that similar results would hold for the reweighted Kikuchi approximation.

\vspace{-1mm}

\paragraph{Acknowledgments.} The authors thank Martin Wainwright for introducing the problem to them and providing helpful guidance. The authors also thank Varun Jog for discussions regarding the generalization of Hall's lemma. The authors thank the anonymous reviewers for feedback that improved the clarity of the paper. PL was partly supported from a Hertz Foundation Fellowship and an NSF Graduate Research Fellowship while at Berkeley.

%%%%%%%%%%%%%%%%%%%%%%

\newpage

{\small
\bibliography{refs.bib}
}

\newpage

%%%%%%%%%%%%%%%%%%%%%

%%%%%%%%%%%%%%%%%

\appendix

%%%%%%%%%%%%%%

\section{Proofs for Section~\ref{SecSuff}}
\label{AppSuff}

\subsection{Proof of Theorem~\ref{ThmSuff}}
\label{AppThmSuff}

We use the proof technique of Theorem 1 in Pakzad and Anantharam~\cite{PakAna05} for the unweighted Bethe entropy, together with Lemma~\ref{LemHall} in Appendix~\ref{AppHall}, which provides a generalization of Hall's marriage lemma for weighted bipartite graphs.

We construct a bipartite graph according to
	\begin{equation*}
		V_1 \defn \{r \in R\colon \rho_r < 0\}, \quad \text{and} \quad V_2 \defn \{r \in R\colon \rho_r > 0\},
	\end{equation*}
where $(s,t) \in E$ for $s \in V_1$ and $t \in V_2$ when $s \subset t$. Let weights $w$ be defined such that $w(s) = - \rho_s$ for $s \in V_1$ and $w(s) = \rho_s$ for $s \in V_2$. We claim that condition~\eqref{EqnHall} of Lemma~\ref{LemHall} is satisfied. Indeed, for $U \subseteq V_1$, we have
\begin{equation*}
	w(U) = - \sum_{s \in U} \rho_s \le \sum_{s \in A(U)} \rho_s = \sum_{s \in A(U): \rho_s > 0} \rho_s + \sum_{s \in A(U): \rho_s < 0} \rho_s \le \sum_{s \in A(U): \rho_s > 0} \rho_s = w(N(U)),
\end{equation*}
where the first inequality is a direct application of the assumption~\eqref{EqnSuffCond}. Hence, by Lemma~\ref{LemHall}, we have a saturating edge labeling $\gamma$.

For each $t \in V_2$, define
\begin{equation*}
	\rho_t' \defn \rho_t - \sum_{s \in N(t)} \gamma_{st} \ge 0.
\end{equation*}
We may then write
\begin{align}
	\label{EqnKikuchiExpand}
	H(\tau; \rho) & = \sum_{s \in V_1} \rho_s H_s(\tau_s) + \sum_{t \in V_2} \rho_t H_t(\tau_t) \notag \\
	& = \sum_{(s,t) \in E} \gamma_{st} \left\{- H_s(\tau_s) + H_t(\tau_t)\right\} + \sum_{t \in V_2} \rho_t' H_t(\tau_t) \notag \\
	& = \sum_{(s,t) \in E} \gamma_{st} \left\{\sum_{x_s} \tau_s(x_s) \log \tau_s(x_s) - \sum_{x_t} \tau_t(x_t) \log \tau_t(x_t)\right\} + \sum_{t \in V_2} \rho_t' H_t(\tau_t) \notag \\
	& = \sum_{(s,t) \in E} \gamma_{st} \sum_{x_t} \tau_t(x_t) \log \left(\frac{\tau_s(x_s)}{\tau_t(x_t)}\right) + \sum_{t \in V_2} \rho_t' H_t(\tau_t),
\end{align}
where we have used the fact that $\sum_{x_{t \backslash s}} \tau_t(x_s, x_{t \backslash s}) = \tau_s(x_s)$, since $\tau \in \Delta_R^K$, to obtain the last equality.

Note that for each pair $(s,t)$, we have
\begin{equation*}
	\sum_{x_t} \tau_t(x_t) \log \left(\frac{\tau_s(x_s)}{\tau_t(x_t)}\right) = - D_{\text{KL}} (\tau_t \| \tau_s),
\end{equation*}
which is strictly concave in the pair $(\tau_t, \tau_s)$. Furthermore, each term $H_t(\tau_t)$ is concave in $\tau_t$. It follows by the expansion~\eqref{EqnKikuchiExpand} that $H(\tau; \rho)$ is strictly concave, as wanted.

\subsection{Generalization of Hall's marriage lemma}
\label{AppHall}

In this section, we prove a generalization of Hall's marriage lemma, which is useful in proving concavity of the Bethe entropy function $H(\tau; \rho)$.

Let $G = (V_1 \cup V_2, E)$ be a bipartite graph, where each vertex $v \in V \defn V_1 \cup V_2$ is assigned a weight $w(v) > 0$. For a set $U \subseteq V$, define
\begin{equation*}
	w(U) \defn \sum_{s \in U} w(s).
\end{equation*}
Also define the neighborhood set
\begin{equation*}
	N(U) \defn \bigcup_{s \in U} N(s),
\end{equation*}
where $N(s) \defn \{t\colon (s,t) \in E\}$ is the usual neighborhood set of a single node.

We say that an edge labeling $\gamma = (\gamma_{st}\colon (s,t) \in E) \in \real^{|E|}_{\ge 0}$ \emph{saturates} $V_1$ if the following conditions hold:
\begin{enumerate}
	\item For all $s \in V_1$, we have $\sum_{t \in N(s)} \gamma_{st} = w(s)$.
	\item For all $t \in V_2$, we have $\sum_{s \in N(t)} \gamma_{st} \le w(t)$.
\end{enumerate}

\begin{lem*}
	\label{LemHall}
	Suppose
	\begin{equation}
		\label{EqnHall}
		w(U) \le w(N(U)), \quad \forall U \subseteq V_1.
	\end{equation}
	Then there exists an edge labeling $\gamma$ that saturates $V_1$.
\end{lem*}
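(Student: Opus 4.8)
The plan is to recognize the existence of a saturating labeling as a feasibility question for a flow problem and to establish feasibility via the max-flow min-cut theorem, with the Hall-type hypothesis~\eqref{EqnHall} furnishing exactly the min-cut bound we need. This converts the combinatorial statement into a transparent capacity computation.

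First I would build a directed network on the node set $\{s^*\} \cup V_1 \cup V_2 \cup \{t^*\}$, where $s^*$ is a source and $t^*$ a sink. I add an arc $s^* \to s$ of capacity $w(s)$ for each $s \in V_1$, an arc $s \to t$ of capacity $+\infty$ for each edge $(s,t) \in E$, and an arc $t \to t^*$ of capacity $w(t)$ for each $t \in V_2$. A nonnegative flow $\gamma$ on the middle arcs that routes a total of $w(V_1) \defn \sum_{s \in V_1} w(s)$ units from $s^*$ to $t^*$ corresponds precisely to a saturating edge labeling: saturating all source arcs forces $\sum_{t \in N(s)} \gamma_{st} = w(s)$ for every $s \in V_1$ (condition 1), while the sink-arc capacities together with flow conservation at each $t \in V_2$ force $\sum_{s \in N(t)} \gamma_{st} \le w(t)$ (condition 2). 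Thus it suffices to show that the maximum flow value equals $w(V_1)$.

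Next I would analyze the cuts. Any cut of finite capacity cannot sever an infinite-capacity arc, so if $A \subseteq V_1$ denotes the $V_1$-vertices on the source side, then all of $N(A)$ must lie on the source side as well. Cutting the remaining source arcs contributes $w(V_1) - w(A)$, and cutting the sink arcs of the vertices on the source side contributes at least $w(N(A))$, so every finite cut has capacity at least $w(V_1) - w(A) + w(N(A))$. By hypothesis~\eqref{EqnHall} we have $w(A) \le w(N(A))$, whence this quantity is at least $w(V_1)$; since the trivial cut (all source arcs) achieves capacity $w(V_1)$, the minimum cut equals $w(V_1)$. By max-flow min-cut there is a flow of value $w(V_1)$, which necessarily saturates every source arc and yields the desired labeling $\gamma$.

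The main obstacle is the cut analysis: I must verify carefully that restricting to finite cuts legitimately forces $N(A)$ onto the source side and that minimizing over the $V_2$-side choice returns exactly $w(N(A))$, so that~\eqref{EqnHall} applies verbatim. A secondary point is that a flow of value $w(V_1)$ saturates all source arcs (immediate from the capacity budget) and that integrality plays no role here, since weights and flows are real-valued and the continuous max-flow min-cut theorem suffices. An entirely equivalent route would bypass flows and argue by LP duality (Farkas' lemma): infeasibility of the linear system defining a saturating labeling would produce a separating set $U \subseteq V_1$ violating~\eqref{EqnHall}. I expect the flow formulation to be cleaner, as it makes the role of the Hall condition most transparent.
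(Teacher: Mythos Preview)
Your proof is correct and follows a genuinely different route from the paper's. The paper proves the lemma in three stages: first for rational weights by scaling to integers, blowing up each vertex $v$ into $w(v)$ copies, and invoking the classical Hall marriage theorem to obtain an integer matching that collapses to the desired labeling; then for real weights under the \emph{strict} inequality $w(U) < w(N(U))$ by perturbing to nearby rational weights; and finally for the general case by a monotone approximation of $b_1 = (w(s))_{s\in V_1}$ from below combined with a compactness argument to extract a limit labeling. Your max-flow min-cut argument handles real weights in a single stroke and makes the role of condition~\eqref{EqnHall} completely transparent as the min-cut inequality; it is shorter and avoids the discretization and limiting machinery entirely. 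The trade-off is that you invoke the continuous max-flow min-cut theorem (equivalently LP strong duality) as a black box, whereas the paper's proof is more self-contained, relying only on the unweighted Hall theorem plus elementary analysis. Your cut analysis is sound: finite-capacity cuts force $N(A)\subseteq B$ for $A=V_1\cap S$ and $B=V_2\cap S$, and minimizing over $B$ gives capacity $w(V_1)-w(A)+w(N(A))\ge w(V_1)$; a flow of value $w(V_1)$ must saturate every source arc since that is the total source capacity, and flow conservation at each $t\in V_2$ yields the sink inequality. The only cosmetic point worth adding is that the maximum flow is attained (the feasible flow polytope is compact because conservation at each $s\in V_1$ bounds every $\gamma_{st}$ by $w(s)$), so the supremum is achieved by an actual labeling.
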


\begin{proof}

We prove the lemma in stages. First, assume $w(v) \in \Q$ for all $v \in V$ and condition~\eqref{EqnHall} holds. With an appropriate rescaling, we may assume that all weights are integers. Call the new weights $w'$. We then construct a graph $G'$ such that each node $v \in V$ is expanded into a set $U_v$ of $w'(v)$ nodes, and edges of $G'$ are constructed by connecting all nodes in $U_s$ to all nodes in $U_t$, for each $(s,t) \in E$. By the usual version of Hall's marriage lemma~\cite{Hal35}, there exists a matching of $G'$ that saturates $V_1' \defn \bigcup_{v \in V_1} U_v$. Indeed, it follows immediately from condition~\eqref{EqnHall} that
\begin{equation*}
	w'(U) \le w'(N(U)), \qquad \forall U \subseteq V_1.
\end{equation*}
Suppose $T' \subseteq V_1'$, and let $T \defn \{s \in V_1\colon U_s \cap T' \neq \emptyset\}$. Then
\begin{equation*}
	|T'| \le \left|\bigcup_{s \in T} U_s\right| = w'(T) \le w'(N(T)) = |N(T')|,
\end{equation*}
so the sufficient condition of Hall's marriage lemma is met, implying the existence of a matching. The edge labeling $\gamma$ is obtained by setting
\begin{equation*}
	\gamma_{st} = \{\# \text{ of edges between } U_s \text{ and } U_t \text{ in matching}\}
\end{equation*}
and rescaling.

Next, suppose $w(v) \in \real$ for all $v \in V$ and condition~\eqref{EqnHall} holds with \emph{strict} inequality; i.e.,
\begin{equation}
	\label{EqnHallStrict}
	w(U) < w(N(U)), \qquad \forall U \subseteq V_1.
\end{equation}
We claim that there exists an edge labeling $\gamma$ that saturates $V_1$. Indeed, let
\begin{equation*}
	\epsilon \defn \min_{U \subseteq V_1} \{w(N(U)) - w(U)\} > 0.
\end{equation*}
Define a new weighting $w'$ with only rational values, such that
\begin{align*}
	w'(s) & \in \left[w(s), \quad w(s) + \frac{\epsilon}{2 \cdot \deg(G)}\right), \qquad \forall s \in V_1, \\
	w'(t) & \in \left(w(t) - \frac{\epsilon}{2 \cdot \deg(G)}, \quad w(t)\right], \qquad \forall t \in V_2,
\end{align*}
where $\deg(G) = |E|$ is the number of edges in $G$. It is clear that Hall's condition~\eqref{EqnHall} still holds for $w'$. Hence, by the result of the last paragraph, there exists an edge labeling $\gamma'$ that saturates $V_1$ with respect to $w'$. Observe that by decreasing the weights of $\gamma'$ slightly, we easily obtain an edge labeling $\gamma$ that saturates $V_1$ with respect to the original weighting $w$.

Finally, consider the most general case: condition~\eqref{EqnHall} holds and $w(v) \in \real$ for all $v \in V$. Note that the problem of finding an edge labeling that saturates $V_1$ may be rephrased as follows. Let $b_1 \in \real^{|V_1|}$ be the vector of weights $(w(s)\colon s \in V_1)$. Then for an appropriate choice of the matrix $A_1 \in \{0,1\}^{|V_1| \times |E|}$, the conditions
\begin{equation*}
	\sum_{t \in N(s)} \gamma_{st} = w(s), \qquad \forall s \in V_1,
\end{equation*}
may be expressed as a system of linear equations,
\begin{equation}
	\label{EqnLinEq}
	A_1 \gamma = b_1.
\end{equation}
Similarly, letting $b_2 = (w(t)\colon t \in V_2) \in \real^{|V_2|}$, the conditions
\begin{equation*}
	\sum_{s \in N(t)} \gamma_{st} \le w(t), \qquad \forall t \in V_2,
\end{equation*}
may be expressed in the form
\begin{equation}
	\label{EqnLinIneq}
	A_2 \gamma \le b_2,
\end{equation}
where $A_2 \in \{0,1\}^{|V_2| \times |E|}$. A saturating edge labeling exists if and only if there exists $\gamma \in \real_{\ge 0}^{|E|}$ that simultaneously satisfies conditions~\eqref{EqnLinEq} and~\eqref{EqnLinIneq}. Now consider a sequence of weight vectors $\{b_1^n\}_{n \ge 1}$, such that $b_1^n \rightarrow b_1$ and the convergence is from below and strictly monotone for each component. Let $w^n = (b_1^n, b_2)$ denote the full sequence of weights. Then
\begin{equation*}
	w^n(U) < w(U) \le w(N(U)) = w^n(N(U)), \qquad \forall U \subseteq V.
\end{equation*}
It follows by the result of the previous paragraph that there exists an edge labeling $\gamma^n \in \real^{|E|}_{\ge 0}$ such that
\begin{equation*}
	A_1 \gamma^n = b_1^n, \qquad \text{and} \qquad \gamma^n \in D \defn \Big\{\gamma \in \real^{|E|}_{\ge 0} \colon A_2 \gamma \le b_2\Big\}.
\end{equation*}
Clearly, $D$ is a closed set; furthermore, it is easy to see that the constraint $A_2 \gamma \le b_2$ implies that each component of $\gamma$ is bounded from above, since $A_2$ contains only nonnegative entries. It follows that the sequence $\{\gamma_n\}_{n \ge 1}$ has a limit point $\gamma^* \in D$. By continuity of the linear map $A_1$, we must have
\begin{equation*}
	A_1 \gamma^* = \lim_{n \rightarrow \infty} A_1 \gamma_n = \lim_{n \rightarrow \infty} b_1^n = b_1.
\end{equation*}
Hence, $\gamma^*$ is a valid edge labeling that saturates $V_1$.

\end{proof}

\subsection{Proof of Corollary~\ref{CorBethe}}
\label{AppBethe}
	
By Theorem~\ref{ThmSuff}, $H(\tau; \rho)$ is strictly concave provided condition~\eqref{EqnSuffCond} holds. Note that
\begin{equation*}
	\scriptF(\alpha) = \{\alpha\}, \qquad \forall \alpha \in F,
\end{equation*}
whereas
\begin{equation*}
	\scriptF(s) = \{s\} \cup N(s), \qquad \forall s \in V.
\end{equation*}

Condition~\eqref{EqnSuffCond} applied to the set $S = \{\alpha\}$ gives the inequality
\begin{equation}
	\label{EqnRhoCond}
	\rho_\alpha \ge 0, \qquad \forall \alpha \in F.
\end{equation}
For a subset $U \subseteq V$, we can write
\begin{equation*}
    \scriptF(U) = \bigcup_{s \in U} \scriptF(s) = U \cup N(U) = U \cup \{\alpha \in F \colon \alpha \cap U \neq \emptyset\},
\end{equation*}
so~\eqref{EqnSuffCond} translates into
\begin{equation}
	\label{EqnSubset}
	\sum_{s \in U} \rho_s + \sum_{\alpha \in F \colon \alpha \cap U \neq \emptyset} \rho_\alpha \ge 0, \qquad \forall U \subseteq V,
\end{equation}
which is condition~\eqref{EqnGraphCond}. It is easy to see that conditions~\eqref{EqnRhoCond} and~\eqref{EqnSubset} together also imply the validity of condition~\eqref{EqnSuffCond} for any other set of regions $S \subseteq R$.
%Further note that
%\begin{equation*}
%	\sum_{\alpha \in N(U)} \rho_\alpha = \sum_{s \in U} \sum_{\alpha \in N(s)} \rho_\alpha - \sum_{\alpha \in F: \alpha \cap U \neq \emptyset} \left(|\alpha \cap U| - 1\right) \rho_\alpha,
%\end{equation*}
%so condition~\eqref{EqnSubset} may be rewritten as
%\begin{equation*}
%	\sum_{s \in U} \rho_s + \sum_{s \in U} \sum_{\alpha \in N(s)} \rho_\alpha - \sum_{\alpha \in F: \alpha \cap U \neq \emptyset} \left(|\alpha \cap U| - 1\right)
%	\rho_\alpha \ge 0.
%\end{equation*}
%Rearranging gives condition~\eqref{EqnGraphCond}.

%%%%%%%%%%%

\section{Proofs for Section~\ref{SecNecessary}}
\label{AppNecessary}

\subsection{Proof of Theorem~\ref{ThmBetheConvexGeneral}}
\label{AppBetheConvexGeneral}

Our result relies on the property that if the Bethe entropy $H(\tau;\rho)$ is concave over $\Delta_R^K$, then $H(\tau;\rho)$ is also concave over any subset $\Delta' \subseteq \Delta_R^K$. In particular, it is sufficient to assume that $\X$ is binary, say $\X = \{-1,+1\}$; the general multinomial case $|\X| > 2$ follows by restricting the distribution of $X_s$ to be supported on only two points.

The first lemma shows that $\rho_\alpha \ge 0$ for all $\alpha \in F$. The proof is contained in Appendix~\ref{AppPosFactor}.
\begin{lem*}
	\label{LemPosFactor}
	If the Bethe entropy $H(\tau; \rho)$ is concave over $\Delta^K_R$, then $\rho_\alpha \ge 0$ for all $\alpha \in F$.
\end{lem*}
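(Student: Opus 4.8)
The plan is to prove the lemma directly by exhibiting, for each fixed factor $\alpha \in F$, a convex slice of $\Delta_R^K$ along which the Bethe entropy reduces, up to an additive constant, to $\rho_\alpha$ times a \emph{strictly} concave function. Concavity of $H(\tau;\rho)$ restricted to this slice will then force $\rho_\alpha \ge 0$, and since $\alpha$ is arbitrary this yields the claim. As the excerpt already observes, it suffices to work with the binary alphabet $\X = \{-1,+1\}$: concavity over $\Delta_R^K$ is inherited by every convex subset, in particular by the sub-polytope on which each $X_s$ is supported on two points.

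First I would construct the slice. Hold every node marginal $\tau_s$ ($s \in V$) fixed at the uniform distribution $\tau_s(\pm 1) = \tfrac12$, hold every \emph{other} factor marginal $\tau_\beta$ ($\beta \in F \setminus \{\alpha\}$) fixed at the uniform product distribution $\tau_\beta \equiv 2^{-|\beta|}$, and let $\tau_\alpha$ range over the polytope $P_\alpha$ of distributions on $\X^\alpha$ all of whose singleton marginals are uniform. The crucial point is that this family genuinely lies in $\Delta_R^K$: in the two-layer Bethe case the only local-consistency constraints are the factor-to-node marginalizations $\sum_{x_{\beta \setminus s}} \tau_\beta(x_s, x_{\beta \setminus s}) = \tau_s(x_s)$ for $s \in \beta$, and there are \emph{no} direct factor-to-factor constraints; by construction each $\tau_\beta$ (and each $\tau_\alpha \in P_\alpha$) marginalizes to the uniform node distributions, so even overlapping factors are mutually consistent. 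The slice is convex, being the product of the fixed coordinates with the convex polytope $P_\alpha$.

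Next I would read off the objective on the slice. Because every $H_s(\tau_s)$ and $H_\beta(\tau_\beta)$ is held constant, we have $H(\tau;\rho) = C + \rho_\alpha H_\alpha(\tau_\alpha)$ for a constant $C$ depending only on the fixed data. The entropy $H_\alpha$ is strictly concave on the full simplex over $\X^\alpha$, and since the affine hull of $P_\alpha$ has dimension $2^{|\alpha|} - 1 - |\alpha|$, which is at least $1$ for every genuine hyperedge (i.e.\ $|\alpha| \ge 2$), the restriction of $H_\alpha$ to $P_\alpha$ is still strictly concave. Consequently $\rho_\alpha H_\alpha$ is concave on $P_\alpha$ if and only if $\rho_\alpha \ge 0$. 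Concavity of $H(\tau;\rho)$ over $\Delta_R^K$ forces concavity over the slice, hence $\rho_\alpha \ge 0$.

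The main obstacle — indeed the only nontrivial point — is arranging the slice so that it simultaneously (i) lies inside $\Delta_R^K$ and (ii) has positive dimension, so that the strict concavity of $H_\alpha$ is not vacuous. Point (i) is exactly what the uniform-marginal construction buys us, since it satisfies every marginalization constraint automatically while decoupling $\tau_\alpha$ from the other factors; point (ii) is the dimension count above and relies only on a hyperedge containing at least two vertices. Once both are verified, the concavity comparison is immediate and needs no computation.
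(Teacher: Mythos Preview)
Your proposal is correct and follows essentially the same approach as the paper: fix all node marginals and all other factor marginals at the uniform distribution, let $\tau_\alpha$ vary over distributions with uniform single-node marginals, and read off $H(\tau;\rho) = \rho_\alpha H_\alpha(\tau_\alpha) + \text{const}$ on this slice. If anything, you are more careful than the paper, which simply asserts ``since $H_\alpha(\tau_\alpha)$ is concave in $\tau_\alpha$, this implies $\rho_\alpha \ge 0$'' without explicitly noting that one needs strict concavity (equivalently, that the slice has positive dimension) for the implication to go through.
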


To establish the necessity of condition~\eqref{EqnGraphCond}, consider a nonempty subset $U \subseteq V$ and the corresponding sub-region graph $R_U = U \cup F_U$, where $F_U = \{\alpha \cap U \colon \alpha \in F, \alpha \cap U \neq \emptyset\}$. From the original weights $\rho \in \R^{|V|+|F|}$, construct the sub-region weights $\rho^U \in \R^{|U|+|F_U|}$ given by
\begin{equation*}
    \rho^U_s = \rho_s, \quad \forall s \in U,
    \qquad \text{ and } \qquad
    \rho^U_{\alpha \cap U} = \rho_\alpha, \quad \forall \alpha \cap U \in F_U.
\end{equation*}
For simplicity, we consider $R_U$ to be a multiset by remembering which factor $\alpha \in F$ each $\beta = \alpha \cap U \in F_U$ comes from; we can equivalently work with $R_U$ as a set by defining the weights $\rho^U$ to be the sum over the pre-images of the factors in $R_U$. Consider the set of locally consistent $R_U$-pseudomarginals $\Delta_{R_U}^K$. Define a map that sends $\tilde \tau \in \Delta_{R_U}^K$ to $\tau \in \Delta_R^K$ defined by
\begin{equation*}
\begin{split}
\tau_s(x_s) &= 
\begin{cases}
\tilde \tau_s(x_s) \quad &\text{ if } s \in U, \\
\frac{1}{2} \quad &\text{ otherwise},
\end{cases} \\
\tau_\alpha(x_\alpha) &=
\begin{cases}
\tilde \tau_{\alpha \cap U}(x_{\alpha \cap U}) \cdot \prod_{s \in \alpha \setminus U} \tau_s(x_s) \quad & \text{ if } \alpha \cap U \neq \emptyset \;\; \text{(so $\alpha \cap U \in F_U$)}, \\
\prod_{s \in \alpha} \tau_s(x_s) &\text{ otherwise}.
\end{cases}
\end{split}
\end{equation*}
Let $\Delta_U$ denote the image of $\Delta_{R_U}^K$ under the mapping above, and note that $\Delta_U \subseteq \Delta_R^K$. Therefore, $H(\tau;\rho)$ is concave over $\Delta_U$. Now let $\tau \in \Delta_U$ and let $\tilde \tau \in \Delta_{R_U}^K$ be a pre-image of $\tau$. With this construction, we have the following lemma, proved in Appendix~\ref{AppConstDiff}:
\begin{lem*}
	\label{LemConstDiff}
	The entropy $H(\tau; \rho)$ differs from $H_U(\tilde \tau; \rho^U)$ by a constant, where $H_U(\tilde \tau;\rho^U)$ is the Bethe entropy defined over the sub-region graph $R_U$.
\end{lem*}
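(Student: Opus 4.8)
The plan is to expand $H(\tau;\rho)$ according to the definition of the map and match it term-by-term against $H_U(\tilde\tau;\rho^U)$, showing that every summand either coincides with a term of $H_U(\tilde\tau;\rho^U)$ or is a constant independent of $\tilde\tau$. Throughout I use the reduction to the binary case $\X = \{-1,+1\}$ already in force, so that each vertex $s \in V \setminus U$ carries the uniform marginal $\tau_s \equiv \frac{1}{2}$ with single-node entropy $H_s(\tau_s) = \log 2$.

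First I would dispose of the singleton terms. For $s \in U$ the map sets $\tau_s = \tilde\tau_s$ and $\rho_s = \rho^U_s$, so $\sum_{s \in U} \rho_s H_s(\tau_s)$ is exactly the vertex part of $H_U(\tilde\tau;\rho^U)$. For $s \in V \setminus U$ the entropy is the constant $\log 2$, so $\sum_{s \in V \setminus U} \rho_s H_s(\tau_s) = (\log 2)\sum_{s \in V \setminus U} \rho_s$ contributes only to the additive constant.

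The crux is the factor terms, and the key observation is that the defining formula makes each $\tau_\alpha$ a \emph{product} distribution. For $\alpha$ with $\beta \defn \alpha \cap U \neq \emptyset$, the map writes $\tau_\alpha(x_\alpha) = \tilde\tau_\beta(x_\beta)\prod_{s \in \alpha \setminus U}\tau_s(x_s)$, i.e., the distribution $\tilde\tau_\beta$ on the coordinates of $\beta$, rendered independent of the uniform marginals on $\alpha \setminus U$. Since entropy is additive over independent blocks, $H_\alpha(\tau_\alpha) = H_\beta(\tilde\tau_\beta) + |\alpha \setminus U|\log 2$, and multiplying by $\rho_\alpha = \rho^U_\beta$ splits this into the $F_U$-term $\rho^U_\beta H_\beta(\tilde\tau_\beta)$ plus the constant $\rho_\alpha|\alpha \setminus U|\log 2$. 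For $\alpha$ with $\alpha \cap U = \emptyset$, $\tau_\alpha$ is a product of uniforms, giving the pure constant $\rho_\alpha|\alpha|\log 2$.

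Collecting these, the $\tilde\tau$-dependent pieces reassemble into $\sum_{s \in U}\rho^U_s H_s(\tilde\tau_s) + \sum_{\beta \in F_U}\rho^U_\beta H_\beta(\tilde\tau_\beta) = H_U(\tilde\tau;\rho^U)$, while everything else aggregates into the fixed constant $(\log 2)\bigl[\sum_{s \in V \setminus U}\rho_s + \sum_{\alpha \cap U = \emptyset}\rho_\alpha|\alpha| + \sum_{\alpha \cap U \neq \emptyset}\rho_\alpha|\alpha \setminus U|\bigr]$, which proves the claim. I expect the only point demanding care, rather than genuine difficulty, to be the multiset bookkeeping: distinct factors $\alpha$ may yield the same intersection $\alpha \cap U$, so I would lean on the stated convention that $F_U$ remembers the pre-image of each $\beta$, making $\alpha \mapsto \alpha \cap U$ a bijection and guaranteeing that the factor sums line up exactly.
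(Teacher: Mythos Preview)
Your proposal is correct and follows essentially the same approach as the paper: expand $H(\tau;\rho)$, use the product structure of each $\tau_\alpha$ to decompose $H_\alpha(\tau_\alpha)$ as $H_{\alpha\cap U}(\tilde\tau_{\alpha\cap U}) + |\alpha\setminus U|\log 2$ (or the pure constant $|\alpha|\log 2$ when $\alpha\cap U=\emptyset$), and collect the $\tilde\tau$-dependent pieces into $H_U(\tilde\tau;\rho^U)$. Your explicit remark on the multiset bookkeeping is a nice addition but otherwise the arguments are the same.
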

Finally, we have a lemma showing that we can extract condition~\eqref{EqnGraphCond} for $U = V$. The proof is provided in Appendix~\ref{AppPositiveSumWeight}.
\begin{lem*}
    \label{LemPositiveSumWeight}
    If the Bethe entropy $H(\tau;\rho)$ is concave over $\Delta_R^K$, then $\sum_{s \in V} \rho_s + \sum_{\alpha \in F} \rho_\alpha \ge 0$.
\end{lem*}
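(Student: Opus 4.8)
The plan is to exhibit a single line segment inside $\Delta_R^K$ along which the Bethe entropy collapses to a scalar multiple of one strictly concave function of a scalar parameter, and then read off the sign of that scalar directly from the assumed concavity. As already noted in the reduction preceding Lemma~\ref{LemPosFactor}, it suffices to treat the binary case $\X = \{-1,+1\}$.

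First I would build the segment from the globally consistent, \emph{perfectly correlated} family of distributions supported on the two ``all-equal'' configurations $\Ones$ and $-\Ones$: for $m \in [-1,1]$, set
\begin{equation*}
\tau^{(m)}(\Ones) = \frac{1+m}{2}, \qquad \tau^{(m)}(-\Ones) = \frac{1-m}{2}, \qquad \tau^{(m)}(x) = 0 \text{ otherwise}.
\end{equation*}
Because this is an honest joint distribution, its induced marginals lie in $\Delta_R \subseteq \Delta_R^K$. A short computation then shows that \emph{every} singleton marginal and \emph{every} factor marginal equals the same two-point distribution $\big(\tfrac{1+m}{2}, \tfrac{1-m}{2}\big)$: for $s \in V$ the only supported configuration with $x_s = +1$ is $\Ones$, while for $\alpha \in F$ the restrictions $\Ones_\alpha$ and $-\Ones_\alpha$ carry all the mass. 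Consequently $H_s(\tau_s^{(m)}) = H_\alpha(\tau_\alpha^{(m)}) = g(m)$ for all $s$ and $\alpha$, where $g(m) \defn -\frac{1+m}{2}\log\frac{1+m}{2} - \frac{1-m}{2}\log\frac{1-m}{2}$ is the binary entropy.

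The two facts I would then combine are: (i) the map $m \mapsto \tau^{(m)}$ is affine in the marginal coordinates (each supported marginal value is linear in $m$ and the remaining coordinates are constant), so its image is a genuine line segment inside $\Delta_R^K$; and (ii) along this segment the Bethe entropy~\eqref{EqnHBethe} reduces to
\begin{equation*}
H(\tau^{(m)};\rho) = \Big(\sum_{s \in V} \rho_s + \sum_{\alpha \in F}\rho_\alpha\Big)\, g(m).
\end{equation*}
Since $H$ is concave over $\Delta_R^K$ and the segment lies inside this set, the scalar function $m \mapsto H(\tau^{(m)};\rho)$ is concave, being the composition of a concave function with an affine map. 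But $g$ is strictly concave on $(-1,1)$ (indeed $g''(m) < 0$ there), so if the prefactor were negative the restricted function would be strictly convex, a contradiction; hence $\sum_{s \in V} \rho_s + \sum_{\alpha \in F}\rho_\alpha \ge 0$, which is exactly the claim.

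The content of the argument lies entirely in the choice of slice, and this is the only step requiring care. A naive \emph{independent} perturbation of the marginals away from uniform would leave each singleton marginal fixed and expose only the factor weights $\sum_\alpha \rho_\alpha$, rather than the combined sum; forcing all variables to move in lockstep via perfect correlation is precisely what makes the singleton marginals vary together with the factor marginals, so that the singleton weights $\sum_s \rho_s$ also enter the prefactor. The remaining verifications---consistency of the marginals, affineness of the segment, and $g'' < 0$---are routine.
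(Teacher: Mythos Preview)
Your argument is correct, and it is genuinely simpler than the paper's. Both proofs restrict the Bethe entropy to a carefully chosen slice of $\Delta_R^K$, but the paper uses a \emph{two}-parameter family $\tau[\mod,\mev]$ (parameterizing the common single-node mean $\mod$ and the common ``all-equal'' mass via $\mev$), computes the full $2\times 2$ Hessian of $\zeta(\mod,\mev)=H(\tau[\mod,\mev];\rho)$ along $\mod=0$, and then sends $\mev\to 1$ to extract the inequality from the first diagonal entry. Your perfectly correlated family $\tau^{(m)}$ is exactly the $\mev=1$ boundary of the paper's construction: at $\mev=1$ every factor marginal collapses onto the two all-equal configurations with weights $(1\pm m)/2$, so all region entropies equal the binary entropy $g(m)$ and the Bethe entropy becomes $\big(\sum_s\rho_s+\sum_\alpha\rho_\alpha\big)g(m)$ on the nose. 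By working directly on this one-dimensional boundary slice you bypass the Hessian computation and the limiting argument entirely; the paper's extra parameter $\mev$ and the interior analysis are not actually needed for this particular lemma. What the paper's approach buys is a more flexible family that stays in the interior of $\Delta_R^K$ (useful if one were worried about boundary behavior of the entropy), but since concavity is assumed over the full polytope and your segment lies in $\Delta_R\subseteq\Delta_R^K$, your direct argument is both valid and more economical.
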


By Lemma~\ref{LemConstDiff}, the concavity of $H(\tau;\rho)$ over $\Delta_U$ implies the concavity of $H_U(\tilde \tau;\rho^U)$ over $\Delta_{R_U}^K$. Then by Lemma~\ref{LemPositiveSumWeight} applied to $R_U$, we have
\begin{equation*}
	\sum_{s \in U} \rho_s + \sum_{\alpha \in F \colon \alpha \cap U \neq \emptyset} \rho_\alpha
	= \sum_{s \in U} \rho_s^U + \sum_{\beta \in F_U} \rho_\beta^U \ge 0,
\end{equation*}
finishing the proof.

\subsection{Proof of Lemma~\ref{LemPosFactor}}
\label{AppPosFactor}

Fix $\alpha \in F$, and let $\Delta_\alpha$ be the set of pseudomarginals $\tau \in \Delta_R^K$ with the property that for all $s \in V$ and $\beta \in F \setminus \{\alpha\}$, $\tau_s$ and $\tau_\beta$ are uniform distributions over $X_s$ and $X_\beta$, respectively, while $\tau_\alpha$ is an arbitrary distribution on $X_\alpha$ with uniform single-node marginals. Then $H(\tau;\rho)$ is concave over $\Delta_\alpha$.
On the other hand, note that for $\tau \in \Delta_\alpha$, $H_s(\tau_s) = \log 2$ and $H_\beta(\tau_\beta) = |\beta| \log 2$ are constants for $s \in V$ and $\beta \in F \setminus \{\alpha\}$, so we can write
\begin{equation*}
    H(\tau;\rho) = \rho_\alpha H_\alpha(\tau_\alpha) + \text{constant}.
\end{equation*}
Since $H_\alpha(\tau_\alpha)$ is concave in $\tau_\alpha$, this implies $\rho_\alpha \ge 0$, as claimed.

\subsection{Proof of Lemma~\ref{LemConstDiff}}
\label{AppConstDiff}

By construction, for $s \in V \setminus U$, we have $H_s(\tau_s) = \log 2$; and for $\alpha \in F$ with $\alpha \cap U = \emptyset$, we have $H_\alpha(\tau_\alpha) = |\alpha| \log 2$. Moreover, for $\alpha \in F$ with $\alpha \cap U \neq \emptyset$, we have
\begin{equation*}
	H_\alpha(\tau_\alpha) = H_{\alpha \cap U}(\tilde \tau_{\alpha \cap U}) + \sum_{s \in \alpha \setminus U} H_s(\tau_s)
	= H_{\alpha \cap U}(\tilde \tau_{\alpha \cap U}) + |\alpha \setminus U| \log 2.
\end{equation*}
Therefore, for $\tau \in \Delta_U$, we can write
\begin{equation*}
	\begin{split}
    H(\tau;\rho)
        &= \sum_{s \in V} \rho_s H_s(\tau_s) + \sum_{\alpha \in F} \rho_\alpha H_\alpha(\tau_\alpha) \\
        &= \sum_{s \in U} \rho_s H_s(\tilde \tau_s) + \Big(\sum_{s \in V \setminus U} \rho_s\Big) \log 2 \\
        &\qquad + \sum_{\alpha \in F \colon \alpha \cap U \neq \emptyset} \rho_\alpha \Big(H_{\alpha \cap U}(\tilde \tau_{\alpha \cap U}) + |\alpha \setminus U| \log 2\Big) + \sum_{\alpha \in F \colon \alpha \cap U = \emptyset} \rho_\alpha |\alpha| \log 2 \\
        &= \sum_{s \in U} \rho_s^U H_s(\tilde \tau_s) + \sum_{\beta \in F_U} \rho_\beta^U H_\beta(\tilde \tau_\beta) + \text{constant} \\
        &= H_U(\tilde \tau;\rho^U) + \text{constant},
	\end{split}
\end{equation*}
as wanted.

\subsection{Proof of Lemma~\ref{LemPositiveSumWeight}}
\label{AppPositiveSumWeight}

Given $\mod, \mev \in \R$, we define a pseudomarginal $\tau = (\tau_s,\tau_\alpha)$ by\footnote{The definition of $\tau[\mod,\mev]$ above is equivalent to imposing the conditions
\begin{equation*}
    \E_{\tau_\alpha}\Big[\prod\nolimits_{s \in \beta} X_s\Big] =
    \mod \; \text{ if } |\beta| \text{ is odd }
    \quad \text{ and } \quad
    \E_{\tau_\alpha}\Big[\prod\nolimits_{s \in \beta} X_s\Big] =
    \mev \; \text{ if } |\beta| \text{ is even},
\end{equation*}
for all $\alpha \in V \cup F$ and $\emptyset \neq \beta \subseteq \alpha$.}
\begin{equation*}
    \tau_s(x_s) = \frac{1+x_s \mod}{2}, \qquad \forall s \in V, \; x_s \in X = \{-1,+1\},
\end{equation*}
and for $\alpha \in F$ with $|\alpha| = k$,
\begin{equation*}
    \begin{split}
    \tau_\alpha(x_\alpha) &=
    \begin{cases}
        2^{-k}\left(1 + 2^{k-1}\mod + (2^{k-1}-1)\mev\right) \quad & \text{ if } x_\alpha = (1,\dots,1), \\
        2^{-k}\left(1 - 2^{k-1}\mod + (2^{k-1}-1)\mev\right) \quad & \text{ if } x_\alpha = (-1,\dots,-1), \\
        2^{-k}(1-\mev) \quad & \text{ otherwise}.
     \end{cases}
     \end{split}
\end{equation*}
It is easy to see that $\sum_{x_s} \tau_s(x_s) = \sum_{x_\alpha} \tau_\alpha(x_\alpha) = 1$, and that $\tau_s$ is the single-node marginal of $\tau_\alpha$. Thus, for $\tau$ to lie in $\Delta_R^K$, we only need to ensure that $\tau_s(x_s) \ge 0$ and $\tau_\alpha(x_\alpha) \ge 0$, or equivalently,
\begin{equation*}
    -1 \le \mod \le 1, \qquad \frac{1+2^{k-1}|\mod|}{2^{k-1}-1} \le \mev \le 1, \qquad \forall \, 2 \le k \le K,
\end{equation*}
where $K = \max\{|\alpha| \colon \alpha \in F\}$. Let $M$ denote the set of $(\mod,\mev)$ satisfying the constraints above, and let $\Delta_M$ denote the set of pseudomarginals $\tau[\mod,\mev] \in \Delta_R^K$ given by the construction above for each $(\mod,\mev) \in M$. 

Observe that the function $(\mod,\mev) \mapsto \tau[\mod,\mev]$ is additive for convex combinations; i.e., for $(\mod^{(1)},\mev^{(1)}),\dots,(\mod^{(j)},\mev^{(j)}) \in M$ and $\lambda_1,\dots,\lambda_j \ge 0$ with $\lambda_1+\dots+\lambda_j = 1$, we have
\begin{equation*}
    \sum_{i=1}^j \lambda_i \tau[\mod^{(i)},\mev^{(i)}] = \tau\Big[ \sum_{i=1}^j \lambda_i \mod^{(i)}, \; \sum_{i=1}^j \lambda_i \mev^{(i)} \Big].
\end{equation*}
Since $M$ is convex, this shows that $\Delta_M$ is a convex subset of $\Delta_R^K$. Therefore, $H(\tau;\rho)$ is concave over $\Delta_M$, and the additivity property above implies that the function
\begin{equation*}
    \zeta(\mod,\mev) \defn H(\tau[\mod,\mev];\rho)
\end{equation*}
is concave over $M$. We now compute the Hessian of $\zeta$ and show how it relates to the required quantity that we want to prove is nonnegative.

Fix $(\mod,\mev) \in M$, and note that $\tau \equiv \tau[\mod,\mev]$ has the property that $\tau_\alpha = \tau_\beta$ whenever $|\alpha| = |\beta|$. Therefore, we can collect the terms in $H(\tau;\rho)$ based on the cardinality of $\alpha \in V \cup F$. The single-node entropy is, as a function of $\mod$,
\begin{equation*}
    \zeta_1(\mod) := H_s(\tau_s)
        = -\eta\left(\frac{1+\mod}{2}\right) - \eta\left(\frac{1-\mod}{2}\right),
\end{equation*}
where $\eta(t) := t \log t$. For $\alpha \in F$ with $|\alpha| = k \ge 2$, the entropy corresponding to $\tau_\alpha$ is
\begin{equation*}
    \begin{split}
    \zeta_k(\mod,\mev) := H_\alpha(\tau_\alpha)
        &= -\eta\left(\frac{1 + 2^{k-1}\mod + (2^{k-1}-1)\mev}{2^k}\right)
           -\eta\left(\frac{1 - 2^{k-1}\mod + (2^{k-1}-1)\mev}{2^k}\right) \\
        &\qquad - (2^k-2) \, \eta\left(\frac{1 - \mev}{2^k}\right).
    \end{split}
\end{equation*}
The Bethe entropy can then be written as
\begin{equation*}
    \zeta(\mod,\mev) = H(\tau;\rho) = c_1 \zeta_1(\mod) + \sum_{k=2}^K c_k \zeta_k(\mod,\mev),
\end{equation*}
where $c_1 = \sum_{s \in V} \rho_s$ and $c_k = \sum_{\alpha \in F \colon |\alpha| = k} \rho_\alpha$ for $k \ge 2$.

Let us compute the Hessian matrix of $\zeta(\mod,\mev)$ along the axis $\mod = 0$. The function $\zeta_1$ has second derivative $\zeta_1''(\mod) = -1/(1-\mod^2)$, so at $\mod = 0$, the contribution of $\zeta_1$ to the Hessian of $\zeta$ is
\begin{equation*}
	\nabla^2 \zeta_1(0,\mev) =
	\begin{pmatrix} -1 & 0 \\ 0 & 0 \end{pmatrix}.
\end{equation*}
For $k \ge 2$, the first partial derivatives of $\zeta_k$ are
\begin{equation*}
	\begin{split}
		\frac{\partial \zeta_k}{\partial \mod} &= -\frac{1}{2} \left\{ \log \big(1 + 2^{k-1} \mod + (2^{k-1}-1) \mev\big) - \log \big(1 - 2^{k-1} \mod + (2^{k-1}-1) \mev\big) \right\}, \\
		\frac{\partial \zeta_k}{\partial \mev} &= -\frac{(2^{k-1}-1)}{2^k} \left\{ \log \big(1 + 2^{k-1} \mod + (2^{k-1}-1) \mev\big) + \log \big(1 - 2^{k-1} \mod + (2^{k-1}-1) \mev\big) \right.\\
		&\qquad\qquad\qquad\qquad\left.- 2 \log \big(1 - \mev\big)\right\}.
	\end{split}
\end{equation*}
The Hessian $\nabla^2 \zeta_k$ at $\mod = 0$ is then given by
\begin{equation*}
	\nabla^2 \zeta_k(0, \mev) =
	\begin{pmatrix}
		\displaystyle - \frac{2^{k-1}}{1 + (2^{k-1}-1) \mev} & 0 \\
		0 & \displaystyle - \frac{2^{k-1}-1}{(1+ (2^{k-1}-1)\mev)(1-\mev)}
	\end{pmatrix}.
\end{equation*}
Therefore, the Hessian of $\zeta$ at $\mod = 0$ is the diagonal matrix
\begin{equation*}
    \nabla^2 \zeta(0,\mev) =
    \begin{pmatrix}
        \displaystyle -c_1 - \sum_{k=2}^K \frac{2^{k-1} c_k}{1 + (2^{k-1}-1) \mev} & 0 \\
        0 & \displaystyle -\sum_{k=2}^K \frac{(2^{k-1}-1) c_k}{(1+ (2^{k-1}-1)\mev)(1-\mev)}
    \end{pmatrix}.
\end{equation*}
In particular, the eigenvalues of $\nabla^2 \zeta(0,\mev)$ are its diagonal entries. Taking $\mev \to 1$, we see that the eigenvalue corresponding to the first diagonal entry satisfies
\begin{equation*}
    \lim_{\mev \to 1} \lambda_1(\mev)
    = \lim_{\mev \to 1} \left\{-c_1 - \sum_{k=2}^K \frac{2^{k-1} c_k}{1 + (2^{k-1}-1) \mev}\right\}
    = -\sum_{k=1}^K c_k.
\end{equation*}
Since $(0,\mev) \in M$ as $\mev \to 1$ and $\zeta(\mod,\mev)$ is concave over $M$, we see that the eigenvalue above is nonpositive, which implies
\begin{equation*}
    \sum_{s \in V} \rho_s + \sum_{\alpha \in F} \rho_\alpha = \sum_{k=1}^K c_k \ge 0,
\end{equation*}
as desired.

\section{Proofs for Section~\ref{SecPolytope}}
\label{AppPolytope}

\subsection{Proof of Theorem~\ref{ThmPolytope}}
\label{AppThmPolytope}

We first show that $\conv(\F) \subseteq \C$ in the general Bethe case. Since $\C$ is convex, it suffices to show that $\F \subseteq \C$, so consider $1_{F'} \in \F$. We need to show that inequality~\eqref{EqnPolytopeCond} holds for $\rho = 1_{F'}$.

Let $W_1, \dots, W_m$ denote the connected components of $F' \cup N(F')$ in $G$. Consider an arbitrary $U \subseteq V$, and define $U_i \defn W_i \cap U$ for $1 \le i \le m$, and $U_0 \defn U \backslash \{U_1, \dots, U_m\}$. Then each $W_i$ has at most one cycle. Furthermore, we may write
\begin{equation}
	\label{EqnExpand}
	\sum_{\substack{\alpha \in F \colon \\\alpha \cap U \neq \emptyset}} (|\alpha \cap U| - 1) \rho_\alpha = \sum_{\substack{\alpha \in F' \colon \\ \alpha \cap U \neq \emptyset}} (|\alpha \cap U| - 1) = \sum_{i=1}^m \Big\{\sum_{\substack{\alpha \in W_i \colon \\ \alpha \cap U_i \neq \emptyset}} (|\alpha \cap U_i| - 1)\Big\}.
\end{equation}
We claim that
\begin{equation}
	\label{EqnComponents}
	\sum_{\alpha \in W_i\colon \alpha \cap U_i \neq \emptyset} (|\alpha \cap U_i| - 1) \le |U_i|, \qquad \forall 1 \le i \le m.
\end{equation}
Indeed, consider the induced subgraph $W_i'$ of $W_i$ with vertex set $V_i \defn U_i \cup \{\alpha \in W_i\colon \alpha \cap U_i \neq \emptyset\}$. Since $W_i$ has at most one cycle, $W_i'$ has at most one cycle, as well. Furthermore, the number of edges of $W_i'$ is given by
\begin{equation*}
	|E(W_i')| = \sum_{\alpha \in W_i\colon \alpha \cap U_i \neq \emptyset} |\alpha \cap U_i|,
\end{equation*}
and the number of vertices is $|V_i| = |U_i| + |\{\alpha \in W_i\colon \alpha \cap U_i \neq \emptyset\}|$.

We have the following simple lemma:
\begin{lem*}
	\label{LemSingle}
	A connected graph $G$ has at most one cycle if and only if
	\begin{equation*}
		|E(U)| \le |U|, \qquad \forall U \subseteq V.
	\end{equation*}
\end{lem*}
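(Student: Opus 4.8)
The plan is to prove the two implications separately, relying on the elementary fact that a \emph{connected} graph $H$ satisfies $|E(H)| \le |V(H)|$ if and only if it contains at most one cycle. This follows from the circuit-rank identity: the number of independent cycles of a connected graph equals $|E(H)| - |V(H)| + 1$, which is $0$ (a tree, hence acyclic) exactly when $|E(H)| = |V(H)| - 1$, equals $1$ (exactly one cycle) when $|E(H)| = |V(H)|$, and is at least $2$ when $|E(H)| \ge |V(H)| + 1$. Since circuit rank at least $2$ forces at least two distinct fundamental cycles, ``at most one cycle'' is equivalent to $|E(H)| \le |V(H)|$ for connected $H$.

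The reverse implication is then immediate. Assuming $|E(U)| \le |U|$ for every $U \subseteq V$, I would simply take $U = V$ to obtain $|E(G)| \le |V|$, which by the fact above shows that the connected graph $G$ has at most one cycle.

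For the forward implication, suppose $G$ has at most one cycle, fix an arbitrary $U \subseteq V$, and let $C_1, \dots, C_k$ be the connected components of the induced subgraph $G[U]$. Since every edge of $G[U]$ is an edge of $G$, any cycle lying inside some $C_i$ is also a cycle of $G$; as $G$ has at most one cycle in total and a cycle is connected, at most one component can contain a cycle, and that component then contains exactly one. Applying the connected-graph fact componentwise, each tree component satisfies $|E(C_i)| = |V(C_i)| - 1$, while the (at most one) unicyclic component satisfies $|E(C_i)| = |V(C_i)|$. Summing over the components yields $|E(U)| = \sum_{i=1}^k |E(C_i)| \le \sum_{i=1}^k |V(C_i)| = |U|$, as required.

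The only genuine content here is the edge--vertex counting fact for connected graphs together with the observation that cycles are preserved when passing to an induced subgraph. The step requiring the most care is the distribution of the unique cycle among the components of $G[U]$: one must note that two distinct unicyclic components would produce two distinct cycles in $G$, contradicting the hypothesis. Once that bookkeeping is settled, no inequality beyond the componentwise bound $|E(C_i)| \le |V(C_i)|$ is needed, so I do not anticipate any serious obstacle.
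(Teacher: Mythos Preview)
Your proof is correct. The forward implication is essentially the same as the paper's: both observe that the induced subgraph $G[U]$ inherits the ``at most one cycle'' property, and then count edges. The paper removes one edge to reduce to a forest and bounds $|E(U)| \le (|U|-1)+1$, while you decompose $G[U]$ into components and sum the componentwise bounds; the arithmetic is identical.

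The converse is where you genuinely diverge. The paper argues by explicit construction: given two cycles in $G$, it takes $U$ to be their vertex sets together with a connecting path and checks directly that $|E(U)| > |U|$. You instead take $U = V$ and invoke the circuit-rank identity for the connected graph $G$, so that ``more than one cycle'' forces $|E(G)| \ge |V|+1$. Your route is shorter and avoids the case analysis of whether the two cycles share vertices; the paper's route is more self-contained in that it does not appeal to the circuit-rank fact but exhibits a concrete witness. Both are perfectly valid.
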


\begin{proof}
	First suppose $G$ has at most one cycle. For any subset $U \subseteq V$, the induced subgraph $H$ clearly also contains at most one cycle. Hence, we may remove at most one edge to obtain a graph $H'$ which is a forest. Then
\begin{equation}
	\label{EqnOnesCond}
	|E(H')| \le |V(H')| - 1 = |U| - 1.
\end{equation}
Furthermore, $|E(U)| \le |E(H')| + 1$. It follows that $|E(U)| \le |U|$.

Conversely, if $G$ is a connected graph with more than one cycle, we may pick $U$ to be the union of vertices in the two cycles, along with a path connecting the two cycles (in case the cycles are disconnected). It is easy to check that condition~\eqref{EqnOnesCond} is violated in this case.
\end{proof}
Applying Lemma~\ref{LemSingle} to the graph $W_i'$ and rearranging then yields inequality~\eqref{EqnComponents}. Combining with equation~\eqref{EqnExpand} then yields
\begin{equation*}
	\sum_{\alpha \in F\colon \alpha \cap U \neq \emptyset} (|\alpha \cap U| - 1) \rho_\alpha \le \sum_{i=1}^m  |U_i| = |U| - |U_0| \le |U|,
\end{equation*}
proving the condition~\eqref{EqnPolytopeCond}.

We now specialize to the case where $|\alpha| = 2$ for all $\alpha \in F$. Note that in this case, we may identify the region graph with an ordinary graph $\widebar{G} = (V,E)$, where the edge set $E$ is given by $F$. It is easy to check that $1_{F'} \in \F$ if and only if the subgraph of $\widebar{G}$ with edge set $F'$ is a single-cycle forest. In the following argument, we abuse notation and refer to $\widebar{G}$ as $G$.

Recall that a \emph{rational polyhedron} is a set of the form $\{x \in \real^p\colon Ax \le b\}$, such that $A$ and $b$ have rational entries. Clearly, $\C$ is a rational polyhedron. Furthermore, a polyhedron is \emph{integral} if all vertices are elements of the integer lattice $\Z^p$. The following result is standard in integer programming:
\begin{lem*}
	\label{LemIntegralPoly}
	[Theorem 5.12, \cite{KorVyg07}]
	Let $P$ be a rational polyhedron. Then $P$ is integral if and only if $\max \{c^T x \colon x \in P\}$ is attained by an integral vector for each $c$ for which the maximum is finite.
\end{lem*}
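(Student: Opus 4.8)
The plan is to prove the two implications using the vertex structure of $P$ together with two standard facts from linear programming: (a) if $P$ is pointed and $\max\{c^T x : x \in P\}$ is finite, then the maximum is attained at a vertex of $P$; and (b) every vertex of $P$ is the unique optimizer of some rational linear functional. The polyhedron to which we apply this lemma is the bounded set $\C$, hence a polytope and in particular pointed, so I carry out the argument for pointed $P$ and indicate at the end the modification needed for the fully general statement.

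For the forward implication, suppose every vertex of $P$ lies in $\Z^p$, and let $c$ be such that $\delta := \max\{c^T x : x \in P\}$ is finite. By fact (a), the value $\delta$ is attained at some vertex $v$, and $v \in \Z^p$ by hypothesis; hence the maximum is attained by an integral vector. This direction is immediate.

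For the converse I argue by contraposition, and in doing so supply the proof of fact (b). Write $P = \{x : Ax \le b\}$ with $A, b$ rational, and suppose $P$ has a vertex $v \notin \Z^p$. Let $I(v)$ index the inequalities tight at $v$; since $v$ is a vertex, the corresponding rows $\{a_i : i \in I(v)\}$ have rank $p$. Setting $c := \sum_{i \in I(v)} a_i$, every $x \in P$ satisfies $c^T x \le \sum_{i \in I(v)} b_i = c^T v$, with equality only if $a_i^T x = b_i$ for all $i \in I(v)$, which forces $x = v$ by the full-rank property. Thus $v$ is the unique maximizer of the rational objective $c^T x$, the maximum is finite, and it is attained solely at the non-integral point $v$; no integral vector attains it, contradicting the hypothesis. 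Hence every vertex is integral and $P$ is integral.

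The main subtlety, and the only real obstacle, lies in the passage to non-pointed polyhedra. The vertex characterization of integrality used here is adequate exactly when $P$ is pointed, as is the case for the bounded set $\C$; for a polyhedron containing a line it would fail, since such a $P$ has no vertices yet may contain no integral point at all. The correct general statement replaces \emph{vertices} by \emph{minimal faces}, and the reverse argument then requires choosing $c$ in the relative interior of the normal cone of a non-integral minimal face so that the optimal face is exactly that coset of the lineality space. Because the application only concerns the bounded polytope $\C$, the pointed argument above is all that is needed, and I would simply remark on the general case rather than develop it in full.
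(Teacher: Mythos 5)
You should first note that the paper offers no proof of this lemma at all: it is quoted directly from Korte and Vygen (their Theorem 5.12), so the only comparison available is with the standard textbook argument, which your proposal essentially reproduces in the pointed case. Both of your ingredients are sound. Attainment of a finite LP optimum at a vertex of a pointed polyhedron gives the forward direction, and for the converse your construction $c = \sum_{i \in I(v)} a_i$ from the constraints tight at a non-integral vertex $v$ is correct: on $P$ one has $c^T x \le \sum_{i \in I(v)} b_i = c^T v$, and equality forces $a_i^T x = b_i$ for all $i \in I(v)$, hence $x = v$ by the rank-$p$ condition, so $v$ is the unique maximizer of a rational objective with finite optimum, and no integral vector attains it. Your restriction to pointed $P$ is also legitimate in context: the paper applies the lemma only to $\C \subseteq [0,1]^{|F|}$, a bounded polytope, and its working definition of integrality (``all vertices are lattice points'') is precisely the pointed one, so your proof covers exactly the case the paper uses. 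You are likewise right that the full Korte--Vygen statement for general rational polyhedra requires replacing vertices by minimal faces --- a polyhedron containing a line has no vertices yet can fail integrality, e.g.\ $\{(x,y) \in \R^2 \colon 2y = 1\}$ --- and your sketch of choosing $c$ in the relative interior of the normal cone of a non-integral minimal face is the right repair. In short: your argument is correct and self-contained where the paper merely cites, at the acceptable price of proving a slightly less general statement than the one quoted.
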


We have already established that $1_{F'} \in \C$ for all $1_{F'} \in \F$. Furthermore, any lattice point in $\C$ is of the form $1_H$, where $H \subseteq E$. By Lemma~\ref{LemSingle}, each connected component of $H$ must contain at most one cycle, implying that $H$ is a single-cycle forest. Hence, $1_H \in \F$, as well. We then combine Lemma~\ref{LemIntegralPoly} with the following proposition to obtain the desired result.

\begin{proposition}
Let $G = (V,E)$ be a graph. For any set of weights $c = (c_{st}) \in \R^{|E|}$, the LP
\begin{align}
	& \max \sum_{(s,t) \in E} c_{st} x_{st} \label{Eq:LPObj} \\
	& \text{s.t. } \sum_{(s,t) \in E(U)} x_{st} \le |U|, \qquad \forall U \subseteq V,\label{Eq:SubsetSum} \\
	& \qquad 0 \le x_{st} \le 1, \qquad \forall (s,t) \in E,  \notag
\end{align}
attains its maximum value at an integral vector $x^*$.
\end{proposition}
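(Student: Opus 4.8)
The plan is to apply Lemma~\ref{LemIntegralPoly}: the feasible region $P \defn \{x \in [0,1]^{|E|} \colon \sum_{(s,t) \in E(U)} x_{st} \le |U| \;\; \forall U \subseteq V\}$ is a nonempty (it contains $0$) bounded rational polyhedron, so the maximum of any linear objective $c$ is attained at a vertex, and it suffices to prove that \emph{every} vertex $x^*$ of $P$ is integral. I would argue by contradiction via an \emph{uncrossing} argument. After deleting every edge with $x^*_{st} = 0$ and accounting separately for the constant contribution of edges with $x^*_{st} = 1$, I may assume $x^*$ has full fractional support, i.e.\ $0 < x^*_{st} < 1$ for all $(s,t) \in E$; the goal then becomes to contradict the assumption that $x^*$ is a vertex.

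The structural heart of the argument is the submodularity of the slack function. Write $f(U) = |U| - \sum_{(s,t) \in E(U)} x^*_{st}$ and let $\mathcal{T} = \{U \colon f(U) = 0\}$ be the family of tight constraints. Nonnegativity $f \ge 0$ is feasibility. For submodularity, one checks the set identities $E(U_1) \cap E(U_2) = E(U_1 \cap U_2)$ and $E(U_1) \cup E(U_2) \subseteq E(U_1 \cup U_2)$, valid because an edge lies in $E(U_i)$ iff \emph{both} its endpoints lie in $U_i$; together with $x^* \ge 0$ and the modularity of $U \mapsto |U|$, these give
\[
f(U_1) + f(U_2) \ge f(U_1 \cap U_2) + f(U_1 \cup U_2).
\]
Combined with $f \ge 0$, this shows $\mathcal{T}$ is closed under intersection and union, so a standard uncrossing argument lets me replace the tight subset-sum constraints determining $x^*$ by a \emph{laminar} subfamily $\mathcal{L} \subseteq \mathcal{T}$ spanning the same row space. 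Since $x^*$ has full fractional support, no box constraint is tight, and linear independence of a defining set of tight constraints forces $|\mathcal{L}| = |E|$.

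Finally, because every $x^*_{st} < 1$, each $U \in \mathcal{L}$ satisfies $\sum_{(s,t) \in E(U)} x^*_{st} = |U|$ with each summand strictly below $1$, forcing $|E(U)| \ge |U| + 1$. I would then run a token/charging argument over $\mathcal{L}$: assign each edge to the smallest member of $\mathcal{L}$ containing both of its endpoints, and bound inductively, from the leaves of $\mathcal{L}$ upward, the number of linearly independent constraints a laminar family of sets each satisfying $|E(U)| \ge |U| + 1$ can supply, contradicting $|\mathcal{L}| = |E|$. I expect this counting step to be the main obstacle: unlike the spanning-forest polytope, an edge contributes to the constraint indexed by $U$ only when both endpoints lie in $U$, so the charging must carefully track, for each laminar set, the vertices and edges it privately owns and invoke the surplus $|E(U)| \ge |U|+1$ (in the spirit of Lemma~\ref{LemSingle}) to close the count. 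As a sanity check, the integral points of $P$ are exactly the single-cycle forests, which are the independent sets of the bicircular (equivalently $(1,0)$-sparsity) matroid of $G$; thus $P$ is the independence polytope of a matroid, and its integrality is also guaranteed by Edmonds' matroid polytope theorem, which confirms that the uncrossing argument will succeed.
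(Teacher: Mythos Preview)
Your route is genuinely different from the paper's. The paper does not argue that $P$ is integral; it constructs an optimal integral solution directly. After reducing to positive integer weights, it stratifies $E$ by the level sets $E_i = \{e : c_e \ge i\}$, bounds the objective above by $\sum_i (|V_i| - \alpha_i)$ where $\alpha_i$ is the number of tree components of $(V_i,E_i)$, and then explicitly builds a single-cycle forest $E^* \subseteq E$ whose indicator achieves this bound, via an inductive construction of nested spanning pseudoforests $C_K \subseteq \cdots \subseteq C_1 = E^*$. This is elementary but somewhat intricate; your polyhedral viewpoint is more conceptual.

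Your matroid observation is the right idea and yields a cleaner proof, but as written the inference is circular: from ``the integral points of $P$ are exactly the pseudoforests'' you cannot conclude that $P$ \emph{equals} their convex hull without already knowing $P$ is integral. What must be checked directly is that every rank constraint $x(S) \le r(S)$ of the bicircular matroid is implied by the LP constraints. This is easy: decompose $S$ into connected components $S_i$ on vertex sets $V_i$; if $(V_i,S_i)$ is a tree, the box constraints give $x(S_i) \le |S_i| = |V_i| - 1$, and otherwise $S_i \subseteq E(V_i)$ together with~\eqref{Eq:SubsetSum} gives $x(S_i) \le |V_i|$; summing yields $x(S) \le r(S)$. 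The reverse inclusion holds since $r(E(U)) \le |U|$ and $r(\{e\}) = 1$, so $P$ coincides with the bicircular-matroid independence polytope and Edmonds' theorem finishes.

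On the uncrossing route: the submodularity of $f$ and the closure of the tight family under $\cap,\cup$ are correct, but the reduction to full fractional support is not clean. Deleting edges with $x^*_e = 0$ is harmless (restrict to a subgraph), but ``accounting separately'' for edges with $x^*_e = 1$ changes the right-hand sides in a way that destroys the form of~\eqref{Eq:SubsetSum}. If instead you keep those edges, your key inequality $|E(U)| \ge |U| + 1$ for tight $U$ no longer follows, since some summands may equal $1$; you only get $|E(U)| \ge |U|$, which is too weak for the intended token count. The counting step you already flag as the main obstacle therefore needs a different organization. Since the matroid identification (once the gap above is filled) already closes the argument, I would promote it from sanity check to the main proof.
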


\begin{proof}
We first argue that it suffices to consider rational weights $c \in \Q^{|E|}$. Let $X$ denote the feasible set of the LP, and let $F(c ) = \max_{x \in X} c^\top x$ denote the maximum value of the LP. Note that $F(c )$ is continuous in $c$.

Suppose the claim in the proposition holds for $c \in \Q^{|E|}$. Given $c \in \R^{|E|}$, let $x^* \in \arg\max_{x \in X} c^T x$. Let $(c^{(n)})_{n \ge 1}$ be a sequence of weights in $\Q^{|E|}$ converging to $c$ elementwise as $n \to \infty$. Given $\epsilon > 0$, choose $n$ sufficiently large such that $\|c^{(n)} - c\|_1 < \epsilon$ and $|F(c ) - F(c^{(n)})| < \epsilon$. Applying our hypothesis, we know there exists an integral vector $z^* \in X$ such that $F(c^{(n)}) = (c^{(n)})^\top z^*$. Then
\begin{equation*}
\begin{split}
|F(c ) - c^\top z^*|
\le |F(c ) - F(c^{(n)})| + |(c^{(n)} - c)^\top z^* |
\le \epsilon + \|c^{(n)} - c\|_1 \, \|z^*\|_\infty
\le 2\epsilon.
\end{split}
\end{equation*}
Thus, we can find an integral vector $z^* \in X$ that achieves the objective function that is within $2\epsilon$ from the optimal value. Since $\epsilon > 0$ is arbitrary, we conclude by continuity that we may find an integral vector in $X$ arbitrarily close to $x^*$. This implies that $x^*$ is an integral vector.

It now remains to prove the claim in the proposition for $c \in \Q^{|E|}$. If $c_{st} < 0$ for some $(s,t) \in E$, then any optimal solution $x^*$ will have $x_{st}^* = 0$. If $c_{st} = 0$, then we can set $x^*_{st} = 0$ without changing the objective value. Thus, we can assume $c_{st} > 0$ for all $(s,t) \in E$. By scaling the weights, we can further assume that $c_{st} \in \{1,\dots,K\}$ for all $(s,t) \in E$, for some $K \in \N$.

We first upper-bound the objective function. For $1 \le i \le K$, let $E_i = \{(s,t) \in E \colon c_{st} \ge i\}$ denote the set of edges with weights at least $i$, and let $V_i$ denote the set of vertices in $E_i$. By construction, we have
\begin{equation*}
V = V_1 \supset \cdots \supset V_K,
\qquad \text{ and } \qquad
E = E_1 \supset \cdots \supset E_K.
\end{equation*}
Suppose the subgraph $G_i = (V_i, E_i)$ is decomposed into connected components
\begin{equation}\label{EqnGiDecomposition}
G_i = T_{i1} \cup \cdots T_{i\alpha_i} \cup H_{i1} \cup \cdots \cup H_{i\beta_i},
\end{equation}
where each $T_{ij} = (V(T_{ij}), E(T_{ij}))$ is a tree and each $H_{i\ell} = (V(H_{i\ell}), E(H_{i\ell}))$ is a connected graph with at least one loop. Thus, we have the disjoint partitions
\begin{equation*}
V_i = \bigcup_{j=1}^{\alpha_i} V(T_{ij}) \,\cup\, \bigcup_{\ell=1}^{\beta_i} V(H_{i\ell}), 
\qquad \text{ and } \qquad
E_i = \bigcup_{j=1}^{\alpha_i} E(T_{ij}) \,\cup\, \bigcup_{\ell=1}^{\beta_i} E(H_{i\ell}).
\end{equation*}
Then we can write the objective function of the LP as
\begin{equation}\label{Eq:LPObjBoundTemp}
\sum_{(s,t) \in E} c_{st} x_{st}
= \sum_{i=1}^K \sum_{(s,t) \in E_i} x_{st}
= \sum_{i=1}^K \left( \sum_{j=1}^{\alpha_i} \sum_{(s,t) \in E(T_{ij})} x_{st} + \sum_{\ell=1}^{\beta_i} \sum_{(s,t) \in E(H_{i\ell})} x_{st} \right).
\end{equation}
For $i = 1, \dots, K$ and $j = 1, \dots, \alpha_i$, since $T_{ij}$ is a tree, we have
\begin{equation}
\label{EqnCat}
\sum_{(s,t) \in E(T_{ij})} x_{st} \le |E(T_{ij})| = |V(T_{ij})| - 1, \qquad \forall x \in X.
\end{equation}
For $\ell = 1,\dots,\beta_i$, note that the set $E(H_{i\ell})$ of edges in $H_{i\ell}$ is contained within the set $E(V(H_{i\ell}))$ of edges in the subgraph of $G$ induced by $V(H_{i\ell})$. Thus, by inequality~\eqref{Eq:SubsetSum}, we have
\begin{equation}
\label{EqnDog}
\sum_{(s,t) \in E(H_{i\ell})} x_{st} \le \sum_{(s,t) \in E(V(H_{i\ell}))} x_{st} \le |V(H_{i\ell})|.
\end{equation}
Plugging in the bounds~\eqref{EqnCat} and~\eqref{EqnDog} to inequality~\eqref{Eq:LPObjBoundTemp}, we arrive at the upper bound
\begin{equation}\label{Eq:LPObjBound}
\sum_{(s,t) \in E} c_{st} x_{st}
\le \sum_{i=1}^K \left( \sum_{j=1}^{\alpha_i} \big\{ |V(T_{ij})| - 1 \big\} + \sum_{\ell=1}^{\beta_i} |V(H_{i\ell})| \right)
= \sum_{i=1}^K \left( |V_i| - \alpha_i \right).  
\end{equation}

We now prove the claim in the proposition by explicitly constructing an integral vector $x^*$ that achieves the upper bound~\eqref{Eq:LPObjBound}. Since $x^* \in \{0,1\}^{|E|}$, it is the indicator vector of a subset $E^* \subseteq E$. 

Our approach is to construct, for each $1 \le i \le K$, a spanning single-cycle forest $F_i = (V_i, C_i)$ of $G_i = (V_i, E_i)$ with the following properties:
\begin{enumerate}
  \item The restriction of $F_i$ to $V_{i+1} \subseteq V_i$ is equal to $F_{i+1} = (V_{i+1}, C_{i+1})$, or equivalently, $C_i \cap E_{i+1} = C_{i+1}$. By induction, this implies $C_1 \cap E_i = C_i$, for $1 \le i \le K$.
  \item For $1 \le i \le K$, we have $|C_i| = |V_i| - \alpha_i$.
\end{enumerate}
Suppose we can construct such $F_i$'s. Setting $E^* = C_1$, we see that this construction yields a vector $x^* = 1_{E^*}$ satisfying
\begin{equation*}
\begin{split}
\sum_{(s,t) \in E} c_{st} x^*_{st}
= \sum_{i=1}^K \sum_{(s,t) \in E_i} x^*_{st}
&= \sum_{i=1}^K \sum_{(s,t) \in E_i} \mathbf{1}\{(s,t) \in C_1\} \\
&= \sum_{i=1}^K |C_1 \cap E_i|
= \sum_{i=1}^K |C_i|
= \sum_{i=1}^K \big(|V_i| - \alpha_i\big),
\end{split}
\end{equation*}
so $x^*$ achieves the bound~\eqref{Eq:LPObjBound}, as desired.

It now remains to construct the $F_i$'s. We start by taking $F_K$ to be a spanning single-cycle forest of $G_K$. Specifically, for each connected component $H$ of $G_K$, we do the following: If $H$ is a tree, we take $H$ to be in $F_K$. If $H$ contains at least one loop, then we take an arbitrary spanning single-cycle subgraph (i.e., a spanning tree with an additional edge to form one cycle) of $H$ to be in $F_K$. Then $F_K = (V_K, C_K)$ satisfies $|C_K| = |V_K| - \alpha_K$, since there are $\alpha_K$ trees among the connected components of $G_K$.

Suppose that for some $1 \le i \le K-1$, we have constructed a spanning single-cycle forest $F_{i+1}$ satisfying the desired properties. Now consider $G_i = (V_i, E_i)$, and construct $F_i = (V_i, C_i)$ as follows: Consider each connected component of $G_i$ in the decomposition~\eqref{EqnGiDecomposition}.
\begin{itemize}
  \item[(a)] For each tree $T_{ij} = (V(T_{ij}), E(T_{ij}))$, for all $1 \le j \le \alpha_i$, take $T_{ij}$ to be in $F_i$. This component of $F_i$ is clearly consistent with $F_{i+1}$, and the contribution to the total number of edges $|C_i|$ is
\begin{equation*}
\sum_{j=1}^{\alpha_i} |E(T_{ij})| = \sum_{j=1}^{\alpha_i} \big( |V(T_{ij})| - 1 \big) = \sum_{j=1}^{\alpha_i} |V(T_{ij})| - \alpha_i. 
\end{equation*}
  
  \item[(b)] Consider $H_{i\ell} = (V(H_{i\ell}), E(H_{i\ell}))$, for some $1 \le \ell \le \beta_i$, so $H_{i\ell}$ has at least one loop. There may be several connected components of $F_{i+1}$ in $H_{i\ell}$; suppose there are $\gamma_{i\ell}$ trees and $\delta_{i\ell}$ single-cycle graphs from $F_{i+1}$ in $H_{i\ell}$. From each of the $\delta_{i\ell}$ single-cycle graphs, remove one edge to reduce it to a tree, and complete the $\gamma_{i\ell}+\delta_{i\ell}$ trees into a spanning tree of $H_{i\ell}$. Add the $\delta_{i\ell}$ edges back, so the spanning tree now has $\delta_{i\ell}$ cycles. Remove $\delta_{i\ell}-1$ edges to break this graph into $\delta_{i\ell}$ connected components, such that each of the original $\delta_{i\ell}$ single-cycle graphs is in a separate connected components, and the last connected component is a tree. Set this new graph to be in $F_i$. It is clear by construction that this component of $F_i$ is consistent with $F_{i+1}$ since we keep all the edges from $F_{i+1}$. Moreover, its contribution to the total number of edges $C_i$ is precisely
\begin{equation*}
\sum_{\ell = 1}^{\beta_i} \big(\{ |V(H_{i\ell})|-1\} + \delta_{i\ell} - \{\delta_{i\ell}-1\}\big)
= \sum_{\ell=1}^{\beta_i} |V(H_{i\ell})|.
\end{equation*}
\end{itemize}
Combining the two cases above, for each $1 \le i \le K$ we have constructed a spanning single-cycle forest $F_i$ that is consistent with $F_{i+1}$ and satisfies $|C_i| = \sum_{j=1}^{\alpha_i} |V(T_{ij})| - \alpha_i + \sum_{\ell=1}^{\beta_i} |V(H_{i\ell})| = |V_i| - \alpha_i$, as desired. This completes the proof of the proposition.

\end{proof}

\subsection{Details for Example~\ref{ExaPolytope}}
\label{AppExaPolytope}

It is easy to check that $\F = \{0,1\}^3 \backslash (1,1,1)$. Hence, $(1, \frac{1}{2}, 1) \notin \conv(\F)$. By enumerating the inequalities defining the boundary of $\C$ for different values of $U \subseteq V$, one may check that the only inequalities that are not trivially satisfied by $\rho \in [0,1]^3$ are
\begin{align*}
			\rho_1 + 2\rho_2 + \rho_3 & \le 3, \\
			2\rho_1 + 2\rho_2 + \rho_3 & \le 4, \\
			\rho_1 + 2\rho_2 + 2\rho_3 & \le 4, \\
			2\rho_1 + 2\rho_2 + 2\rho_3 & \le 5.
\end{align*}
The first inequality together with the condition $\rho \in [0,1]^3$ implies the remaining three inequalities, so
\begin{equation*}
	\C = \left\{\rho \in [0,1]^3\colon \rho_1 + 2\rho_2 + \rho_3 \le 3\right\}.
\end{equation*}
Clearly, $(1, \frac{1}{2}, 1) \in \C$.

\subsection{Proof of Proposition~\ref{PropPolytope}}
\label{AppPropPolytope}

The first condition implies $F \notin \mathfrak{F}$. In particular, $F^* \neq F$ and we can find $\alpha^* \in F \setminus F^*$. Since $F^*$ is maximal, $\tilde F = F^* \cup \{\alpha^*\} \notin \mathfrak{F}$. This means $1_{F^*} \in \mathbb{F}$ but $1_{\tilde F} = 1_{F^*} + 1_{\{\alpha^*\}} \notin \mathbb{F}$. Define 
\begin{equation*}
\rho = 1_{F^*} + \epsilon 1_{\{\alpha^*\}}, \qquad \text{ with } \qquad
\epsilon = \frac{1}{|\alpha^*|-1} \in (0,1).
\end{equation*}
We claim that $\rho \in \mathbb{C}$, which will give us the desired conclusion since $\rho \notin \text{conv}(\mathbb{F})$.

To show $\rho \in \mathbb{C}$, since we already know that $1_{F^*} \in \mathbb{F} \subseteq \mathbb{C}$, we only need to verify inequality~\eqref{EqnPolytopeCond} for $U \subseteq V$ with $U \cap \alpha^* \neq \emptyset$. Given such a subset $U$, note that since $F^* \cup N(F^*)$ is a forest, the subgraph induced by the nodes $U \cup \{\alpha \in F^* \colon \alpha \cap U \neq \emptyset\}$ is also a forest, so
\begin{equation*}
\sum_{\alpha \in F^* \colon \alpha \cap U \neq \emptyset} (|\alpha \cap U| - 1) \le  |U| - 1.
\end{equation*}
Therefore,
\begin{equation*}
\begin{split}
\sum_{\alpha \in F \colon \alpha \cap U \neq \emptyset} (|\alpha \cap U|-1) \rho_\alpha
&= \sum_{\alpha \in F^* \colon \alpha \cap U \neq \emptyset} (|\alpha \cap U|-1) + \frac{|\alpha^* \cap U|-1}{|\alpha^*|-1}
\le |U| - 1 + 1 = |U|,
\end{split}
\end{equation*}
verifying condition~\eqref{EqnPolytopeCond}, as desired.

%----------------------
%\section{Proofs for Section~\ref{SecOptimization}}
%\label{AppOptimization}

\section{Proof of Theorem~\ref{ThmMessPass}}
\label{AppThmMessPass}

For $r \in R$ and $s \in \mathcal{P}(r )$, let $\lambda_{sr}(x_r)$ be a Lagrange multiplier associated with the consistency constraint $\sum_{x_{s \setminus r}} \tau_s(x_r, x_{s \backslash r}) = \tau_r(x_r)$. We enforce the nonnegativity constraint $\tau_r(x_r) \ge 0$ and normalization constraint $\sum_{x_r} \tau_r(x_r) = 1$ explicitly. Then the Lagrangian associated with the optimization problem~\eqref{EqnBethe} is
\begin{equation}\label{EqnLagrangian}
\begin{split}
  \mathcal{L}_{\theta,\rho}(\tau;\lambda)
  &= \sum_{r \in R} \sum_{x_r} \tau_r(x_r) \theta_r(x_r) - \sum_{r \in R} \rho_r \sum_{x_r} \tau_r(x_r) \log \tau_r(x_r) \\
  &\qquad + \sum_{r \in R} \sum_{t \in \mathcal{C}(r )} \sum_{x_t} \lambda_{rt}(x_t) \left(\tau_t(x_t) - \sum_{x_{r \setminus t}} \tau_r(x_t, x_{r \backslash t})\right).
\end{split}
\end{equation}
Setting the partial derivatives of $\mathcal{L}_{\theta,\rho}$ with respect to the Lagrange multipliers equal to zero recovers the consistency constraints. Taking the derivative of $\mathcal{L}_{\theta,\rho}$ with respect to $\tau_r(x_r)$ and setting it equal to zero yields
\begin{equation*}
\log \tau_r(x_r) = C + \frac{\theta_r(x_r)}{\rho_r} + \sum_{s \in \mathcal{P}(r )} \frac{\lambda_{sr}(x_r)}{\rho_r} - \sum_{t \in \mathcal{C}(r )} \frac{\lambda_{rt}(x_t)}{\rho_r},
\end{equation*}
where $C$ is a constant that enforces the normalization condition $\sum_{x_r} \tau_r(x_r) = 1$. Defining the messages by
\begin{equation*}
\log M_{sr}(x_r) = \frac{\lambda_{sr}(x_r)}{\rho_s},
\end{equation*}
we can write the equation above as
\begin{equation*}
\tau_r(x_r) \propto \exp\left(\frac{\theta_r(x_r)}{\rho_r}\right) \: \frac{\prod_{s \in \mathcal{P}(r )} M_{sr}(x_r)^{\rho_s/\rho_r}}{\prod_{t \in \mathcal{C}(r )} M_{rt}(x_t)},
\end{equation*}
recovering equation~\eqref{EqnMessPassMarg}.

For $s \in R$ and $r \in \mathcal{C}(s)$, enforcing the consistency condition $\sum_{x_{s \setminus r}} \tau_s(x_r, x_{s \backslash r}) = \tau_r(x_r)$ gives us
\begin{multline*}
  \exp\left(\frac{\theta_r(x_r)}{\rho_r}\right) \: \frac{M_{sr}(x_r)^{\rho_s/\rho_r} \: \prod_{u \in \mathcal{P}(r ) \setminus s} M_{ur}(x_r)^{\rho_u/\rho_r}}{\prod_{t \in \mathcal{C}(r )} M_{rt}(x_t)} \\
  \propto
  \sum_{x_{s \setminus r}} \exp\left(\frac{\theta_s(x_s)}{\rho_s}\right) \: \frac{\prod_{v \in \mathcal{P}(s)} M_{vs}(x_s)^{\rho_v/\rho_s}}{M_{sr}(x_r) \: \prod_{w \in \mathcal{C}(s) \setminus r} M_{sw}(x_w)}.
\end{multline*}
Rearranging the equation to collect $M_{sr}(x_r)$ on the left hand side and taking the $(1+\rho_s/\rho_r)$-th root on both sides gives us the update equation~\eqref{EqnMessPass}.

From the derivation above, it is clear that if $\{M_{sr}(x_r)\}$ is a fixed point of the update equation~\eqref{EqnMessPass}, then the collection $\tau$ of pseudomarginals defined by~\eqref{EqnMessPassMarg} is a stationary point of the Lagrangian~\eqref{EqnLagrangian}, since it sets the derivatives of $\mathcal{L}_{\theta,\rho}$ equal to zero.

%---------------------
\section{Additional Simulation Results}
\label{AppExperiments}

In this section, we provide additional plots to better illustrate the observations that we make in Section~\ref{SecExperiments}. For convenience, Figures~\ref{fig:K5_mixed_detail}--\ref{fig:T9_att_detail} and Figures~\ref{fig:K15_mixed_detail}--\ref{fig:T25_att_detail} show the same plots as in Figure~\ref{fig:results}. Figures~\ref{fig:K5_mixed_delta_detail}--\ref{fig:T9_att_delta_detail} show the plots of $(\rho, \log_{10}(\Delta))$ for the Ising models in Figures~\ref{fig:K5_mixed_detail}--\ref{fig:T9_att_detail}, and similarly for Figures~\ref{fig:K15_mixed_delta_detail}--\ref{fig:T25_att_delta_detail}. Here, $\Delta$ is the final average change of the messages in the sum product algorithm at termination; i.e., either when $\Delta \le 10^{-10}$ or after $2500$ iterations of the algorithm with parallel updates.

For $\rho \le \rho_{\text{cycle}}$, in which the Bethe variational problem~\eqref{EqnBethe} is concave, there is a unique optimal value for the Bethe approximation. The values of $\Delta$ in this region are slightly higher than the convergence threshold, which means sum product has not converged after $2500$ iterations, but the final value of $\Delta$ is sufficiently small that the messages have stabilized.

Shortly after $\rho$ becomes larger than $\rho_{\text{cycle}}$, the curve of the Bethe values splits into multiple lines, which indicates that the Bethe objective function has multiple local optima. These lines are evidently distinct local optima since the values of $\Delta$ are at the convergence threshold, which means sum product converges and yields stationary points of the Lagrangian.

In the models with mixed potentials, we observe that for the values of $\rho$ where the multiple local optima begin to emerge, the values of $\Delta$ are significantly higher and sum product does not converge. This behavior is reflected in the presence of the point cloud in the plots of the Bethe values. As noted in Section~\ref{SecExperiments}, we suspect that this behavior arises because distinct local optima are initially close together, so messages oscillate between them. For larger values of $\rho$, however, the local optima are sufficiently separated, so sum product converges and there are multiple lines in the graphs of the Bethe values.

\begin{figure}[h!t!b!p!]
	\hfill
	\subfigure[$K_5$, mixed, Bethe]{
		\includegraphics[width=.23\textwidth]{complete5_att0_rep5}
		\label{fig:K5_mixed_detail}
	}
	\hfill
	\subfigure[$K_5$, attractive, Bethe]{
		\includegraphics[width=.23\textwidth]{complete5_att1_rep2}
		\label{fig:K5_att_detail}
	}
	\hfill
	\subfigure[$T_9$, mixed, Bethe]{
		\includegraphics[width=.23\textwidth]{torus9_att0_rep9}
		\label{fig:T9_mixed_detail}
	}
	\hfill
	\subfigure[$T_9$, attractive, Bethe]{
		\includegraphics[width=.23\textwidth]{torus9_att1_rep4}
		\label{fig:T9_att_detail}
	}
	\\
	\hfill
	\subfigure[$K_5$, mixed, $\log(\Delta)$]{
		\includegraphics[width=.23\textwidth]{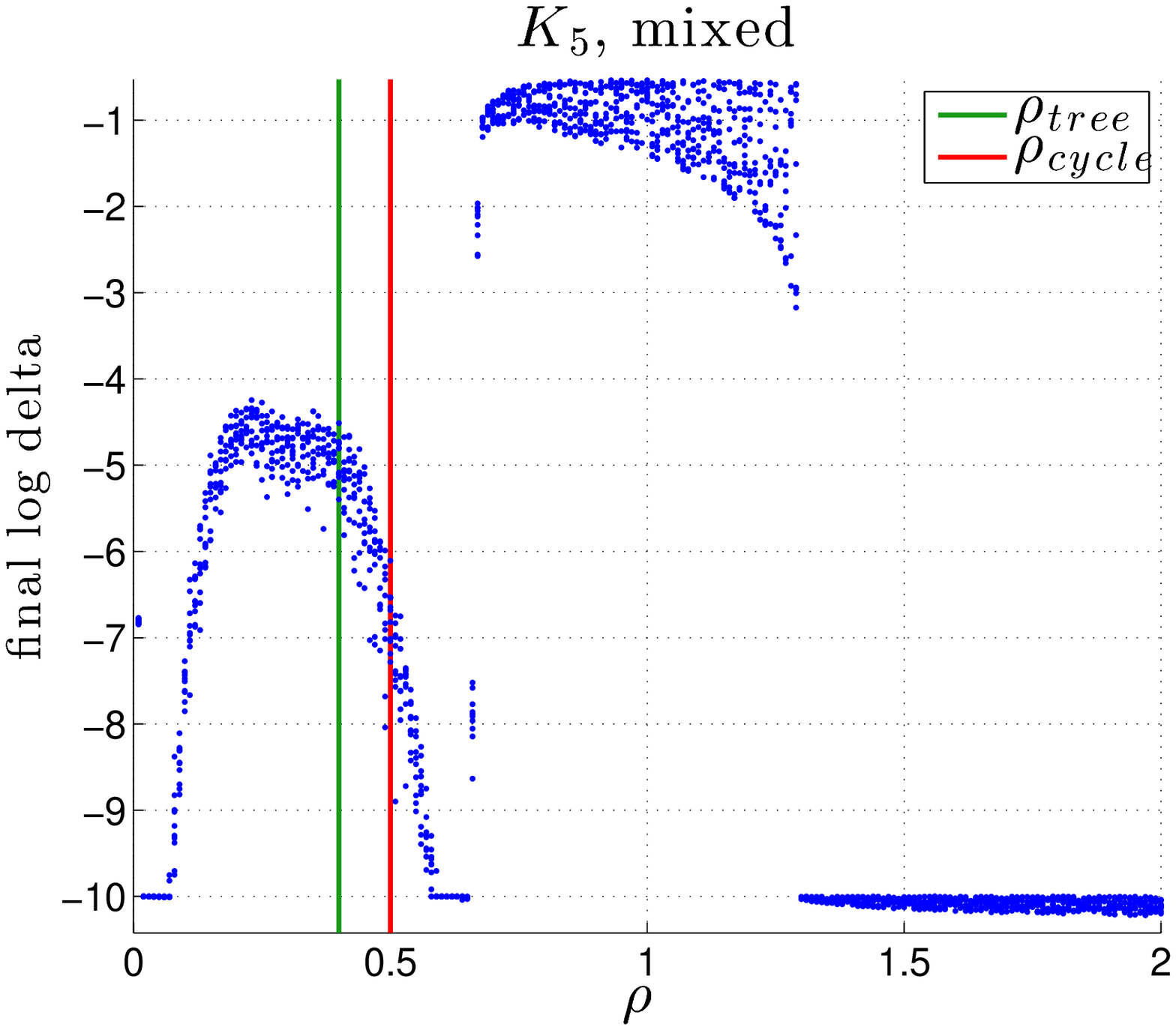}
		\label{fig:K5_mixed_delta_detail}
	}
	\hfill
	\subfigure[$K_5$, attractive, $\log(\Delta)$]{
		\includegraphics[width=.23\textwidth]{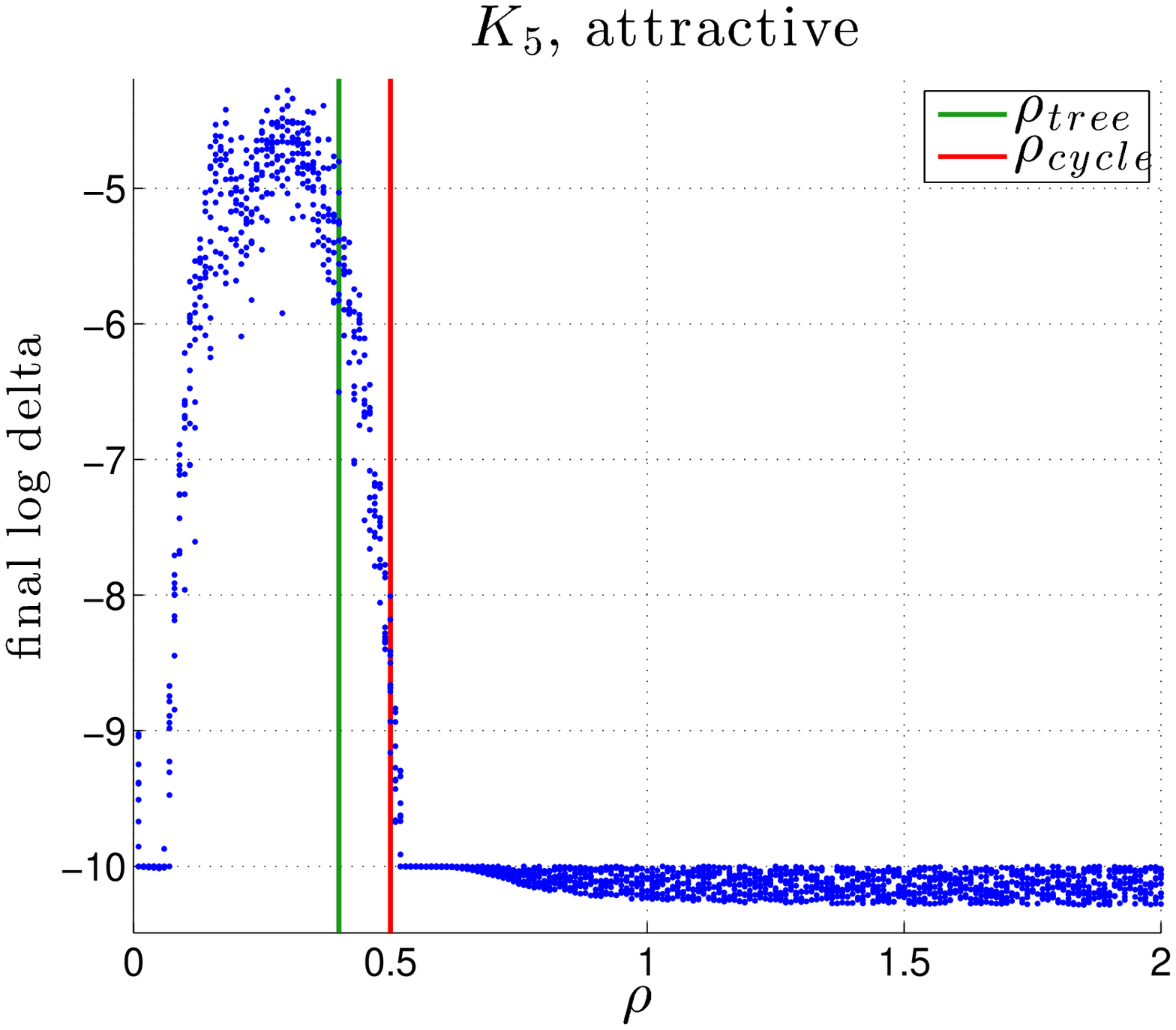}
		\label{fig:K5_att_delta_detail}
	}
	\hfill
	\subfigure[$T_9$, mixed, $\log(\Delta)$]{
		\includegraphics[width=.23\textwidth]{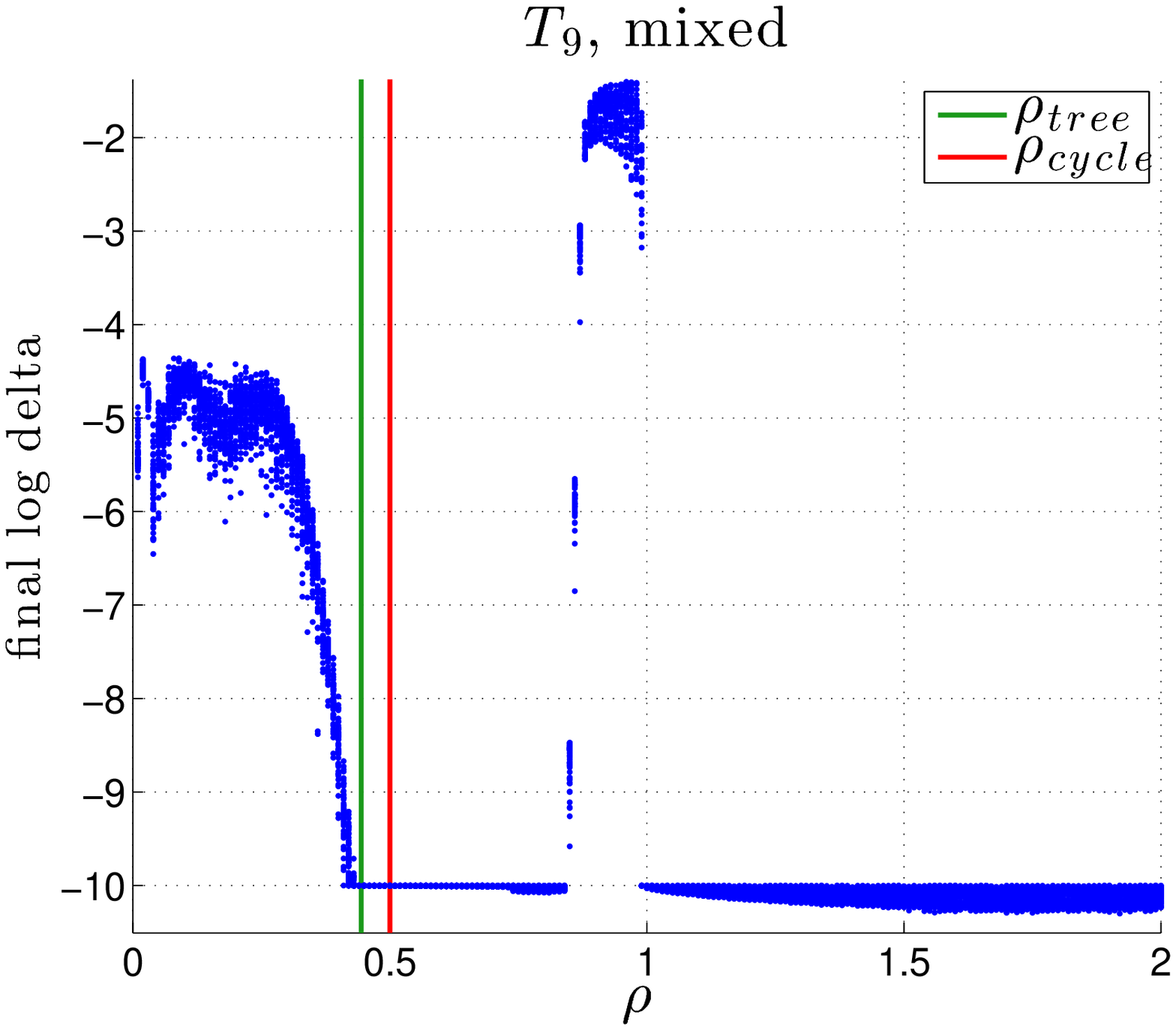}
		\label{fig:T9_mixed_delta_detail}
	}
	\hfill
	\subfigure[$T_9$, attractive, $\log(\Delta)$]{
		\includegraphics[width=.23\textwidth]{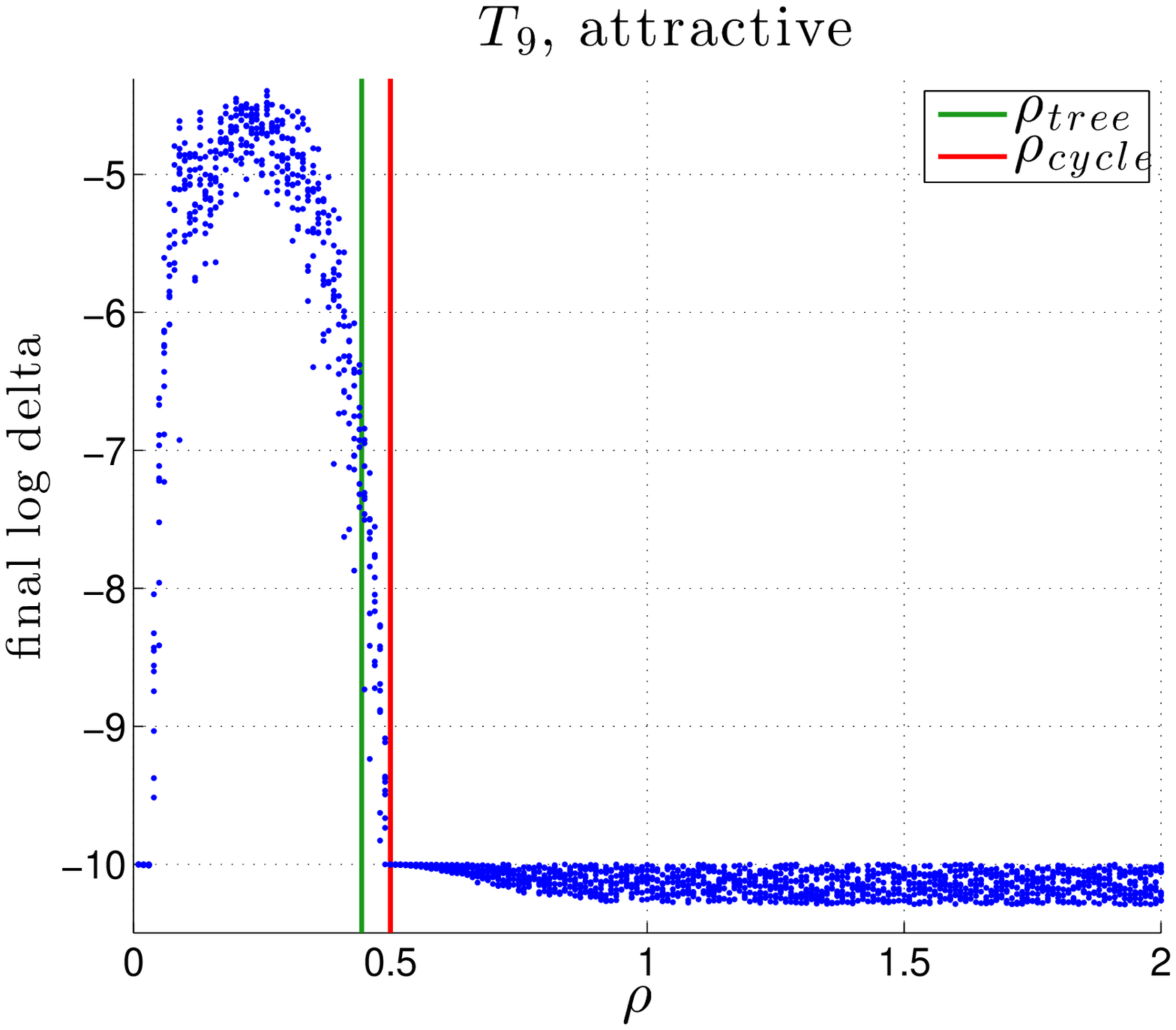}
		\label{fig:T9_att_delta_detail}
	}
	\\
	\hfill
	\subfigure[$K_{15}$, mixed, Bethe]{
		\includegraphics[width=.23\textwidth]{complete15_att0_rep7}
		\label{fig:K15_mixed_detail}
	}
	\hfill
	\subfigure[$K_{15}$, attractive, Bethe]{
		\includegraphics[width=.23\textwidth]{complete15_att1_rep4}
		\label{fig:K15_att_detail}
	}
	\hfill
	\subfigure[$T_{25}$, mixed, Bethe]{
		\includegraphics[width=.23\textwidth]{torus25_att0_rep3}
		\label{fig:T25_mixed_detail}
	}
	\hfill
	\subfigure[$T_{25}$, attractive, Bethe]{
		\includegraphics[width=.23\textwidth]{torus25_att1_rep8}
		\label{fig:T25_att_detail}
	}
	\\
	\hfill
	\subfigure[$K_{15}$, mixed, $\log(\Delta)$]{
		\includegraphics[width=.23\textwidth]{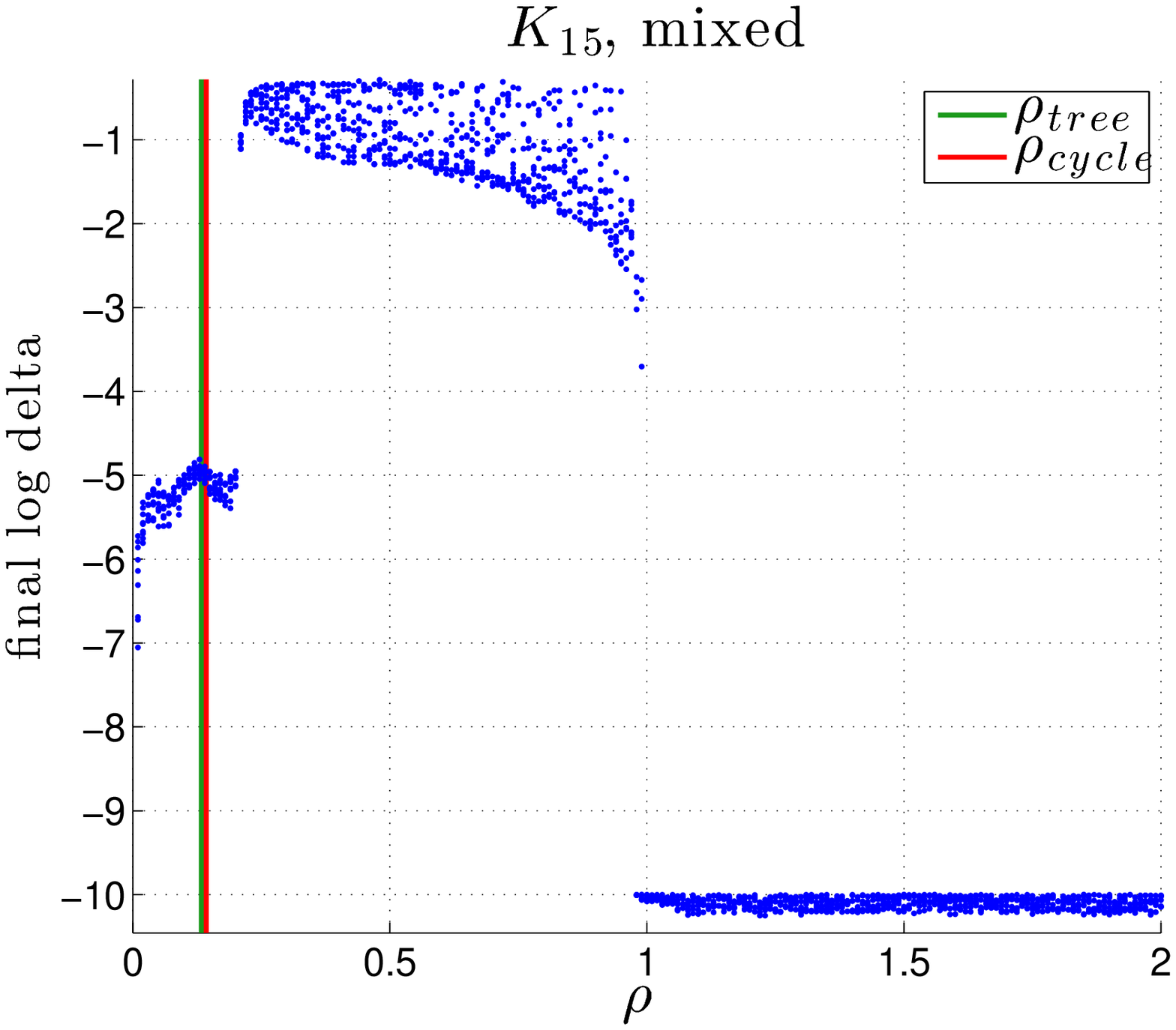}
		\label{fig:K15_mixed_delta_detail}
	}
	\hfill
	\subfigure[$K_{15}$, attractive, $\log(\Delta)$]{
		\includegraphics[width=.235\textwidth]{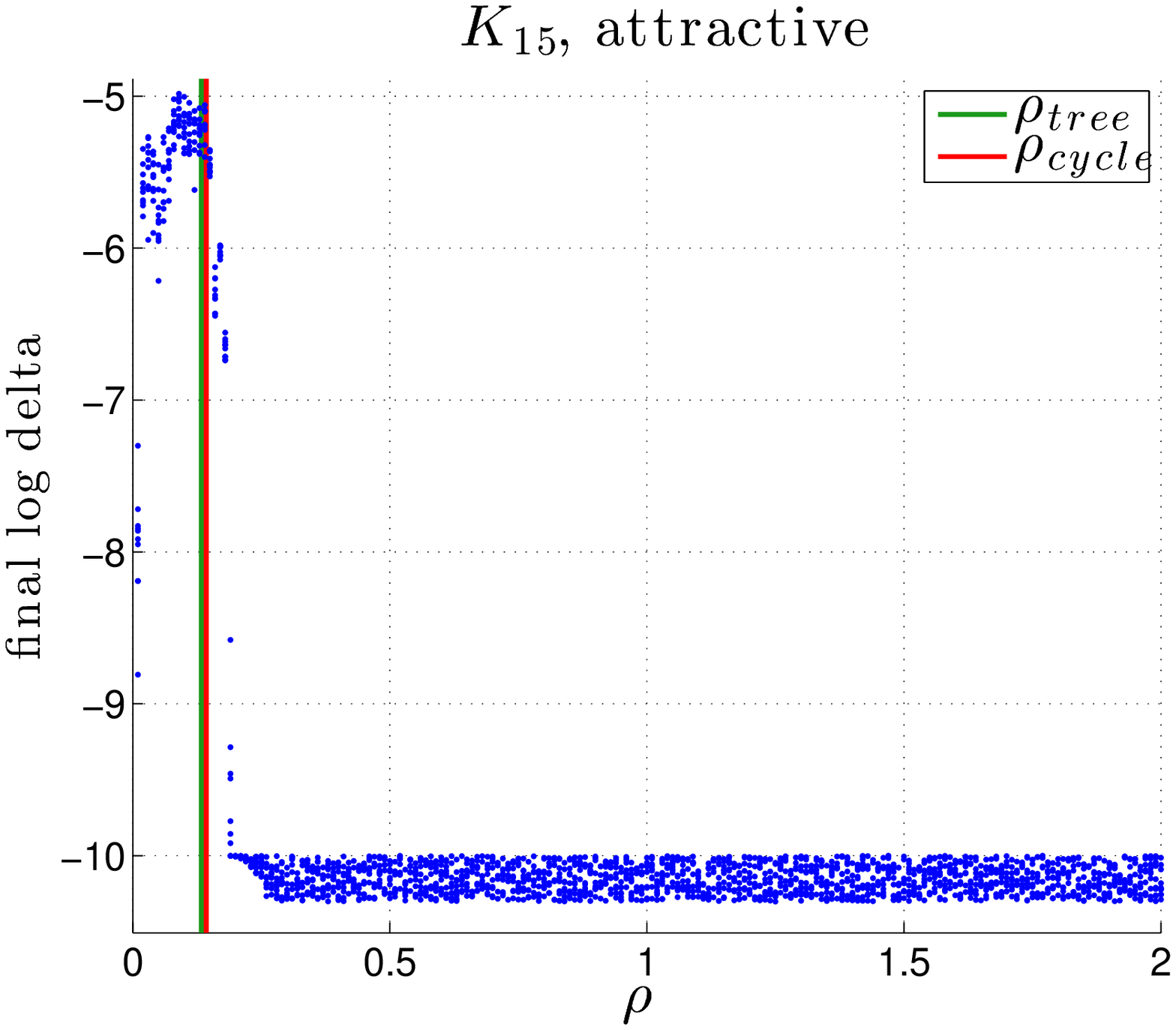}
		\label{fig:K15_att_delta_detail}
	}
	\hfill
	\subfigure[$T_{25}$, mixed, $\log(\Delta)$]{
		\includegraphics[width=.23\textwidth]{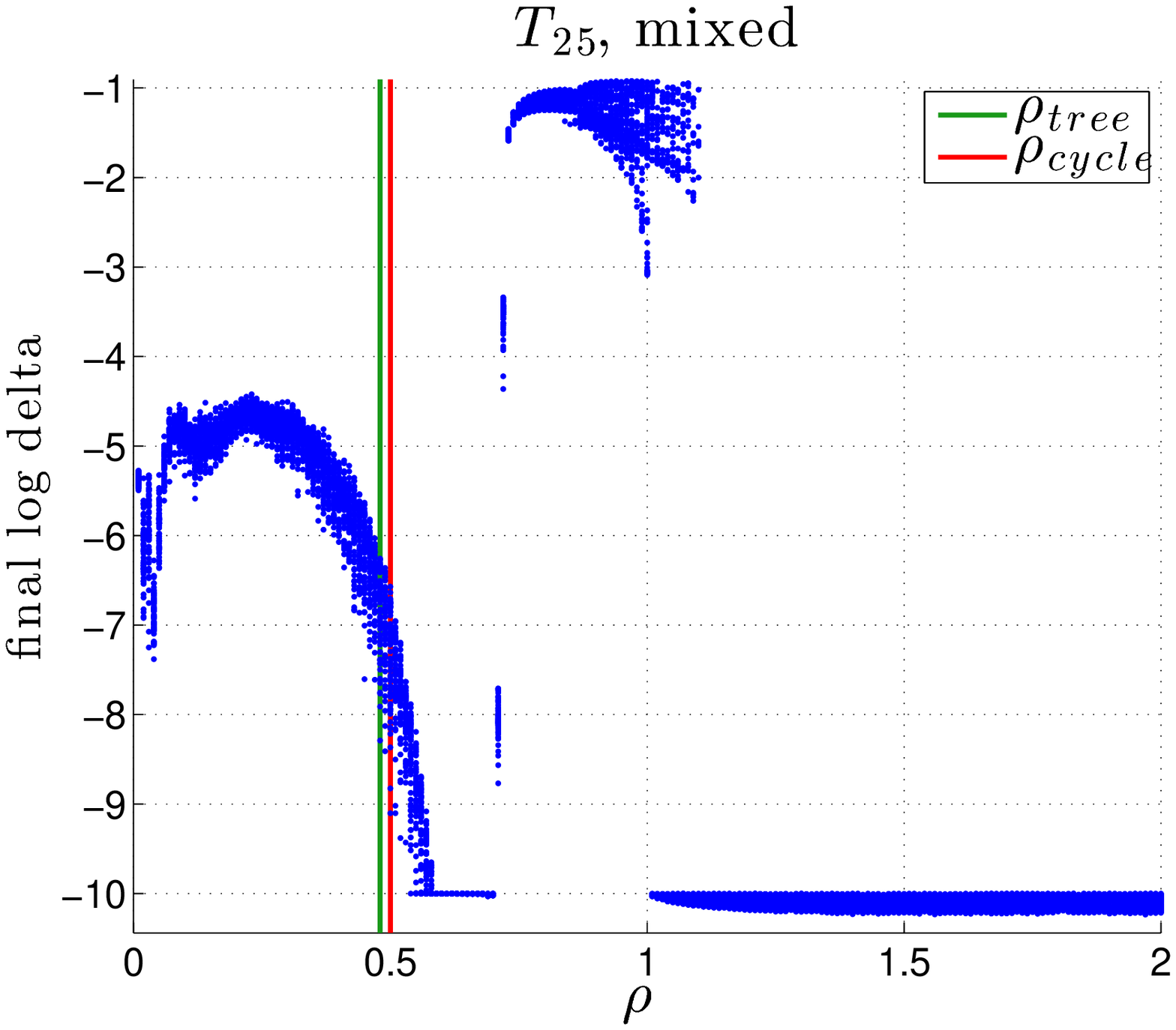}
		\label{fig:T25_mixed_delta_detail}
	}
	\hfill
	\subfigure[$T_{25}$, attractive, $\log(\Delta)$]{
		\includegraphics[width=.23\textwidth]{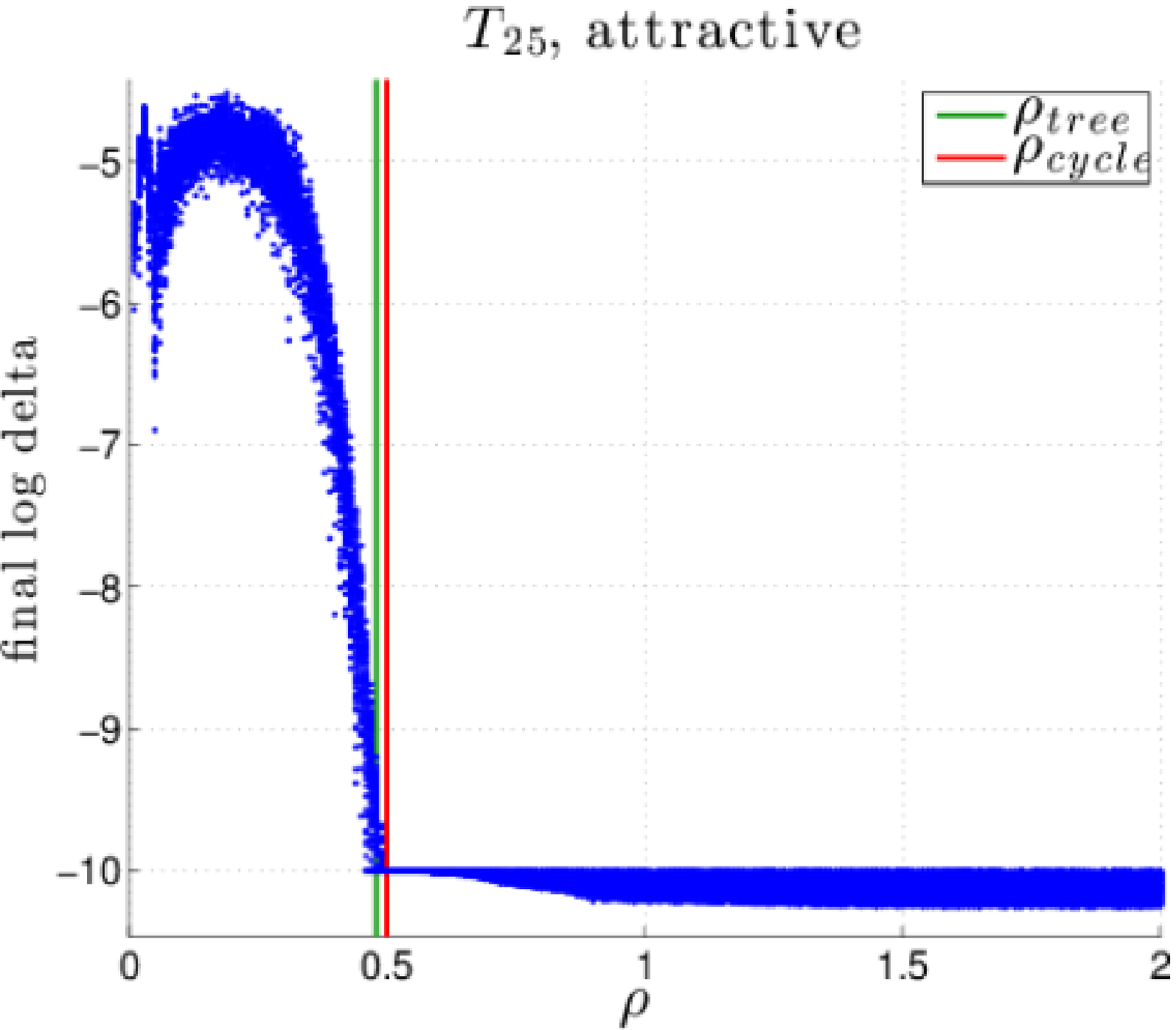}
		\label{fig:T25_att_delta_detail}
	}
	\caption{\label{fig:results_detail} Values of the reweighted Bethe approximation and the final $\log_{10}(\Delta)$ as a function of $\rho$.}
\end{figure}

\end{document}